\documentclass{article}

\usepackage{amsmath}
\usepackage{amssymb}
\usepackage{amsthm}
\usepackage{empheq}
\usepackage{mathtools}
\usepackage{indentfirst}
\usepackage[hidelinks]{hyperref}
\usepackage{paralist}
\usepackage{geometry}
\usepackage{setspace,lipsum}
\usepackage[title]{appendix}
\usepackage{color}
\usepackage{multimedia}
\usepackage{multirow}
\usepackage{bbm}
\usepackage{graphics}
\usepackage{graphicx}
\usepackage{booktabs}
\usepackage{romannum}
\usepackage{grffile}
\usepackage{float}
\usepackage{url}
\usepackage{mathabx}
\usepackage{supertabular}
\usepackage{rotating}
\usepackage{pdflscape}
\usepackage{verbatim}
\usepackage{listings}
\usepackage{xcolor}
\usepackage{multicol}
\usepackage[shortlabels]{enumitem}
\usepackage{caption,subcaption}
\usepackage{epstopdf}
\usepackage{algorithm}
\usepackage[round]{natbib}
\usepackage{etoolbox}
\usepackage{makecell}

\graphicspath{{figure/}}
\newcommand{\dd}{\mathrm{d}}

\newcommand{\E}{\mathbb{E}}

\newtheorem{theorem}{Theorem}

\usepackage{tikz}
\usepackage{multirow}
\usepackage{algpseudocode}

\algblock{Input}{EndInput}
\algnotext{EndInput}
\algblock{Output}{EndOutput}
\algnotext{EndOutput}

\newcommand{\s}{\pi^{(\lambda)}}
\newcommand{\sh}{\hat{\pi}^{(\lambda)}}
\newcommand{\tpi}{\theta^{\pi^{(\lambda)}}}
\newcommand{\thpi}{\theta^{\hat{\pi}^{(\lambda)}}}
\newcommand{\ts}{\theta^{\hat{\pi}^{\vartheta_a}}}

\title{ART for Diffusion Sampling: \\Continuous-Time Control and Actor--Critic Learning}

\author{
Yilie Huang\thanks{Department of Applied Mathematics, The Hong Kong Polytechnic University, Hong Kong. Email: \texttt{yilie.huang@polyu.edu.hk}.}
\and
Wenpin Tang\thanks{Department of Industrial Engineering and Operations Research, Columbia University, New York, NY 10027, USA. Email: \texttt{wt2319@columbia.edu}.}
\and
Xun Yu Zhou\thanks{Department of Industrial Engineering and Operations Research and Data Science Institute, Columbia University, New York, NY 10027, USA. Email: \texttt{xz2574@columbia.edu}.}
}

\date{\today}

\begin{document}
\pagenumbering{arabic}

\maketitle

\begin{abstract}
We study timestep allocation for score-based diffusion sampling, where a learned reverse-time dynamics is discretized on a finite grid. Uniform and hand-crafted schedules are standard choices, but they rely on {\it ad hoc}, fixed prescriptions and can therefore be suboptimal. To address this limitation, we propose Adaptive Reparameterized Time (ART), a continuous-time control formulation that learns a time change by treating the speed of the sampling clock as the control, so that a uniform grid on the learned clock induces adaptive timesteps in the original diffusion time. Based on a leading-order Euler error surrogate, ART provides a principled objective for allocating timesteps along the sampling trajectory. To solve this potentially high-dimensional deterministic control problem, we introduce ART-RL, an auxiliary randomized formulation with Gaussian policies that turns schedule learning into a continuous-time reinforcement learning problem. We prove that ART-RL  is equivalent to ART at optimality, in the sense that the mean of the former's optimal Gaussian policy is optimal for the latter. We further establish policy evaluation and policy improvement characterizations and derive trajectory-based moment identities that yield implementable actor--critic updates for solving ART-RL. We conduct  experiments ranging from controlled low-dimensional settings to image generation, and show that ART-learned schedules, when plugged into existing diffusion samplers by changing only the timestep grid, consistently improve sample quality over strong baseline schedules at matched budgets. The learned schedules also exhibit broad generalizability with superior performances, transferring without retraining across sampling budgets, datasets, solvers, pipelines, and representation spaces.

\bigskip

\textit{Key words}: generative AI, diffusion model, sampling, adaptive reparameterized time, 
optimal control, reinforcement learning, distillation, transfer learning

\end{abstract}

\section{Introduction}

Diffusion models \citep{Ho20, Song19, song2020score} generate samples by transforming noise into data, thereby producing draws from a target distribution learned from examples.
They now underpin a broad range of modern generative systems, including text-to-image models such as DALL·E 2 \citep{Ramesh22} and Stable Diffusion \citep{Rombach2022}, text-to-video generators such as Sora \citep{Sora2024}, Make-A-Video \citep{Singer2022} and Veo \citep{Veo2024} and, more recently, diffusion-based large language models such as Mercury \citep{KK25} and LLaDA \citep{Nie25}.

A diffusion pipeline typically separates {\it training} from {\it sampling}: the score or denoising model is learned during pretraining, and inference generates samples by running a reverse-time dynamics with a numerical discretization.
This paper focuses on the sampling stage, where one must choose a finite set of timesteps to discretize the learned reverse-time process.
Because each step requires evaluating the learned model, the choice of time grids directly dictates how a fixed computational budget is spent and can substantially affect both efficiency and sample quality.
Most existing approaches adopt uniform grids or hand-crafted schedules \citep{DDIM, song2020score, karras2022elucidating, Chen2023,lu2022dpm}, but these choices are rarely derived from a {\it principled} optimization framework.
Our goal is to provide a control-theoretic framework and approach that treat timestep selection as a systematic design problem for diffusion sampling.

The main contributions of this paper are summarized below:
\begin{itemize}
\item
{\em Methodology}:
We formulate timestep allocation for diffusion sampling as a continuous-time optimal control  problem, termed {\em Adaptive Reparameterized Time} (ART).
ART introduces a time change called the sampling clock and models the local progression in diffusion time as a controllable rate, which reallocates function evaluations along the reverse sampling trajectory while respecting a fixed overall time budget.
To solve the resulting control problem, which is inherently in high-dimensional state spaces in most applications including image generation, we develop {\em ART-RL}, a continuous-time reinforcement learning (CTRL) approach  that learns this rate via randomized, {\it Gaussian} policies and actor--critic iterations, leveraging recent theoretical advances in CTRL \citep{wang2020reinforcement, jia2021policy, jia2021policypg}.

\item
{\em Theory}:
We establish a rigorous link between ART and its randomized counterpart ART-RL.
First, we show that the auxiliary randomized formulation is not merely a relaxation: it aligns with the original deterministic ART objective in that the mean of the optimal ART-RL Gaussian policy solves the ART control problem.
Second, we develop continuous-time actor--critic theory specialized to time reparameterization, including characterizations of policy evaluation and policy improvement that yield explicit, implementable update rules.
These results lead to moment conditions for both the critic and the actor and provide improvement guarantees that underpin the resulting learning algorithm for the optimal time schedule.

\item
{\em Experiments}:
We evaluate ART across low- and high-dimensional settings, multiple numerical solvers, sampling pipelines, representation spaces, and a broad range of sampling budgets.
In controlled experiments with an analytical score model, in MNIST with a deliberately small score network, and in the standard EDM pipeline for CIFAR--10, ART consistently improves over Uniform, DPM, and EDM schedules at matched evaluation budgets, including the largest budgets where the hand-designed EDM schedule is the strongest.
See a quick overview of these empirical gains in Figure~\ref{fig:quant-overview}.
All comparisons keep the trained score model, network backbone, solver, and sampling protocol fixed; so the gains are purely from our choice of  timestep allocation.
This makes ART a principled  schedule-learning method rather than an {\it ad hoc} one  tied to a particular architecture, sampler, or diffusion pipeline.

\item
{\em Transfer/Generalization}:
Our experiments show that the schedule trained on CIFAR--10 under a given number of time step budget transfers directly, without retraining, across timestep counts, datasets, sampling pipelines, and representation spaces.
It outperforms the benchmarks not only on AFHQv2, FFHQ, and ImageNet--64 under the pixel-space EDM pipeline, but also on ImageNet--512 under EDM2 which simultaneously involves a modern backbone and sampling pipeline, a latent-space representation, and high-resolution image generation.
These results demonstrate that ART-RL learns a reusable timestep schedule rather than a dataset-specific tuning artifact; its one-time training cost can therefore be amortized beyond the particular dataset, solver, and pipeline on which it is learned.
\end{itemize}

To our best knowledge, this is the first work that develops a  control theory based framework for learning timestep schedules in generative diffusion sampling, providing a theoretically grounded alternative to the existing fixed heuristic grids.
The proposed ART-RL method is purely data-driven and learns a reusable schedule that improves both direct sampling performance and transfer performance.

\medskip
\noindent
{\bf Relevant literature}:
Diffusion models were first developed in discrete time, including DDPM \citep{Ho20} and DDIM \citep{DDIM}.
The continuous-time viewpoint of \citet{song2020score} recasts diffusion modeling through a stochastic differential equation (SDE) formulation, unifying and extending earlier discrete constructions.
On the inference side, many samplers can be interpreted as numerical methods for the learned reverse-time dynamics, including the predictor-corrector scheme of \citet{song2020score}, exponential-integrator style approaches \citep{ZC23}, and higher-order solvers \citep{ZSC23, WCW24}.
In addition, convergence analyses for diffusion inference under uniform or hand-crafted discretizations are studied in \citet{LLT22, Chen2023, ChenChe23, LW23, Ben24, LYl24, HHL25}.

Continuous-time reinforcement learning (CTRL) was introduced and formulated by \citet{wang2020reinforcement} as entropy-regularized stochastic control in continuous time and spaces, where exploration is represented by {\it relaxed}  (randomized) controls, formalizing the trial-and-error mechanism central to reinforcement learning.
Following \citet{wang2020reinforcement}, a sequence of works have built a model-free theory for CTRL through martingale-based analyses \citep{jia2021policy, jia2021policypg, JZ22, TZq24}, complemented by explicit performance guarantees \citep{huang2024mean,huang2025sublinear,huang2025data}.  A related study  also investigates policy optimization in the continuous-time setting \citep{ZTY23}. CTRL theory has been applied to financial portfolio selection \citep{huang2024mean,dai2023data} and fine-tuning generative AI diffusion models \citep{GZZ24, ZC24, ZC25}. In particular, \cite{dai2023data} exploit the special structure of the Merton problem with power utility and introduce a family of Gaussian policies without considering entropy regularization. They then prove the mean of the optimal Gaussian policy solves the original problem. This idea indeed inspires the formulation of ART-RL in this paper, even though the application domain and problem setting  are very  different here.

\medskip
\noindent
{\bf Organization of the paper}:
Section~\ref{sc2} reviews score-based diffusion and probability flow ODE sampling.
Section~\ref{sec:art} introduces ART as a time-reparameterization control formulation.
Section~\ref{sc3} presents ART-RL as a randomized auxiliary problem, with a provable connection to ART.
Section~\ref{sec_algorithm} develops the theory and an actor--critic algorithm for solving ART-RL, including policy evaluation and policy improvement.
Section~\ref{sec:experiments} reports the experimental results.
Section~\ref{sc6} concludes. Additional numerical results are placed in the appendix.

\section{Revisiting Continuous-Time Score-Based Diffusion Models}
\label{sc2}

We briefly revisit continuous-time score-based diffusion models for generative AI (GenAI) along with key notations; see \citet{TZSS25} for a detailed exposition.
A diffusion model consists of a forward diffusion process that progressively corrupts data with an unknown target distribution over a physical time interval $\tau \in [0,T]$, driving the distribution toward a simple reference law (e.g. Gaussian), and a backward generative process that transports samples from this reference back toward the original target  distribution (Figure~\ref{fig:diffusion-schematic}).

\begin{figure}[htbp]
    \centering
    \includegraphics[width=0.95\columnwidth]{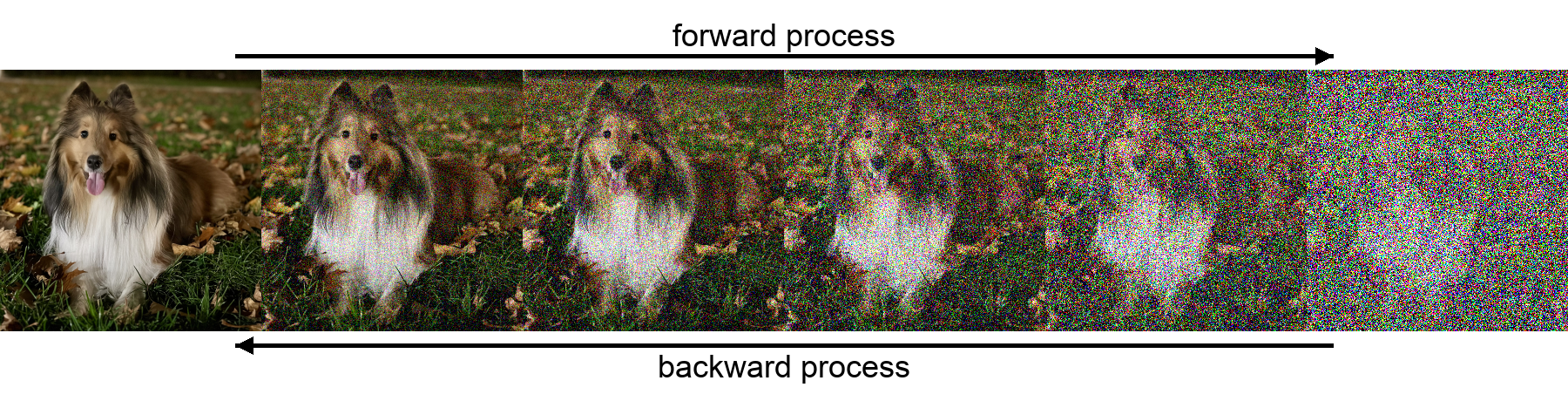}
    \vspace{-2mm}
    \caption{Illustration of the forward noising process and the corresponding backward generative process in a score-based diffusion model.}
    \vspace{-4mm}
    \label{fig:diffusion-schematic}
\end{figure}

\paragraph{Forward diffusion.}
The forward dynamics are given by the It\^o SDE
\begin{equation}
    \mathrm{d}\,\bar{x}(\tau)
    = -\,f\bigl(\tau \bigr) \bar{x}(\tau) \,\mathrm{d}\tau
      + g(\tau)\,\mathrm{d}w(\tau),
    \qquad \tau\in[0,T],\quad \bar{x}(0)\sim p_0 \in \mathcal{P}(\mathbb{R}^d),
    \label{eq:forward-sde}
\end{equation}
where $w=\{w(\tau):\tau\in[0,T]\}$ is a standard Wiener process (Brownian motion) in $\mathbb{R}^d$, $f:[0,T]\times\mathbb{R}^d\to\mathbb{R}^d$ and $g:[0,T]\to\mathbb{R}_+$ are measurable coefficients, $\mathcal{P}(\mathbb{R}^d)$ denotes the set of Borel probability measures on $\mathbb{R}^d$, and $p_0$ is the unknown target distribution.  Here $d$ is usually a very large number in a typical task such as image generation. Let $p_\tau$ be the law of $\bar{x}(\tau)$ and write its score function as $S(\tau,x)=\nabla_x\log p_\tau(x)$. Under standard well-posedness assumptions for common choices of $(f,g)$, the SDE \eqref{eq:forward-sde} maps $p_0$ along the family $\{p_\tau\}_{\tau\in[0,T]}$ toward a tractable reference distribution at time $T$.

\paragraph{Backward sampling}
For sampling, one can work with the reverse-time diffusion or, equivalently, with a deterministic probability flow ordinary differential equation (ODE) that shares the same marginals as the reverse-time SDE.
As shown by \citet[Theorem 5.1]{TZSS25}, the ODE and the reverse SDE yield the same family $\{p_\tau\}$. Denoting the backward state by $\tilde{x}(\tau) \coloneqq \bar{x}(T-\tau)$ with initialization $\tilde{x}(0)\sim p_T$ and using a trained score model $\hat{S}(\tau,x)$ in place of the unknown $S(\tau,x)$, the implementable backward probability flow ODE is
\begin{equation}
    \frac{\mathrm{d}\tilde{x}(\tau)}{\mathrm{d}\tau}
    = f\bigl(T-\tau\bigr) \tilde{x}(\tau)
      + \frac{1}{2}\,g\bigl(T-\tau\bigr)^2\,
        \hat{S}\bigl(T-\tau,\tilde{x}(\tau)\bigr),
    \qquad \tau\in[0,T],\quad \tilde{x}(0)\sim p_T.
    \label{eq:backward-pf}
\end{equation}

\paragraph{Euler discretization.}
To generate samples, we numerically integrate \eqref{eq:backward-pf} on a grid $0=\tau_0<\tau_1<\cdots<\tau_K=T$ with step sizes $h_i=\tau_{i+1}-\tau_i$, and denote $\tilde{x}_i \coloneqq \tilde{x}(\tau_i)$. Using the explicit Euler method, we obtain
\begin{equation}
    \tilde{x}_{i+1}
    = \tilde{x}_i
      + h_i\!\left[
          f\bigl(T-\tau_i\bigr) \tilde{x}_i
          + \tfrac{1}{2}\,g\bigl(T-\tau_i\bigr)^2\,
            \hat{S}\bigl(T-\tau_i,\tilde{x}_i\bigr)
        \right],
    \qquad i=0,\ldots,K-1,\quad \tilde{x}(0)\sim p_T.
    \label{eq:euler}
\end{equation}
A uniform grid $\tau_i=iT/K$ is simple to implement and widely used, but such a single and na\"ive global step size cannot capture the inevitable variation in numerical characteristics  along the trajectory.
When $K$ is small, sampling is computationally efficient but discretization error can be significant.
When $K$ is large, under a uniform grid the additional function evaluations are spread evenly rather than concentrated on where dominant errors can be most effectively reduced, in which case a nontrivial fraction of the extra computation is poorly utilized.
Intuitively, early stages of the reverse process, where samples are close to noise, may tolerate coarser resolution, whereas later stages typically benefit from finer steps.
These considerations motivate {\it adaptive}, data-driven time discretizations that redistribute steps under a fixed total time budget $T$, allocating computation to where it has the greatest impact on accuracy and sample quality.

\section{ART: Time Reparameterization as Control}\label{sec:art}

We now formulate adaptive timestep selection as a continuous-time control problem.
The central idea is to introduce a reparameterized sampling clock and to treat the progression of physical diffusion time as a controlled process.
By allowing the sampling trajectory to advance at variable speeds at different epochs, this formulation strategically redistributes computational effort under a fixed total time budget.
In this section, we first describe the reparameterized dynamics induced by the time change, and subsequently introduce an objective function that formalizes optimal timestep allocation.

\subsection{Time reparameterization and controlled dynamics}\label{subsec:art-dynamics}

Rather than evolving the reverse process on a fixed physical-time grid, we introduce a reparameterized clock that governs how diffusion time is traversed during sampling.
This auxiliary time variable decouples numerical resolution from the original diffusion horizon and enables adaptive redistribution of function evaluations.
Specifically, let $\psi:[0,T]\to\mathbb{R}$ be a continuous mapping from the reparameterized time $t$ to the physical diffusion time $\tau$, so that $\tau=\psi(t)$ with $\psi(0)=0$ and $\psi(T)=T$.
Figure~\ref{fig:art-time-change} depicts this correspondence between the two clocks and the induced nonuniform physical-time grid.

\begin{figure}[t]
    \centering
    \includegraphics[width=\linewidth]{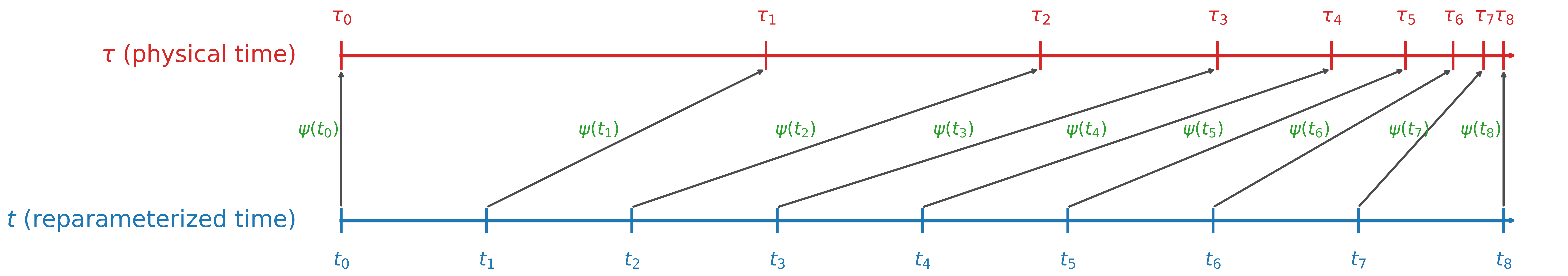}
    \vspace{-1mm}
    \caption{ART as a time change between two clocks. The physical diffusion time $\tau$ (top) and the reparameterized time $t$ (bottom) are linked by a continuous map $\tau=\psi(t)$ with $\psi(0)=0$ and $\psi(T)=T$. A uniform grid in $t$ induces a generally nonuniform grid in $\tau$, with local speed $\theta(t)=\dot{\psi}(t)$ controlling how function evaluations are redistributed along the trajectory.}
    \label{fig:art-time-change}
    \vspace{-2mm}
\end{figure}

On the reparameterized clock, the reverse-time state is represented as $x(t)\coloneqq \tilde{x}(\psi(t))$ with initialization $x(0)\sim p_T$.
The time-change is characterized by the control $\theta(t)\coloneqq \dot{\psi}(t)$, the instantaneous rate at which diffusion time advances relative to the new clock.
The boundary conditions on $\psi$ translates into an integral  constraint $\int_0^T \theta(t)\,\mathrm{d}t = \psi(T) - \psi(0) = T$, ensuring that the full diffusion horizon is traversed over the sampling interval.

Importantly, we do not restrict $\psi$ to be monotone a priori, consequently allowing $\theta(t)$ to take either sign.
This modeling choice is deliberate: it yields a formulation that is closed under optimization and avoids prematurely excluding admissible control trajectories that may arise when learning $\theta$ in a data-driven manner.
In particular, monotone time reparameterizations, corresponding to $\theta(t)\ge 0$ almost everywhere, are naturally recovered as a special case without being hard-coded into the dynamics.

If the reparameterized time is discretized uniformly as $0=t_0<t_1<\cdots<t_K=T$, the resulting physical-time grid is given by $\tau_i=\psi(t_i)$, with step sizes that generally vary across $i$.
From a numerical perspective, the trajectory $x(\cdot)$ evolves on the new clock, while the control $\theta(\cdot)$ determines where progression in physical diffusion time is accelerated or slowed down.
This mechanism enables the sampler to allocate resolution adaptively along the reverse trajectory, placing finer discretization where it is most beneficial.
We refer to this time-reparameterized sampling framework as \emph{Adaptive Reparameterized Time} (ART).

We now formulate the  controlled dynamics  under ART.
Since the reparameterized state is defined by $x(t)=\tilde{x}(\psi(t))$, the evolution of $x$ on the new clock follows directly from the chain rule. 
Taking $\psi$ as  another state variable and $\theta$ as the control variable, the state dynamics is
\begin{subequations}\label{eq_original_dynamics}
\begin{empheq}[left=\empheqlbrace]{align}
\dot{x}(t) &= \theta(t)\,F\bigl(x(t),\psi(t)\bigr),
\qquad x(0)\sim p_T, \label{eq:reparam-dyn-x}\\
\dot{\psi}(t) &= \theta(t),
\qquad \psi(0)=0,\ \psi(T)=T, \label{eq:reparam-dyn-psi}
\end{empheq}
\end{subequations}
where $F$ is the backward probability-flow vector field evaluated at the physical time $T-\psi$, namely
\begin{equation}\label{eq:F-def}
F(x,\psi)\coloneqq f\bigl(T-\psi\bigr)\,x
+\frac{1}{2}\,g\bigl(T-\psi\bigr)^{2}\,\hat{S}\bigl(T-\psi, x\bigr),
\end{equation}
with $f$ and $g$ being the coefficients of the forward diffusion, and $\hat{S}$  a learned score function.
The equation \eqref{eq:reparam-dyn-psi} simply records that $\psi$ accumulates at rate $\theta$; so $\theta$ can be interpreted as a local time-scaling factor under the new clock.
Moreover, we have the time budget constraint
\begin{equation}\label{eq:theta-budget}
\int_{0}^{T}\theta(t)\,\mathrm{d}t \;= T,
\end{equation}
which formalizes that the sampler must allocate a total amount $T$ of physical-time progression across the interval $t\in[0,T]$.
This constraint underlines an important feature of ART: any local deceleration of the dynamics, corresponding to smaller values of $\theta(t)$ and hence finer resolution in a given region, must be compensated by acceleration elsewhere, implying that improvements in numerical accuracy are achieved not by increasing the overall computational budget but by redistributing it along the trajectory in a strategic manner.

\subsection{Euler error surrogate and control objective}\label{subsec:art-objective}

To motivate the formulation of an appropriate objective for selecting the time-warping rate $\theta$ on the $t$-clock, we quantify how the Euler discretization behaves under the controlled dynamics \eqref{eq:reparam-dyn-x}.
The basic principle is that the leading-order one-step error is governed by the local stiffness indicator of the probability-flow dynamics, and hence can be used as a proxy for where additional resolution is most valuable.
We proceed in the same spirit as the Euler discretization in \eqref{eq:euler}, but now on a fixed,  generic step $[t_i,t_{i+1})$ of the $t$-clock with stepsize $h_i\coloneqq t_{i+1}-t_i$, where the implementation uses a {\it constant} control value per step, denoted by $\theta_i$.

Let $E_i$ denote the one-step Euler residual:
\[
E_i \coloneqq x(t_{i+1})-\Bigl(x(t_i)+h_i\,\theta_i\,F\bigl(x(t_i),\psi(t_i)\bigr)\Bigr).
\]
A second-order Taylor expansion of the solution map around $(x(t_i),\psi(t_i))$ yields the local error
\begin{equation}\label{eq:ltesummary}
E_i=\frac{h_i^2}{2}\,\theta_i^2\,Q\bigl(x(t_i),\psi(t_i)\bigr)+O(h_i^3),
\end{equation}
where the coefficient $Q$ collects terms arising from differentiating the probability-flow field along the trajectory.
More explicitly,
\begin{equation}\label{eq:Qexplicit}
\begin{aligned}
Q(x, \psi) =&\left[f(T-\psi) I_d+\frac{1}{2}(g(T-\psi))^2 \nabla_{x} \hat{S}(T-\psi, x)\right]\left[f(T-\psi) x+\frac{1}{2}(g(T-\psi))^2 \hat{S}(T-\psi, x)\right] \\
& -f^{\prime}(T-\psi) x-g(T-\psi) g^{\prime}(T-\psi) \hat{S}(T-\psi, x)-\frac{1}{2}(g(T-\psi))^2 \frac{\partial \hat{S}(T-\psi, x)}{\partial\tau}.
\end{aligned}
\end{equation}
Equation~\eqref{eq:ltesummary} shows that the second-order error (in the step size) of the Euler scheme is quadratic in $\theta_i$, modulated by the term $Q(x,\psi)$ evaluated along the trajectory.
This function $Q$ captures local geometric and model-induced stiffness of the probability-flow field, and the control $\theta$ determines how strongly this stiffness ought to be felt on a given step.
As a result, regions where $|Q(x,\psi)|$ is large are precisely where aggressive time progression would amplify discretization error and hence one needs to proceed slowly by taking a small $\theta$, and vice versa.

This motivates interpreting $|Q(x,\psi)|\,\theta(t)^2$ as a local cost density that guides advancing time progression based on $Q(x,\psi)$ adaptively. Together with the time budget constraint \eqref{eq:theta-budget}, we hence introduce the following objective functional
\begin{equation}\label{eq_original_objective}
\begin{aligned}
J^\theta&(s,y,\phi)
=\E\Big[\int_s^T\!\bigl(-|Q(x(t),\psi(t))|\theta^2(t)-\gamma\theta(t)\bigr)\,\dd t
+\gamma T \,\Bigm|\, x(s)=y,\ \psi(s)=\phi \Big],
\end{aligned}
\end{equation}
where $\gamma\in\mathbb{R}$ is the Lagrange multiplier for \eqref{eq:theta-budget}.

We define the optimal value function associated with this control problem as
\begin{equation}\label{eq_original_value_function}
V(s,y,\phi)\coloneqq\sup_{\theta=\theta(\cdot)}J^\theta(s,y,\phi).
\end{equation}

Consequently,  ART reframes timestep allocation as the problem of controlling the time-warping rate $\theta$ in the augmented dynamics \eqref{eq_original_dynamics}, with the objective \eqref{eq_original_objective} capturing how numerical error should be managed  along the reverse diffusion trajectory.

\section{Randomized Control and Reinforcement Learning Formulation}
\label{sc3}

The formulation in Section~\ref{sec:art} provides a principled way to pose timestep allocation as a control problem, but it does not immediately yield a practical solution method in high dimensions.
In general, the ART objective \eqref{eq_original_objective} admits no closed-form solutions, and the associated Hamilton--Jacobi--Bellman (HJB) equation with a large $d$ 
is numerically prohibitive due to the curse of dimensionality.
To remedy this, we introduce an auxiliary randomized control reformulation in which the time-warping rate is produced by a stochastic policy.
The role of randomization here is not for ``exploration" due to an unknown environment in the usual reinforcement learning (RL) sense; rather, it is a technical device that enables us to apply the recently developed continuous-time RL theory and algorithms \citep{wang2020reinforcement, jia2021policy, jia2021policypg}.
We refer this reformulation as \emph{Adaptive Reparameterized Time via Reinforcement Learning} (ART-RL), which we develop in the remainder of this section.

\subsection{ART-RL: An auxiliary problem with Gaussian policies}
\label{sec:art-rl}

We replace the deterministic control $\theta$ by a randomized feedback policy that assigns, at each $(t,x,\psi)$, a probability distribution generating time-warping rates.
Specifically, we take the following  Gaussian policy class whose variance depends on the local numerical sensitivity encoded by $Q$:\footnote{The reason for choosing the Gaussian policies will be revealed in the subsequent theoretical analysis.}
\begin{equation}\label{eq:gaussian_policy}
\s(\cdot \mid t,x,\psi)
=\mathcal{N}\!\left(\mu(t,x,\psi),\,\frac{\lambda}{\lvert Q(x,\psi)\rvert}\right),
\end{equation}
where $\mu$ is a (deterministic) measurable function and $\lambda\ge 0$ is a scalar parameter.
The particular form of the variance ties policy randomization  to the geometry of the surrogate error: since $|Q|$ governs the Euler residual through \eqref{eq:ltesummary}, the variance $\lambda/|Q|$ suppresses policy-induced randomness in stiff regions (large $|Q|$) while allowing comparatively more randomness otherwise, while $\lambda$ controls the overall level of this randomization without changing the mean.
For analysis, we assume $|Q(x,\psi)|>0$ almost surely on compact intervals of $(x,\psi)$.
In implementation  we replace \(\lvert Q\rvert\) by \(\lvert Q\rvert\vee\varepsilon:= \max(|Q|, \varepsilon)\) for a small \(\varepsilon>0\).

We now present the ``exploratory formulation" of ART, following \citet{wang2020reinforcement}.
Denote by $\Pi^{(\lambda)}$  the collection of policies of the form \eqref{eq:gaussian_policy}.
For a fixed policy $\s\in\Pi^{(\lambda)}$, the corresponding state
$(x^{\s}(t),\psi^{\s}(t))_{t\in[0,T]}$ 
satisfies the ``exploratory" dynamics
\begin{subequations}\label{eq_auxiliary_dynamics}
\begin{empheq}[left=\empheqlbrace]{align}
\frac{\dd x^{\s}(t)}{\dd t}
&= \int_{\mathbb{R}} \theta\, F\bigl(x^{\s}(t), \psi^{\s}(t)\bigr)\, \notag\\[-2pt]
&\qquad \s\!\left(\theta \mid t, x^{\s}(t), \psi^{\s}(t)\right)\,\dd\theta,
&\quad& x^{\s}(0)=x_0 \sim P_T,
\tag{\theparentequation a}\label{eq_auxiliary_dynamics_x}\\
\frac{\dd \psi^{\s}(t)}{\dd t}
&= \int_{\mathbb{R}} \theta\, \s\!\left(\theta \mid t, x^{\s}(t), \psi^{\s}(t)\right)\,\dd\theta,
&\quad& \psi^{\s}(0)=0,\ \ \psi^{\s}(T)=T.
\tag{\theparentequation b}\label{eq_auxiliary_dynamics_psi}
\end{empheq}
\end{subequations}
Moreover, the performance criterion is
\begin{equation}\label{eq_auxiliary_objective}
\begin{aligned}
J^{\s}(s,y,\phi)
= \E\Bigg[\, &\int_{s}^{T}\!\int_{\mathbb{R}}
\Bigl(-\,\lvert Q\bigl(t,x^{\s}(t),\psi^{\s}(t)\bigr)\rvert\,\theta^{2}
\;-\; \gamma\,\theta\Bigr)\,
\s(\theta\mid t,x^{\s}(t),\psi^{\s}(t))\,\dd\theta\,\dd t \\
&\qquad\qquad +\; (\gamma + \lambda) \,T  \ \Bigm| \ x^{\s}(s)=y,\ \psi^{\s}(s)=\phi \Bigg].
\end{aligned}
\end{equation}
Here, the additional term $\lambda T$ in \eqref{eq_auxiliary_objective} is to compensate a constant bias induced by Gaussian randomization, so that the resulting criterion is comparable to the deterministic one under the same mean control.
To see this, fix any mean function $\mu(t,x,\psi)$ and consider the Gaussian policy \eqref{eq:gaussian_policy}.
Taking the policy expectation of the deterministic running cost yields the identity
\(
\int_{\mathbb R}\!\bigl(-\,|Q|\,\theta^2-\gamma\,\theta\bigr)\,\s(\theta\mid t,x,\psi)\,d\theta
= -\,|Q|\,\mu^2-\gamma\,\mu-\lambda.
\)
The associated optimal value function is
\begin{equation}\label{eq_auxiliary_value_function}
V^{(\lambda)}(s,y,\phi) \;=\; \max_{\s\in\Pi^{(\lambda)}} J^{\s}(s,y,\phi),
\qquad (s,y,\phi)\in [0,T]\times\mathbb{R}^{d}\times\mathbb{R}.
\end{equation}

\subsection{Connecting the original and randomized formulations}\label{subsec:art-vs-art-rl}

We now establish that the solution to the original ART control problem \eqref{eq_original_value_function} can be recovered from that  to the randomized one \eqref{eq_auxiliary_value_function}. Indeed, the value function $V$ of \eqref{eq_original_value_function} satisfies the HJB equation
\begin{equation}\label{eq_original_hjb}
V_t + \sup_{\theta}\Bigl\{ \bigl(V_x^\top F(x,\psi)+V_{\psi}-\gamma\bigr)\theta - |Q(x,\psi)|\,\theta^2 \Bigr\}=0,
\end{equation}
together with the terminal condition $V(T,x,\psi)=\gamma T$.
Meanwhile,
the value function $V^{(\lambda)}$ of \eqref{eq_auxiliary_value_function} satisfies
\begin{equation}\label{eq_auxiliary_hjb}
V^{(\lambda)}_t + \sup_{\mu}\Bigl\{ \bigl(V^{(\lambda)\top}_x F(x,\psi)+V^{(\lambda)}_{\psi}-\gamma\bigr)\mu - |Q(x,\psi)|\Bigl(\mu^2+\frac{\lambda}{|Q(x,\psi)|}\Bigr) \Bigr\}=0,
\end{equation}
with terminal condition $V^{(\lambda)}(T,x,\psi)=(\gamma+\lambda)T$.

The following theorem discloses a precise relationship between them.

\begin{theorem}
\label{thm_relationship}
If \(V\) is a classical solution to the HJB equation \eqref{eq_original_hjb}, then $V^{(\lambda)}$ is a classical solution to the HJB equation \eqref{eq_auxiliary_hjb} where
\begin{equation}
\label{eq_auxiliary_value_function_2}
V^{(\lambda)}(t,x,\psi) = V(t,x,\psi) + \lambda t
\end{equation}
is a classical solution to HJB equation \eqref{eq_auxiliary_hjb}. Moreover,
\begin{equation}
\label{eq_auxiliary_optimal_policy}
\pi^{(\lambda)*}(\cdot | t,x,\psi) = \mathcal{N}\left(\mu^*(t,x,\psi), \frac{\lambda}{|Q(x, \psi)|}\right) \quad \text{with } \mu^*(t,x,\psi) = \frac{V_x^\top F(x, \psi) + V_{\psi} -\gamma}{2|Q(x,\psi)|}
\end{equation}
is the optimal policy for the auxiliary problem \eqref{eq_auxiliary_value_function} subject to the dynamics \eqref{eq_auxiliary_dynamics}. Finally, \(\mu^*(t,x,\psi)\) is the optimal policy for the original problem \eqref{eq_original_value_function} subject to the dynamics \eqref{eq_original_dynamics}.
\end{theorem}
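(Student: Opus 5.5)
The plan is to prove the three claims in order: the HJB identity, optimality of the Gaussian policy, and optimality of the mean for ART. Throughout, write $A(t,x,\psi)\coloneqq V_x^\top F(x,\psi)+V_\psi-\gamma$.

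\emph{Step 1: the HJB identity.} I would substitute $V^{(\lambda)}=V+\lambda t$ into \eqref{eq_auxiliary_hjb}. Since the shift depends only on $t$, we have $V^{(\lambda)}_x=V_x$, $V^{(\lambda)}_\psi=V_\psi$ and $V^{(\lambda)}_t=V_t+\lambda$. The bracket in \eqref{eq_auxiliary_hjb} therefore equals $A\mu-|Q|\mu^2-\lambda$, and its supremum over $\mu$ equals the supremum in \eqref{eq_original_hjb} minus $\lambda$. The left side of \eqref{eq_auxiliary_hjb} thus reduces to $V_t+\lambda+\sup_\theta\{A\theta-|Q|\theta^2\}-\lambda=0$ by \eqref{eq_original_hjb}. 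The terminal condition also matches: $V^{(\lambda)}(T,\cdot)=\gamma T+\lambda T=(\gamma+\lambda)T$. Classical regularity is inherited from $V$. Because $|Q|>0$, the objective is strictly concave in $\mu$, so the supremum is attained uniquely at $\mu^*=A/(2|Q|)$, which is the mean in \eqref{eq_auxiliary_optimal_policy}.

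\emph{Step 2: a reduction identity.} Rather than run a separate verification argument for the randomized problem, I would first prove a reduction identity. For a Gaussian policy $\s\in\Pi^{(\lambda)}$ with mean $\mu$, the exploratory dynamics \eqref{eq_auxiliary_dynamics} have drifts $\int\theta F\,\s\,\dd\theta=\mu F$ and $\int\theta\,\s\,\dd\theta=\mu$. These are exactly the dynamics \eqref{eq_original_dynamics} driven by the feedback control $\theta(t)=\mu(t,x(t),\psi(t))$. By the Gaussian moment computation stated after \eqref{eq_auxiliary_objective}, the averaged running reward is $-|Q|\mu^2-\gamma\mu-\lambda$. Integrating over $[s,T]$ and adding $(\gamma+\lambda)T$ then gives
\[
J^{\s}(s,y,\phi)=J^{\mu}(s,y,\phi)+\lambda s .
\]
Hence maximizing over $\Pi^{(\lambda)}$ is the same as maximizing $J^\mu$ over feedback means $\mu$.

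\emph{Step 3: verification for ART.} I would prove a verification result for the original problem. For any admissible $\theta(\cdot)$, apply the chain rule to $V(t,x(t),\psi(t))$ along \eqref{eq_original_dynamics}; this is pathwise, since the randomness enters only through $x_0$. Using \eqref{eq_original_hjb},
\[
\frac{\dd}{\dd t}V=V_t+A\theta+\gamma\theta\le |Q|\theta^2+\gamma\theta ,
\]
with equality if and only if $\theta=\mu^*$. Integrating from $s$ to $T$, using $V(T,\cdot)=\gamma T$ and taking expectations gives $J^\theta(s,y,\phi)\le V(s,y,\phi)$, with equality for the closed-loop control $\mu^*$. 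So $\mu^*$ is optimal for \eqref{eq_original_value_function}. Combining this with Step 2 gives $J^{\s}\le V+\lambda s=V^{(\lambda)}$, with equality for $\pi^{(\lambda)*}$. This proves the optimality of \eqref{eq_auxiliary_optimal_policy} and also identifies \eqref{eq_auxiliary_value_function_2} as the value function, not merely a solution of \eqref{eq_auxiliary_hjb}.

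\emph{Main obstacle.} I expect the difficulty to lie in the admissibility and regularity conditions hidden in ``classical solution''. The closed-loop ODE $\dot x=\mu^*F$, $\dot\psi=\mu^*$ must be well posed on $[0,T]$, which needs local Lipschitz continuity and no blow-up of $A/|Q|$; since $|Q|$ may approach $0$, I would rely on the standing assumption $|Q|>0$ on compacts or on the truncation $|Q|\vee\varepsilon$. The expectations must also be finite so that the pathwise inequality can be integrated. I would state these as standing assumptions on the admissible class. I would also remark that the constraint $\psi(T)=T$ is handled only through the multiplier $\gamma$: the theorem asserts optimality of the Lagrangian problem for the given $\gamma$, and $\gamma$ is to be tuned so that \eqref{eq:theta-budget} holds.
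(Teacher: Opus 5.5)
Your proof is correct, but it is organized differently from the paper's. The paper checks the HJB identity as you do. It then runs the verification argument on the randomized problem itself: it applies the chain rule (``It\^o's lemma'') to $V^{(\lambda)}(t,x^{\pi^{(\lambda)}}(t),\psi^{\pi^{(\lambda)}}(t))$ along the exploratory dynamics and uses \eqref{eq_auxiliary_hjb} to obtain $V^{(\lambda)}\ge J^{\pi^{(\lambda)}}$, with equality at $\pi^{(\lambda)*}$. The ART statement then follows from the remark that the same argument works at $\lambda=0$, where the Gaussian degenerates to a point mass, and that $\mu^*$ does not depend on $\lambda$.

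You instead isolate the structural identity $J^{\pi^{(\lambda)}}=J^{\mu}+\lambda s$. This says that ART-RL over $\Pi^{(\lambda)}$ is exactly ART over feedback means, shifted by $\lambda s$. You then do a single verification, on the deterministic problem.

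Your route makes the equivalence transparent. The auxiliary HJB in your Step~1 becomes a consistency check rather than an input needed for optimality. Your verification also covers arbitrary open-loop $\theta(\cdot)$ directly, rather than through the degenerate $\lambda=0$ ``Gaussian'' case.

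The paper's route follows the standard verification template, which the later policy-evaluation and improvement result (Theorem~\ref{thm_policy_improvement}) builds on. It uses the same Gaussian moment identity, but only implicitly, when it writes the averaged running reward $-|Q|\mu^2-\gamma\mu-\lambda$ inside the It\^o step.

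Your caveats about well-posedness of the closed-loop ODE, and about $\gamma$ being a fixed multiplier for the budget constraint, apply equally to the paper's argument, which leaves them implicit.
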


\begin{proof}

First of all, 
the ``sup" in the two equations \eqref{eq_original_hjb} and \eqref{eq_auxiliary_hjb}  are respectively achieved at
\[
\theta^*(t,x,\psi)= \frac{V^{\top}_x F(x, \psi) + V_{\psi} -\gamma}{2|Q(x,\psi)|},\;\;\mu^*(t,x,\psi) = \frac{V^{(\lambda)\top}_x F(x, \psi) + V^{(\lambda)}_{\psi} -\gamma}{2|Q(x,\psi)|}.
\]
So if $V$ solves \eqref{eq_original_hjb} and $V^{(\lambda)}$ is chosen to satisfy \eqref{eq_auxiliary_value_function_2}, then the above two maximizers are identical. Moreover,
it is straightforward to check that \(V^{(\lambda)}\) solves  \eqref{eq_auxiliary_hjb}.

Next, we show \(V^{(\lambda)}\) and \(\s\) are respectively the optimal value function and optimal policy for the randomized  problem \eqref{eq_auxiliary_value_function} via a standard verification approach. Fix a policy \(\s\). 
Applying It\^o's lemma to \(V^{(\lambda)}(t, x^{\s}(t), \psi^{\s}(t))\), we have
\[
\begin{aligned}
& V^{(\lambda)}(T, x^{\s}(T), \psi^{\s}(T)) - V^{(\lambda)}(s, x^{\s}(s), \psi^{\s}(s)) \\
&+ \int_s^T \biggl(-|Q(x^{\s}(t), \psi^{\s}(t))| \mu(t, x^{\s}(t), \psi^{\s}(t))^2 - \lambda - \gamma \mu(t, x^{\s}(t), \psi^{\s}(t)) \biggr) \dd t\\
=& \int_s^T \biggl( V^{(\lambda)}_t + (V^{(\lambda)\top}_x F(x^{\s}(t), \psi^{\s}(t)) + V^{(\lambda)}_{\psi} - \gamma) \mu(t, x^{\s}(t), \psi^{\s}(t)) \\
&- |Q(x^{\s}(t), \psi^{\s}(t))| \mu(t, x^{\s}(t), \psi^{\s}(t))^2 -\lambda \biggr) \dd t\\
\leq& 0,
\end{aligned}
\]
where the last inequality follows from the HJB equation \eqref{eq_auxiliary_hjb}.
Thus, we have
\begin{equation}
\label{eq_v_geq_j}
\begin{aligned}
&V^{(\lambda)}(s,y,\phi) \\
\geq& \E \biggl[ \int_s^T \biggl(-|Q(x^{\s}(t), \psi^{\s}(t))| \mu(t, x^{\s}(t), \psi^{\s}(t))^2 - \lambda - \gamma \mu(t, x^{\s}(t), \psi^{\s}(t)) \biggr) \dd t \\
&+ V^{(\lambda)}(T, x^{\s}(T), \psi^{\s}(T)) | x^{\s}(s)=y, \psi^{\s}(s)=\phi \biggr]\\
=& J^{\s}(s,y,\phi).
\end{aligned}
\end{equation}
When the policy \eqref{eq_auxiliary_optimal_policy} is taken, the above inequality becomes equality because  \eqref{eq_auxiliary_optimal_policy} achieves the supremum in the HJB equation \eqref{eq_auxiliary_hjb}. This establishes the optimality of the policy \eqref{eq_auxiliary_optimal_policy} along with $V^{(\lambda)}$ being the optimal value function.  On the other hand, noticing the previous analysis applies to the case when \(\lambda=0\) and \(\mu^*\) is independent of \(\lambda\), we arrive at the final conclusion of theorem.

\end{proof}


Theorem~\ref{thm_relationship} implies  that the ART solution can be recovered by solving the ART-RL problem \eqref{eq_auxiliary_value_function} with Gaussian policies. The latter can indeed be solved using an actor--critic scheme that is {\it not} directly applicable to the former. 
We will carry this out in the next section.

\section{ART-RL Actor--Critic: Theory and Algorithm}\label{sec_algorithm}
\label{sc4}

Building on Theorem~\ref{thm_relationship}, we now work within ART-RL to learn the ART optimizer. 
Our approach is premised upon the continuous-time actor--critic framework of \citet{jia2021policypg}, adapted to the ART-RL setting and in particular the Gaussian policies.
We start with two theorems on policy evaluation and policy improvement, followed by development of the resulting algorithm.

\subsection{Theoretical results: policy evaluation and improvement}\label{sec_theory}

In an actor--critic method, the critic estimates the value of a given policy, while the actor updates/improves the policy by moving in a favorable direction guided  by this value information.
Theorem~\ref{thm_policy_improvement} below formalizes this idea. For any Gaussian policy \(\pi^{(\lambda)}\), it first characterizes the associated value function, and  then constructs a new Gaussian policy \(\Tilde{\pi}^{(\lambda)}\), whose mean is obtained by a Hamiltonian-type maximization based on that value function, and shows that this updated policy improves the value for all states. 

\begin{theorem}
\label{thm_policy_improvement}
(i) The value function under a Gaussian policy \(\pi^{(\lambda)}(\cdot | t,x,\psi) = \mathcal{N}(\mu(t,x,\psi), \frac{\lambda}{|Q(x, \psi)|})\) is given by
\begin{equation}
\label{eq_thm2_j}
J^{\s}(t,x,\psi) = \Bar{v}(t,x,\psi) + \lambda t,
\end{equation}
where \(\Bar{v}\) satisfies the linear PDE
\begin{equation}
\label{eq_thm2_pde}
\Bar{v}_t + (\Bar{v}_x^\top F(x,\psi) +  \Bar{v}_{\psi} - \gamma) \mu(t,x,\psi) - |Q(x,\psi)| \mu(t,x,\psi)^2 = 0,
\end{equation}
with the terminal condition \(\Bar{v}(T,x,\psi) = \gamma T\).

(ii) Consider the policy defined as
\begin{equation}
\label{eq_thm2_policy}
\Tilde{\pi}^{(\lambda)}(\cdot | t,x,\psi) = \mathcal{N}\left(\Tilde{\mu}(t,x,\psi), \frac{\lambda}{|Q(x, \psi)|}\right), \quad \Tilde{\mu}(t,x,\psi) = \frac{\Bar{v}_x(t,x,\psi)^{\top} F(x,\psi) + \Bar{v}_{\psi}(t,x,\psi) - \gamma}{2|Q(x,\psi)|}.
\end{equation}
Then \(\Tilde{\pi}^{(\lambda)}\) improves the original policy \(\pi^{(\lambda)}\) in the sense that
\[
J^{\Tilde{\pi}^{(\lambda)}}(t,x,\psi) \geq J^{\pi^{(\lambda)}}(t,x,\psi) \quad \text{for all} \quad (t,x,\psi).
\]
\end{theorem}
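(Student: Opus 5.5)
Part (i) is a Feynman--Kac type representation, and part (ii) is a standard policy improvement comparison. I will reuse the verification argument from the proof of Theorem~\ref{thm_relationship}.

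\textbf{Part (i).} Under a Gaussian policy, the exploratory dynamics \eqref{eq_auxiliary_dynamics} are deterministic given the initial state. Because the integrands are linear in $\theta$, they reduce to $\dot x = \mu F(x,\psi)$ and $\dot\psi=\mu$, both evaluated along the trajectory. The identity displayed after \eqref{eq_auxiliary_objective} shows that the policy-averaged running reward equals $-|Q|\mu^2-\gamma\mu-\lambda$. Let $\bar v$ be a classical solution of the linear first-order PDE \eqref{eq_thm2_pde} with $\bar v(T,\cdot)=\gamma T$; I will assume it exists, as is implicitly done for $V$ in Theorem~\ref{thm_relationship}, for instance via the method of characteristics. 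Set $w(t,x,\psi)=\bar v(t,x,\psi)+\lambda t$. Then $w$ satisfies
\[
w_t+(w_x^\top F+w_\psi-\gamma)\mu-|Q|\mu^2-\lambda=0,\qquad w(T,\cdot)=(\gamma+\lambda)T.
\]
Apply the chain rule (It\^o's lemma degenerates here) to $w$ along the trajectory started at $(s,y,\phi)$ and integrate from $s$ to $T$. This gives
\[
w(T,\cdot)-w(s,y,\phi)=\int_s^T\bigl(|Q|\mu^2+\gamma\mu+\lambda\bigr)\,\dd t,
\]
so $w(s,y,\phi)=J^{\s}(s,y,\phi)$, which is \eqref{eq_thm2_j}. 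The computation is the one in \eqref{eq_v_geq_j}, with equality holding identically because $w$ solves the linear equation for this particular $\mu$ rather than an HJB.

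\textbf{Part (ii).} Let $w$ be as above. For each $(t,x,\psi)$, the map $m\mapsto (w_x^\top F+w_\psi-\gamma)m-|Q|m^2$ is strictly concave, since $|Q|>0$. It is maximized at $\tilde\mu$ in \eqref{eq_thm2_policy}; note that $w_x=\bar v_x$ and $w_\psi=\bar v_\psi$. Therefore
\[
w_t+(w_x^\top F+w_\psi-\gamma)\tilde\mu-|Q|\tilde\mu^2-\lambda\ \ge\ 0 .
\]
Run the dynamics under $\tilde\pi^{(\lambda)}$ from $(s,y,\phi)$, that is, $\dot x=\tilde\mu F$ and $\dot\psi=\tilde\mu$, and differentiate $w$ along this trajectory. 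Integrating the inequality yields
\[
(\gamma+\lambda)T-w(s,y,\phi)\ \ge\ -\int_s^T\bigl(-|Q|\tilde\mu^2-\gamma\tilde\mu-\lambda\bigr)\,\dd t,
\]
which rearranges to $J^{\tilde\pi^{(\lambda)}}(s,y,\phi)\ge w(s,y,\phi)=J^{\s}(s,y,\phi)$.

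\textbf{Main obstacle.} The delicate step is not the algebra but well-posedness. We need $\bar v$ to be $C^1$ so that the chain rule applies, and we need the closed-loop ODE under $\tilde\mu$ to have a solution on $[s,T]$; $\tilde\mu$ involves $\bar v_x$ and division by $|Q|$, so a local Lipschitz condition and no blow-up are required. I would handle this as the paper does: assume classical regularity, and use the standing assumption $|Q|>0$ (or the implemented truncation $|Q|\vee\varepsilon$) to keep $\tilde\mu$ locally bounded and Lipschitz. The terminal constraint $\psi(T)=T$ is enforced only through the multiplier $\gamma$, so it does not enter the comparison. The expectation over $x_0$ is then harmless.
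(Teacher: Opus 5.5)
Your proof is correct. Part (i) is essentially the paper's argument. The paper invokes Feynman--Kac for $J^{\pi^{(\lambda)}}$ and then uniqueness for the linear PDE. You instead verify $\bar v+\lambda t=J^{\pi^{(\lambda)}}$ directly by the chain rule along the deterministic characteristic, which builds uniqueness in and does not presuppose that $J^{\pi^{(\lambda)}}$ is smooth.

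Part (ii) takes a genuinely different route. The paper applies (i) to $\tilde\pi^{(\lambda)}$ to write $J^{\tilde\pi^{(\lambda)}}=\tilde v+\lambda t$, where $\tilde v$ solves the linear PDE with coefficient $\tilde\mu$. It then observes, by the same maximization you use, that $\bar v$ is a subsolution of that PDE, and concludes $\tilde v\ge\bar v$ from a PDE comparison principle. You never characterize $J^{\tilde\pi^{(\lambda)}}$ by a PDE. Instead you evaluate the \emph{old} value $w=J^{\pi^{(\lambda)}}$ along the closed-loop trajectory of the \emph{new} policy and integrate the subsolution inequality; this is the classical verification-style policy improvement argument.

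Your route needs less. It requires only $\bar v\in C^1$ and existence of the closed-loop trajectory under $\tilde\mu$. The paper's route additionally needs:
\begin{itemize}
\item $\tilde v$ to be a classical solution of a PDE whose coefficient $\tilde\mu$ is built from $\bar v_x$, and so is only as regular as $\bar v_x$;
\item a comparison principle on the unbounded domain $\mathbb{R}^d\times\mathbb{R}$, which ordinarily requires growth conditions.
\end{itemize}
In this noise-free first-order setting, that comparison principle is itself proved by integrating $\bar v-\tilde v$ along exactly your characteristics. So the paper's proof is shorter mainly because it delegates this step, and its hypotheses, to a cited result. In exchange, it keeps everything in the PDE form ($J=\bar v+\lambda t$ for every Gaussian policy) that the subsequent martingale characterization relies on.
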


\begin{proof}
(i) For a given policy \(\s\), applying the Feynman-Kac formula leads to the following PDE for the value function \(J^{\s}\):
\[
J_t^{\s} + (J_x^{\s \top} F(x,\psi) + J_{\psi}^{\s} - \gamma) \mu(t,x,\psi) - |Q(x,\psi)| \mu(t,x,\psi)^2 - \lambda = 0,
\]
with \(J^{\s}(T,x,\psi) = (\gamma + \lambda) T\).
Since the function \(J^{\s}\) defined in \eqref{eq_thm2_j} satisfies the above PDE,  the result follows from the uniqueness of the solution to linear PDE.

(ii) From Part (i), \(J^{\Tilde{\pi}^{(\lambda)}}\) can also be expressed as \(J^{\Tilde{\pi}^{(\lambda)}}(t,x,\psi)=\Tilde{v}(t,x,\psi) + \lambda t\), where \(\Tilde{v}\) satisfies
\begin{equation}
\label{eq_thm2_pde2}
\Tilde{v}_t + (\Tilde{v}_x^\top F(x,\psi) + \Tilde{v}_{\psi} - \gamma) \Tilde{\mu}(t,x,\psi) - |Q(x,\psi)| \Tilde{\mu}(t,x,\psi)^2 = 0,
\end{equation}
with \(\Tilde{v}(T,x,\psi) = \gamma T\).


Take the left hand side of \eqref{eq_thm2_pde} as a quadratic function in $\mu(t,x,\psi)$, which clearly achieves the maximum at $\Tilde{\mu}(t,x,\psi)$.
Hence
\begin{equation}
\label{eq_thm2_pde3}
\begin{aligned}
&\Bar{v}_t + (\Bar{v}_x^\top F(x,\psi) + \Bar{v}_{\psi} - \gamma) \Tilde{\mu}(t,x,\psi) - |Q(x,\psi)| \Tilde{\mu}(t,x,\psi)^2 \\
\geq &\Bar{v}_t + (\Bar{v}_x^\top F(x,\psi) +  \Bar{v}_{\psi} - \gamma) \mu(t,x,\psi) - |Q(x,\psi)| \mu(t,x,\psi)^2 = 0.
\end{aligned}
\end{equation}
It now follows from the comparison principle for PDEs applied to \eqref{eq_thm2_pde2} and \eqref{eq_thm2_pde3} that \(\Tilde{v}(t,x,\psi) \geq \Bar{v}(t,x,\psi)\). Equivalently,  \(J^{\Tilde{\pi}^{(\lambda)}}(t,x,\psi) \geq J^{\pi^{(\lambda)}}(t,x,\psi) \text{ for all } (t,x,\psi)\).
\end{proof}

Theorem~\ref{thm_policy_improvement} 
is a pure theoretical result that cannot be used directly for computation, because it involves solving PDEs that are intractable in high dimensions. However, it provides the foundation for the next theorem that in turn underpins an implementable, data-driven actor--critic scheme.

\begin{theorem}
\label{thm_martingale}
(i) Let \(\s\) be a Gaussian policy and \(\hat{V}\) be a continuous function with \(\hat{V}(T, x, \psi) = \gamma T + \lambda T\). Denote by \(\tpi\) a control sampled from \(\s\) and by \((x^{\tpi}, \psi^{\tpi})\)  the state process under  \(\tpi\) with the initial condition \(x^{\tpi}(s) = y\) and \(\psi^{\tpi}(s) = \phi\). If for any measurable function \(\xi\) and every \((s, y, \phi) \in ([0, T] \times \mathbb{R}^d \times \mathbb{R})\),
\[
\begin{aligned}
\mathbb{E}\biggl[\int_s^T &\xi(t, x^{\tpi}(t), \psi^{\tpi}(t)) \, \\
&\biggl( \mathrm{d} \hat{V}(t, x^{\tpi}(t), \psi^{\tpi}(t)) - (|Q(x^{\tpi}(t), \psi^{\tpi}(t))| \theta^{\pi^{(\lambda)}2}(t) + \gamma \tpi(t)) \dd t \biggr) \biggr] = 0
\end{aligned}
\]
holds, then \(\hat{V} \equiv J^{\s}\). 

(ii) Let  \(\hat{\pi}(\cdot | t, x, \psi) = \mathcal{N}( \hat{\mu}(t, x, \psi), \frac{\lambda}{|Q(x, \psi)|} )\) where \(\hat{\mu}\) is a continuous function.
Denote by \(\thpi\) a control sampled from \(\sh\) and by \((x^{\thpi}, \psi^{\thpi})\)  the state process under  \(\thpi\) with the initial condition \(x^{\thpi}(s) = y\) and \(\psi^{\thpi}(s) = \phi\).
If for any measurable function \(\eta\) and for every \((s, y, \phi) \in ([0, T] \times \mathbb{R}^d \times \mathbb{R})\),
\[
\begin{aligned}
\mathbb{E}\biggl[\int_s^T & \eta(t, x^{\thpi}(t), \psi^{\thpi}(t)) \biggl( \thpi(t) - \hat{\mu}(t, x^{\thpi}(t), \psi^{\thpi}(t)) \biggr) \, \\
&\biggl( \mathrm{d} J^{\s}(t, x^{\thpi}(t), \psi^{\thpi}(t))  -(|Q(x^{\thpi}(t), \psi^{\thpi}(t))| \theta^{\Tilde{\pi}^{(\lambda)}2}(t) + \gamma \thpi(t)) \dd t \biggr) \biggr] = 0
\end{aligned}
\]
holds, then \(\hat{\mu} \equiv \Tilde{\mu}\) as defined in Theorem \ref{thm_policy_improvement}-(ii).
\end{theorem}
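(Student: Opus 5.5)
The plan is to follow the martingale/moment characterization of \citet{jia2021policy, jia2021policypg}. In both parts, I would condition on the state at each time $t$, average out the Gaussian action, and thereby turn the integrated orthogonality condition into a pointwise identity. That identity should be exactly the linear PDE of Theorem~\ref{thm_policy_improvement}-(i) in part (i), and the first-order condition defining $\Tilde{\mu}$ in part (ii). I will assume enough regularity that $\hat V$ and $J^{\s}$ are $C^1$, so that the chain rule along trajectories applies; continuity alone does not suffice for this step. I also read the control appearing inside the integrands as the sampled control of the policy driving the process.

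\emph{Part (i).} Along $(x^{\tpi},\psi^{\tpi})$ with $\dot x=\tpi F$ and $\dot\psi=\tpi$, the chain rule gives
\[
\dd\hat V=\bigl(\hat V_t+(\hat V_x^\top F+\hat V_\psi)\tpi\bigr)\dd t .
\]
Given the state at time $t$, the action $\tpi(t)$ is $\mathcal N(\mu,\lambda/|Q|)$, and $\xi$ depends only on the state. Taking the conditional expectation over the action inside the integral, via the tower property, therefore replaces the integrand by $\xi\cdot\mathcal{L}\hat V$, where
\[
\mathcal{L}\hat V := \hat V_t+(\hat V_x^\top F+\hat V_\psi-\gamma)\mu-|Q|\mu^2-\lambda ,
\]
using $\E[\theta^2]=\mu^2+\lambda/|Q|$. (I will take $\hat V$ to be the critic for $\s$, so the drift is the one induced by the randomized dynamics \eqref{eq_auxiliary_dynamics}.) The hypothesis then says $\E\bigl[\int_s^T \xi\,\mathcal{L}\hat V\,\dd t\bigr]=0$ for every measurable $\xi$ and every starting point. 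Choosing $\xi=\operatorname{sgn}(\mathcal{L}\hat V)$ gives $\E\bigl[\int_s^T|\mathcal{L}\hat V|\,\dd t\bigr]=0$. Hence $\mathcal{L}\hat V=0$ along trajectories from every $(s,y,\phi)$, and by continuity, evaluating near $t=s$, it vanishes everywhere. Now set $\bar v:=\hat V-\lambda t$. The relation $\mathcal{L}\hat V=0$ is exactly \eqref{eq_thm2_pde}, and the terminal value is $\bar v(T,\cdot)=\gamma T$. Uniqueness for this linear PDE, as used in Theorem~\ref{thm_policy_improvement}-(i), gives $\hat V=\bar v+\lambda t=J^{\s}$.

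\emph{Part (ii).} Repeat the same conditioning, now with the weight $\eta\,(\theta-\hat\mu)$ and with $\theta\sim\mathcal N(\hat\mu,\sigma^2)$, $\sigma^2=\lambda/|Q|$. Write the bracket as
\[
H(\theta)=J^{\s}_t+b\,\theta-|Q|\theta^2, \qquad b:=J^{\s\top}_x F+J^{\s}_\psi-\gamma .
\]
Gaussian integration by parts (Stein's identity) gives
\[
\E[(\theta-\hat\mu)H(\theta)]=\sigma^2\,\E[H'(\theta)]=\sigma^2\bigl(b-2|Q|\hat\mu\bigr).
\]
The same choice-of-test-function argument as in part (i), with $\eta=\operatorname{sgn}(b-2|Q|\hat\mu)$, yields $b-2|Q|\hat\mu=0$ pointwise, provided $\lambda>0$; this is needed, since otherwise the weight $\theta-\hat\mu$ vanishes identically. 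Finally, Theorem~\ref{thm_policy_improvement}-(i) gives $J^{\s}=\bar v+\lambda t$, so $J^{\s}_x=\bar v_x$ and $J^{\s}_\psi=\bar v_\psi$. Therefore $\hat\mu=b/(2|Q|)$ coincides with $\Tilde{\mu}$ in \eqref{eq_thm2_policy}.

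The main obstacle is the passage from the integrated, expectation-level identity to a pointwise identity on the whole domain. This requires two things. First, the conditional-expectation step must be justified: the action at time $t$ is independent of the past given the current state, and enough integrability (finite second moments of $\theta$, growth bounds on $F$ and $Q$) is needed to use Fubini. Second, the resulting zero drift, established a.e.\ along trajectories, must be upgraded to all $(t,x,\psi)$. For the second point I would use the freedom in the starting point $(s,y,\phi)$ together with continuity of the drift, letting the horizon shrink to $s$. Everything else is the algebra of Gaussian moments.
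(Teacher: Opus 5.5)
Your part (ii) is the paper's argument. It applies the chain rule to $J^{\s}=\bar v+\lambda t$, uses the Gaussian moments $\E[(\theta-\hat\mu)\theta]=\lambda/|Q|$ and $\E[(\theta-\hat\mu)\theta^2]=2\hat\mu\lambda/|Q|$ (exactly your Stein identity), and then invokes the arbitrariness of $\eta$. Your remark that this needs $\lambda>0$ is something the paper leaves implicit.

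Part (i), however, takes a genuinely different route. The paper never passes through a PDE. It invokes the martingale-orthogonality result of \citet{jia2021policy} (their Proposition 4) to conclude that $\hat V(t,x^{\tpi}(t),\psi^{\tpi}(t))-\int_s^t(|Q|\theta^2+\gamma\theta)\,\dd u$ is a martingale. Taking expectations with $\hat V(T,\cdot)=(\gamma+\lambda)T$ then gives the probabilistic representation of $J^{\s}$ directly. You instead average out the action to get $\mathcal{L}\hat V=0$, upgrade this to a pointwise identity on the whole domain, and then appeal to uniqueness for the linear PDE \eqref{eq_thm2_pde}.

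Your version is self-contained and makes the link with Theorem~\ref{thm_policy_improvement}-(i) explicit. It pays for this with two extra ingredients: uniqueness for \eqref{eq_thm2_pde}, and continuity of $\mu$ for the pointwise upgrade. The latter goes beyond the statement, since the Gaussian policy in (i) is only assumed to have a measurable mean.

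Both ingredients can be dropped inside your own computation. Once $\mathcal{L}\hat V=0$ holds for a.e.\ $t$ along the trajectory started from $(s,y,\phi)$, integrate the chain rule from $s$ to $T$ and use the terminal condition. This gives $\hat V(s,y,\phi)=(\gamma+\lambda)T-\E\bigl[\int_s^T(|Q|\mu^2+\gamma\mu+\lambda)\,\dd t\bigr]$, which is $J^{\s}(s,y,\phi)$ by \eqref{eq_auxiliary_objective}. That is precisely the paper's martingale argument written out by hand.
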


\begin{proof}
(i) The equation presented in the statement is the martingale orthogonality condition for policy evaluation, developed in \cite{jia2021policy}. By following the same reasoning as in the proof of Proposition 4 therein, we arrive at the following expression:
\[
\hat{V}(s,y,\phi) = \E \biggl[\int_s^T \biggl(-\gamma \tpi(t) - |Q(x^{\tpi}(t), \psi^{\tpi}(t))| \theta^{\pi^{(\lambda)}2}(t) \biggr) \dd t + \gamma T + \lambda T \Big| x^{\tpi}(s) = y, \psi^{\tpi}(s) = \phi \biggr],
\]
which is consistent with the definition of the value function \(J^{\s}\).

(ii) To simplify the notation, we denote \(\eta(t):=\eta(t, x^{\thpi}(t), \psi^{\thpi}(t))\), \(\Bar{v}(t) = \Bar{v}(t, x^{\thpi}(t), \psi^{\thpi}(t))\), \(F(t):=F(x^{\tpi}(t), \psi^{\tpi}(t))\), \(Q(t):=Q(x^{\tpi}(t), \psi^{\tpi}(t))\), \(\hat{\mu}(t):= \hat{\mu}(t, x^{\thpi}(t), \psi^{\thpi}(t))\), and
\(
J(t) :=  J^{\s}(t, x^{\thpi}(t), \psi^{\thpi}(t)) = \Bar{v}(t) + \lambda t.
\)

Applying Ito's lemma to \(J\), we have
\[
\begin{aligned}
0=& \E\biggl[\int_s^T \eta(t) (\thpi(t) - \hat{\mu}(t)) \biggl\{ \dd J(t) - (\gamma \thpi(t) + |Q(t)| \theta^{\hat{\pi}^{(\lambda)}2}) \dd t \biggr\}\biggr]\\
=&\E\biggl[\int_s^T \eta(t) (\thpi(t) - \hat{\mu}(t)) \biggl\{ \Bar{v}_t(t)+\lambda + (\Bar{v}_x(t)F(t) + \Bar{v}_{\psi}(t) - \gamma) \thpi(t) - |Q(t)| \theta^{\hat{\pi}^{(\lambda)}2} \biggr\} \dd t \biggr]\\
=& \E\int_s^T \eta(t) \frac{\lambda}{|Q(t)|} \biggl\{ \Bar{v}_x(t)F(t) + \Bar{v}_{\psi}(t) - \gamma - 2|Q(t)| \hat{\mu}(t) \biggr\} \dd t.\\
\end{aligned}
\]
Because this equations holds for any \(\eta\), the integrand must be zero.
Therefore, we obtain the condition
\[
\Bar{v}_x(t)F(t) + \Bar{v}_{\psi}(t) - \gamma - 2|Q(t)| \hat{\mu}(t) = 0,
\]
or
\[
\hat{\mu}(t) = \frac{\Bar{v}_x(t)F(t) + \Bar{v}_{\psi}(t) - \gamma}{2|Q(t)|}.
\]
This expression is identical  with \(\Tilde{\mu}\) defined in \eqref{eq_thm2_policy} of Theorem \ref{thm_policy_improvement}-(ii). 
\end{proof}


The term $\thpi(t) - \hat{\mu}(t, x^{\thpi}(t), \psi^{\thpi}(t))$ in the equation of Theorem~\ref{thm_martingale}-\textnormal{(ii)} reveals why invoking stochastic policies is vital for our approach to work: $\thpi(t)$ is sampled from the Gaussian policy with the mean $ \hat{\mu}(t, x^{\thpi}(t), \psi^{\thpi}(t))$ so the two terms are generally {\it different}. In this case the equation provides a genuine direction for policy improvements. If we consider only deterministic policies, then these terms are identical and the equation becomes trivial giving away no information at all about the direction for improvement.  This observation highlights the necessity of recasting ART as ART-RL. In the following subsections, we develop an ART-RL algorithm based on the established results.

\subsection{Actor--critic parameterization and update rules}\label{sec_alg}

Guided by Theorems~\ref{thm_policy_improvement} and \ref{thm_martingale}, we now turn the analytical results into concrete actor--critic update rules. The parametrization of the critic follows the structure \eqref{eq_thm2_j} in Theorem~\ref{thm_policy_improvement}\textnormal{(i)}, while the parametrization of the actor is chosen within the Gaussian class \eqref{eq:gaussian_policy} designed for ART-RL. Specifically,
For function approximations of the actor and critic, by Theorems~\ref{thm_policy_improvement} and \ref{thm_martingale}, we parameterize the value function and policy using two separate neural network functions \(NN^{\vartheta_c}\) and \(NN^{\vartheta_a}\):

\begin{equation}
\label{eq_parameterization}
\hat{V}^{\vartheta_c}(t,x,\psi) = NN^{\vartheta_c}(t,x,\psi) + \lambda t,
\qquad
\hat{\pi}^{\vartheta_a}(\cdot \mid t,x,\psi)
= \mathcal{N}\!\left(NN^{\vartheta_a}(t,x,\psi), \frac{\lambda}{|Q(x,\psi)|}\right).
\end{equation}

Applying Theorem~\ref{thm_martingale} to the parametrization \eqref{eq_parameterization} yields, for suitable test processes \(\xi\) and \(\eta\), the coupled moment conditions
\begin{equation}
\label{eq_moment_conditions}
\left\{
\begin{aligned}
\mathbb{E}\biggl[ \int_0^T \! \xi(t) \bigl( \dd \hat{V}^{\vartheta_c}(t, x^{\ts}(t), \psi^{\ts}(t))
&- ( |Q(x^{\ts}(t), \psi^{\ts}(t))| \theta^{\hat{\pi}^{\vartheta_a}2}(t) + \gamma \ts(t) ) \dd t \bigr) \biggr] = 0, \\
\mathbb{E}\biggl[ \int_0^T \! \eta(t) \bigl(\ts(t) - NN^{\vartheta_a}(t,x^{\ts}(t),\psi^{\ts}(t))\bigr)
&\bigl( \dd \hat{V}^{\vartheta_c}(t, x^{\ts}(t), \psi^{\ts}(t)) \\
& - ( |Q(x^{\ts}(t), \psi^{\ts}(t))| \theta^{\hat{\pi}^{\vartheta_a}2}(t) + \gamma \ts(t) ) \dd t \bigr) \biggr] = 0,
\end{aligned}
\right.
\end{equation}
where \((x^{\ts},\psi^{\ts})\) denotes the state process under a  control \(\ts\)
generated from the Gaussian policy \(\hat{\pi}^{\vartheta_a}\), which can be simulated and therefore observable as data.

In addition, 
we take the following test processes: 
\[
\xi(t) = \frac{\partial}{\partial \vartheta_c} NN^{\vartheta_c}(t, x^{\ts}(t), \psi^{\ts}(t)),
\qquad
\eta(t) = \frac{\partial}{\partial \vartheta_a} NN^{\vartheta_a}(t, x^{\ts}(t), \psi^{\ts}(t)),
\]
which are consistent with the standard choices in the RL actor–critic literature \citep{sutton1998reinforcement, konda1999actor,jia2021policypg, huang2022achieving,huang2025continuous}.

To derive update rules, we interpret the moment conditions \eqref{eq_moment_conditions} as equations in \((\vartheta_c,\vartheta_a)\) and solve them by stochastic approximation. We use subscript \(n\) to denote quantities at iteration \(n\); for example, \(\vartheta_{c,n}\) denotes the value of \(\vartheta_c\) at the \(n\)-th iteration. 
Given the \(n\)-th observed trajectory \((x_n,\psi_n,\theta_n)\) generated/sampled from the current policy \(\hat{\pi}^{\vartheta_{a,n}}\) and a learning rate \(a_n>0\), we update the critic and actor parameters by

\begin{subequations}\label{eq_updating_rules}
\begin{align}
\label{eq_critic_update}
\vartheta_{c,n+1} \leftarrow\ &\vartheta_{c,n} + a_n \int_0^T
\frac{\partial NN^{\vartheta_c}}{\partial \vartheta_c}(t, x_n(t), \psi_n(t)) \nonumber\\
&\qquad\qquad\cdot \bigl[ \dd \hat{V}^{\vartheta_c}(t, x_n(t), \psi_n(t))
- (|Q(x_n(t), \psi_n(t))|\theta_n(t)^2 + \gamma \theta_n(t)) \dd t \bigr], \\[0.5em]
\label{eq_actor_update}
\vartheta_{a,n+1} \leftarrow\ &\vartheta_{a,n} + a_n \int_0^T
\frac{\partial NN^{\vartheta_a}}{\partial \vartheta_a}(t, x_n(t), \psi_n(t)) \bigl(\theta_n(t) - NN^{\vartheta_a}(t, x_n(t), \psi_n(t))\bigr) \nonumber\\
&\qquad\qquad\cdot \bigl[ \dd \hat{V}^{\vartheta_c}(t, x_n(t), \psi_n(t))
- (|Q(x_n(t), \psi_n(t))|\theta_n(t)^2 + \gamma \theta_n(t)) \dd t \bigr].
\end{align}
\end{subequations}

Finally, the Lagrange multiplier \(\gamma\), enforcing the terminal constraint on \(\psi\), is updated along the same trajectory by
\begin{equation}
\label{eq_updating_alpha}
\gamma_{n+1} \leftarrow \gamma_{n} + a_n \bigl( \psi_n(T) - T \bigr).
\end{equation}
Equations~\eqref{eq_updating_rules} and \eqref{eq_updating_alpha} constitute the theoretical update rules of all the learnable parameters in ART-RL. 

\subsection{Time-discretized ART-RL actor--critic algorithm}\label{sec:pseudocode}

To have an implementable algorithm, the final step is to discretize the previous update rules. To this end, we work on a uniform time grid
\(0 = t_0 < t_1 < \dots < t_K = T\) with step size
\(\Delta t = T/K\). For the \(n\)-th iteration, write
\[
\hat{V}^{\vartheta_{c,n}}_{k}
:= \hat{V}^{\vartheta_{c,n}}(t_k, x_n(t_k), \psi_n(t_k))
= NN^{\vartheta_{c,n}}(t_k, x_n(t_k), \psi_n(t_k)) + \lambda t_k,
\qquad k = 0,\dots,K.
\]
A simple Riemann approximation of the integrals in
\eqref{eq_updating_rules} leads to the time-discretized critic and actor
updates
\begin{subequations}\label{eq_disc_updating_rules}
\begin{align}
\label{eq_disc_critic_update}
\vartheta_{c,n+1}
\leftarrow\ &\vartheta_{c,n}
+ a_n \sum_{k=0}^{K-1}
\frac{\partial NN^{\vartheta_{c,n}}}{\partial \vartheta_c}
\bigl(t_k, x_n(t_k), \psi_n(t_k)\bigr) \nonumber\\
&\quad \times \Bigl[
\hat{V}^{\vartheta_{c,n}}_{k+1} - \hat{V}^{\vartheta_{c,n}}_{k}
- \bigl(|Q(x_n(t_k), \psi_n(t_k))|\,\theta_n(t_k)^2
       + \gamma_n\,\theta_n(t_k)\bigr)\,\Delta t
\Bigr], \\[0.5em]
\label{eq_disc_actor_update}
\vartheta_{a,n+1}
\leftarrow\ &\vartheta_{a,n}
+ a_n \sum_{k=0}^{K-1}
\frac{\partial NN^{\vartheta_{a,n}}}{\partial \vartheta_a}
\bigl(t_k, x_n(t_k), \psi_n(t_k)\bigr)\,
\bigl(\theta_n(t_k) - NN^{\vartheta_{a,n}}(t_k, x_n(t_k), \psi_n(t_k))\bigr) \nonumber\\
&\quad \times \Bigl[
\hat{V}^{\vartheta_{c,n}}_{k+1} - \hat{V}^{\vartheta_{c,n}}_{k}
- \bigl(|Q(x_n(t_k), \psi_n(t_k))|\,\theta_n(t_k)^2
       + \gamma_n\,\theta_n(t_k)\bigr)\,\Delta t
\Bigr].
\end{align}
\end{subequations}
The update for the Lagrange multiplier is unchanged:
\begin{equation}
\label{eq_disc_updating_alpha}
\gamma_{n+1} \leftarrow \gamma_{n} + a_n \bigl(\psi_n(T) - T\bigr),
\end{equation}
where \(\psi_n(T) = \psi_n(t_K)\).

We summarize the resulting ART-RL actor-critic scheme in
Algorithm~\ref{alg:art-rl-actor-critic}. The inner loop generates one trajectory
under the current Gaussian policy \eqref{eq_parameterization}, and  the outer loop
then updates the actor, critic, and Lagrange multiplier via
\eqref{eq_disc_updating_rules} and \eqref{eq_disc_updating_alpha}.

\begin{algorithm}[htb]
\caption{Time-discretized ART-RL Actor-Critic}\label{alg:art-rl-actor-critic}
\begin{algorithmic}
\For{$n = 1$ to $N$}
    \State Set $k = 0$, $t = t_k = 0$, initialize $(x_n(t_0), \psi_n(t_0))$
    \While{$t < T$}
        \State Compute policy mean $m_{n,k} = NN^{\vartheta_{a,n}}(t_k, x_n(t_k), \psi_n(t_k))$
        \State Sample control according to the Gaussian policy \eqref{eq_parameterization}
        \(
        \theta_n(t_k) \sim \mathcal{N}\!\left(m_{n,k}, \frac{\lambda}{|Q(x_n(t_k), \psi_n(t_k))|}\right)
        \)
        \State Update $(x_n(t_{k+1}), \psi_n(t_{k+1}))$ by one time step of the ART dynamics \eqref{eq_original_dynamics}
        \State Increment time: $t_{k+1} = t_k + \Delta t$, $k \leftarrow k+1$
    \EndWhile
    \State Collect trajectory $\{(t_k, x_n(t_k), \psi_n(t_k), \theta_n(t_k))\}_{k=0}^{K-1}$
    \State Update critic parameters $\vartheta_{c,n+1}$ via \eqref{eq_disc_critic_update}
    \State Update actor parameters $\vartheta_{a,n+1}$ via \eqref{eq_disc_actor_update}
    \State Update multiplier $\gamma_{n+1}$ via \eqref{eq_disc_updating_alpha}
\EndFor
\end{algorithmic}
\end{algorithm}

\section{Numerical Experiments}\label{sec:experiments}

We now numerically evaluate ART-RL across several regimes that differ in dimensionality, numerical solver, model capacity, and experimental protocol.
The central practical question is whether ART-RL can improve existing samplers by changing only the timestep grid, while leaving the pretrained model, solver, and other sampling components unchanged. In this section,
the EDM- and EDM2-based experiments test this question in modern image-generation pipelines, while the one-dimensional analytical-score experiment isolates discretization effects and the MNIST experiment tests a less optimized score model.

\subsection{Experimental setup and baselines}\label{sec:exp-setup}
\paragraph{Datasets.}
We consider both synthetic and real-image settings.
For the former, we conduct an experiment where a synthetic target distribution on \(\mathbb{R}\) whose score function is known precisely and explicitly, allowing us to isolate the effect of time reparameterization from score-estimation errors.
For the latter, which consists of several high-dimensional image generation tasks, we work on the following datasets: 1) CIFAR-10 \citep{krizhevsky2009learning}, a dataset of \(32\times 32\) natural images from ten classes; 2) AFHQv2, a variant of the AFHQ animal faces dataset \citep{choi2020starganv2} with \(64\times 64\) images of cats, dogs, and wildlife; 3) FFHQ \citep{karras2019style}, a collection of human face images that we downsample to \(64\times 64\) as in \citet{karras2022elucidating}; and 4) ImageNet \citep{russakovsky2015imagenet}, which we evaluate at both \(64\times 64\) and \(512\times 512\) resolutions.
The ImageNet--64 experiment follows the standard EDM setup of \citet{karras2022elucidating}, while
the ImageNet--512 experiment adopts the EDM2 setting of \citet{karras2024analyzing}. The latter is important because it evaluates ART-RL on a more modern backbone and sampling pipeline, moves from pixel-space diffusion to latent-space diffusion, and tests substantially higher-resolution image generation.
To study a small-model regime, we also consider MNIST \citep{lecun2002gradient}, a dataset of \(28\times 28\) grayscale handwritten digits, and train a compact score model directly on this dataset.

\paragraph{Timestep schedules and baselines.}
We compare four timestep schedules that differ in how they allocate a fixed number of function evaluations along the reverse trajectory.

\emph{Uniform} is the simplest choice and serves as a reference baseline: it discretizes the physical time interval $\tau \in [0,T]$ using an equally spaced grid.

\emph{EDM} is the hand-designed schedule by \citet{karras2022elucidating}, which is widely adopted in diffusion sampling and is known to perform strongly on standard image benchmarks.
The discrete timesteps are calculated by
\[
\tau_k
=\Bigl(\sigma_{\max}^{1/\rho}
+\frac{k}{K}\bigl(\sigma_{\min}^{1/\rho}-\sigma_{\max}^{1/\rho}\bigr)\Bigr)^{\rho},
\qquad k=0,\ldots,K,
\]
with hyperparameters $\sigma_{\min}>0$, $\sigma_{\max}>\sigma_{\min}$, and $\rho>0$. Following \citet{karras2022elucidating} we use the recommended default $\rho=7$.
Equivalently, this construction corresponds to using a uniform grid in the transformed coordinate $\sigma^{1/\rho}$ between $\sigma_{\max}^{1/\rho}$ and $\sigma_{\min}^{1/\rho}$, and hence can be interpreted as a fixed, pre-specified time reparameterization.

\emph{DPM-Solver} \citep{lu2022dpm}, henceforth denoted as DPM, provides another strong hand-designed timestep grid that is widely used in fast diffusion sampling.
We use its standard uniform log-SNR grid only as a timestep schedule, without changing the numerical integrator.
In the variance-exploding (VE) setting, this gives the geometrically spaced noise levels
\[
\tau_k
=\sigma_{\max}\Bigl(\frac{\sigma_{\min}}{\sigma_{\max}}\Bigr)^{k/K},
\qquad k=0,\ldots,K,
\]
where \(\sigma_{\min}>0\) and \(\sigma_{\max}>\sigma_{\min}\).

\emph{ART-RL} is our learned schedule, obtained from the ART objective and implemented by Algorithm~\ref{alg:art-rl-actor-critic}.
With ART, a control $\theta$ induces a time change $\psi$, and sampling is performed by placing a uniform grid on the reparameterized clock and mapping it back to physical time through $\psi$.
This construction indeed includes the other three schedules as special cases: the identity map $\psi(t)=t$ recovers Uniform, selecting $\psi$ to match the EDM coordinate $\sigma^{1/\rho}$ (up to a constant rescaling) reproduces EDM, and selecting $\psi$ so that the induced grid is uniform in the DPM-Solver log-SNR coordinate leads to DPM.
Importantly, ART-RL learns \(\psi\) from data, making the schedule adaptive and allowing timestep allocation to move beyond hand-crafted designs when this improves sampling accuracy under a fixed evaluation budget.

\paragraph{Evaluation metrics.}
We assess sampling performance using metrics appropriate to the dimensionality and application settings.
In the one-dimensional synthetic experiment, sampling accuracy is quantified by the squared Wasserstein distance $W_2$ between the empirical distribution of generated samples and the (known) target distribution. We report this metric alongside the number of timesteps used by the Euler discretization.
For the image-generation experiments, we adopt the standard evaluation protocol and measure sample quality using the Fréchet Inception Distance (FID) as a function of the number of function evaluations (NFE).
For the MNIST small-model diagnostic, where Inception features are less natural for handwritten digits, we instead report LeNet-FID using a LeNet feature space.
When experiments are conducted within the EDM pipeline \citep{karras2022elucidating}, all components other than the timestep schedule are held fixed; so differences in log FID–NFE curves can be attributed solely to the choice of time discretization.
We use FID for image-generation  because it is the standard metric in the EDM and EDM2 evaluations and is applicable across the face, animal-face, and ImageNet experiments considered here.
Metrics such as Inception Score can be useful as supplemental diagnostics on class-diverse ImageNet-style data, but they are less informative for narrow-domain datasets such as FFHQ or AFHQv2 and are therefore not used here.

 \paragraph{Training cost and amortization.}
ART-RL requires an offline one-off training stage to learn the schedule, whereas Uniform, EDM, and DPM are hand-designed and therefore training-free.
This training cost should be interpreted differently from inference or sampling cost: in our CIFAR--10 image experiment, learning the schedule for a given number of timesteps takes about 1--2 hours on a Colab T4 GPU, and this cost is paid only {\it once}.
After training, we find it justified to {\it distill} the learned policy into a fixed precomputed time grid; so deployment for sampling is identical to using a hand-designed schedule: the sampler simply reads a list of timesteps, and ART-RL introduces no additional inference-time overhead relative to EDM or DPM.
No retraining or architectural modification of the score model is required, nor is any change  made to the solver other than the locations at which it evaluates the same reverse dynamics.
In this sense, ART-RL is not a competing diffusion backbone or a new sampler implementation; it is a learned schedule that can be dropped into an existing sampler.
Moreover, we further experiment on amortizing the cost  by {\it transfer learning}. Specifically,
we reuse the {\it same} distilled schedule trained on CIFAR--10 with a certain number of timesteps across {\it different} timestep counts, target datasets, and the EDM2 latent-space pipeline without retraining, and find that the results still improve over those of the hand-designed grids.
Thus, the relevant practical question is not only whether ART-RL improves a single trained configuration, but whether a learned time parametrization can serve as a reusable schedule across many sampling settings. The generalization experiments reported in Section \ref{sec:generalization} will test this point.

\paragraph{Presentation of quantitative curves.}
Numerical results will be reported in various tables throughout this section; here we first present a visual overview in Figure~\ref{fig:quant-overview}.
To cater for different error scales,
the figure collects the error curves using a logarithmic vertical axis and a linear horizontal axis in timestep count or NFE.
This log-scale presentation avoids compressing the larger-budget regime, where FID, LeNet-FID, and Wasserstein errors are small but the differences between schedules remain important. Clearly, ART achieves the best results over all the experimented  datasets and timestep budgets.

\begin{figure}[htbp]
  \centering
  \includegraphics[width=\linewidth]{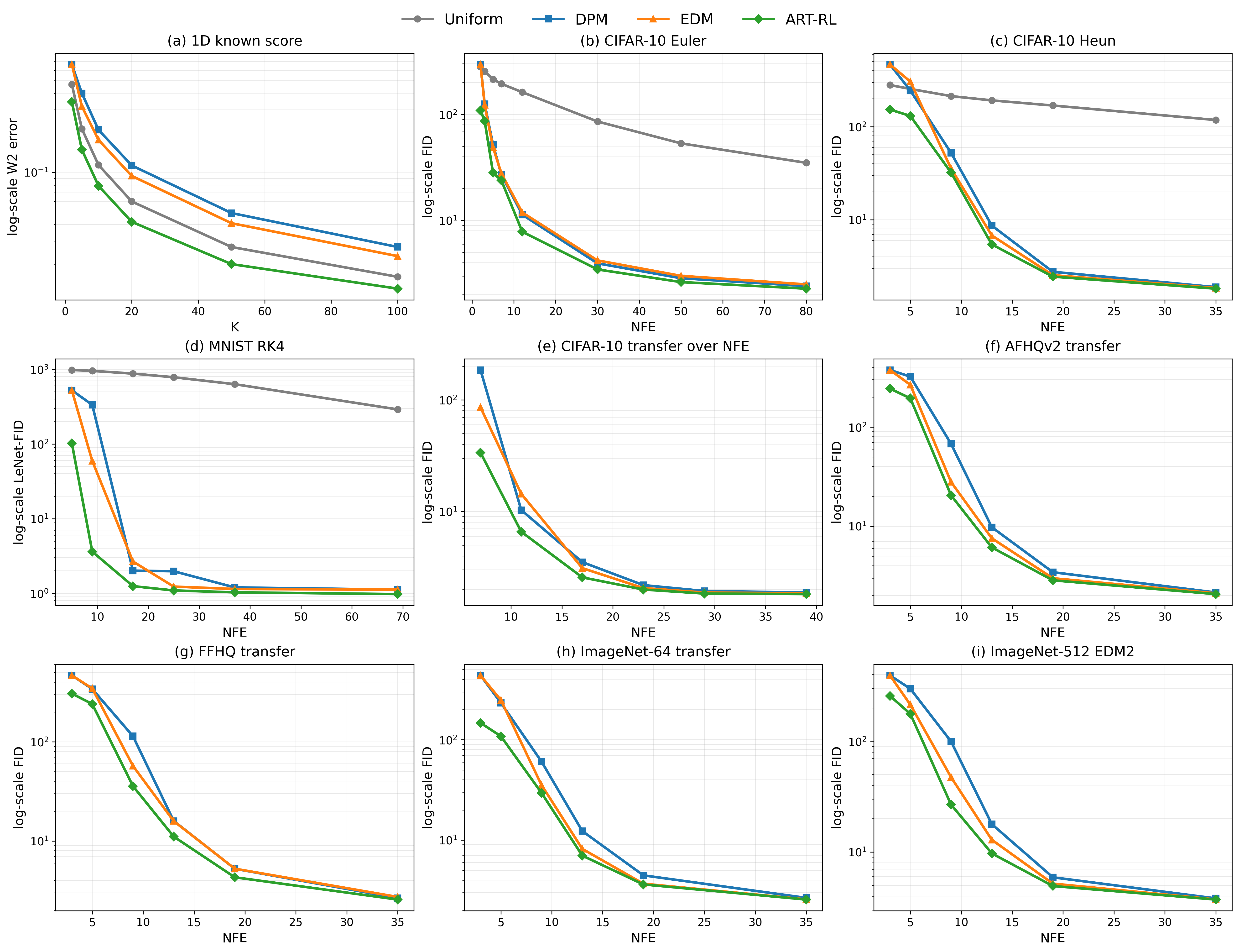}
  \caption{Visual overview of ART-RL across experiments. Each panel uses a logarithmic vertical axis and compares schedules at matched timestep counts or matched NFE. DPM denotes the DPM-Solver timestep grid.}
  \label{fig:quant-overview}
\end{figure}

\subsection{Experiment with known score function}\label{subsec:oned}

The first experiment disentangles the effect of timestep allocation from score approximation, where a one–dimensional diffusion model for which all coefficients  in the probability–flow ODE are available in closed forms.
This setting allows sampling performance to be attributed solely to the choice of time discretization.

The forward diffusion begins at $p_0=\mathcal{N}(0,1)$, follows the variance exploding (VE) dynamics $\mathrm{d}x(t)=\sqrt{2t}\,\mathrm{d}w(t)$ and terminates at $T=3$.
The marginal law admits the explicit form $x(t)\sim p_t=\mathcal{N}(0,1+t^2)$, with the terminal distribution $p_T=\mathcal{N}(0,10)$.
The associated score function is then given by $S(t,x)=-x/(1+t^2)$.

Under this specialization, substituting the analytical score into the general definitions \eqref{eq:F-def} and \eqref{eq:Qexplicit} produces explicit expressions for the reparameterized probability–flow field and the Euler error coefficient,
namely $F(x,\psi)=-(T-\psi)x/\bigl(1+(T-\psi)^2\bigr)$ and
$Q(x,\psi)=x/\bigl(1+(T-\psi)^2\bigr)^2$.

We next examine the time–warping control learned by ART–RL. We test different number $K$ of timesteps, and here we discuss the case of $K=100$.
After training with $K=100$, we collect the realized $\theta$ sequences from the final $10{,}000$ backward trajectories.
To remove incidental fluctuations in the terminal condition due to computational errors, each trajectory is rescaled so that the resulting time change integrates exactly to $T$.
We then compute pointwise summary statistics across trajectories, reporting the empirical mean together with the interquartile (IQR) range (25–75 percentiles) at each timestep.
These aggregated statistics are shown in Figure~\ref{fig:distillation}.

\begin{figure}[htbp]
  \centering
  \includegraphics[width=0.7\linewidth]{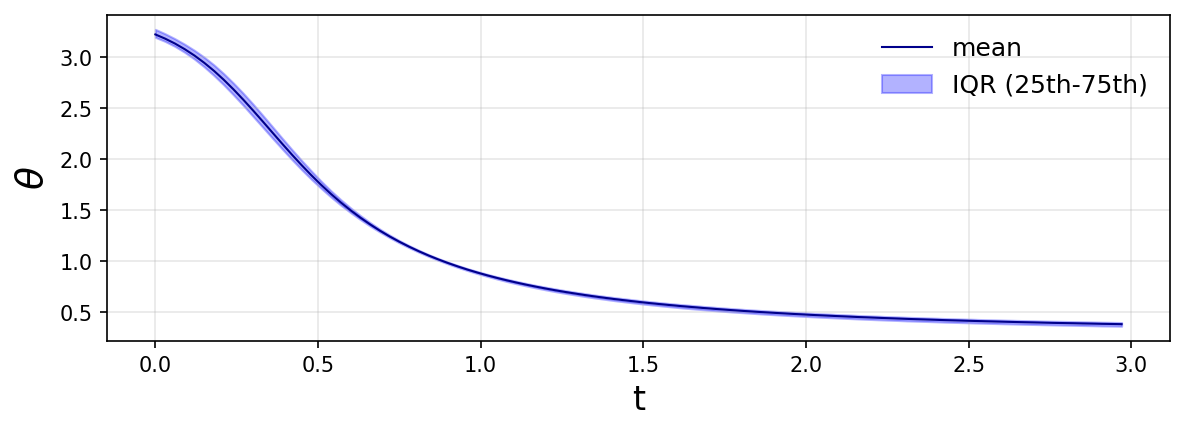}
  \caption{Empirical mean (solid line) and 25–75 percent IQR range (shaded region) of the executed control \(\theta\) across the last \(10{,}000\) training trajectories in the one–dimensional experiment with \(K=100\) timesteps. Each trajectory is normalized so that the resulting terminal time satisfies \(\psi(T) = T\).}
  \label{fig:distillation}
\end{figure}

Figure~\ref{fig:distillation} shows that the mean curve of \(\theta\) is very smooth and the IQR band is extremely narrow.
Moreover, when we plot the 99 percent empirical confidence band (see Appendix~\ref{app_subsec:oned}, Figure~\ref{fig:theta_mean_CI99}), the shaded region is visually indistinguishable from the mean curve. This observation is prevalent with other values of $K$, 
which indicates that, in this one–dimensional example, the learned control \(\theta\) depends only weakly on the state and can be effectively regarded as a deterministic function of time.
In other words, the policy has collapsed to an almost {\it time-only} schedule.

Motivated by this observation, we perform a simple {\it distillation} step for this one-dimensional example: for each given $K$ we discard the neural network parameterization of the actor (which in general is a feedback policy function of $(t,x,\psi)$) and replace it with the empirical mean curve of \(\theta\) as a {\it fixed} function of \(t\).
This distilled approach has two important advantages.

First, it removes entirely the cost of evaluating a neural network to obtain \(\theta\) at each step.
Although the actor network is indeed not large (e.g. much smaller than the score model), repeatedly evaluating it along every sampling trajectory still incurs a nontrivial computational overhead.
After distillation, sampling under the ART-RL schedule requires no additional computation beyond that of standard schemes such as Uniform or EDM. In fact, the timestep sequence is precomputed once and then reused.

Second, it eliminates residual mismatch in the terminal time.
While the learned actor attempts to enforce \(\psi(T) = T\), individual trajectories may slightly overshoot or undershoot \(T\) when \(\theta\) is produced by a neural network at every timestep.
This discrepancy is negligible when the number of timesteps \(K\) is small but becomes significant as \(K\) grows due to the need of a finer time grid.
By distilling to a deterministic schedule whose increments are explicitly normalized to sum to \(T\), we guarantee that the induced time grid hits \(T\) exactly at the end, thereby improving the numerical fidelity of the discretized probability flow ODE.


We now present the results with different timestep budgets, where the comparison is remarkably consistent.
As shown in Figure~\ref{fig:quant-overview}(a) and Table~\ref{tab:1dim}, DPM performs the worst among all compared schedules for every value of \(K\), while EDM also underperforms the Uniform grid throughout.
By contrast, ART-RL achieves the best Wasserstein--2 error consistently, with a clear margin over all baselines.
These results also show that the hand-designed schedules DPM and EDM are designed {\it specifically} for image benchmarks and may fail in other domains even in simple toy examples. 

\begin{table}[htbp]
\centering
\caption{Wasserstein--2 error versus number of timesteps \(K\) in the one–dimensional experiment.}
\label{tab:1dim}
\begin{tabular}{lrrrrrr}
\toprule
{$K$} &    2   &    5   &    10  &    20  &    50  &    100 \\
\midrule
Uniform        & .468 & .215 & .114 & .060 & .027 & .016 \\
DPM            & .670 & .401 & .211 & .113 & .049 & .027 \\
EDM            & .664 & .319 & .177 & .094 & .041 & .023 \\
ART-RL         & \textbf{.345} & \textbf{.149} & \textbf{.079} & \textbf{.042} & \textbf{.020} & \textbf{.013} \\
\bottomrule
\end{tabular}
\end{table}

The one–dimensional study in this subsection 
isolates timestep effects using an analytical score model and shows that ART-RL can learn an effective schedule in a principled way.
Starting from the next subsection,  we move to image benchmarks and ask whether similar distillation and other generalization techniques still work empirically for more complex tasks.

\subsection{CIFAR--10 under EDM pipeline}\label{subsec:cifar}

We next conduct CIFAR--10 experiments within the official EDM pipeline \citep{karras2022elucidating}, keeping the score network, noise conditioning, hyperparameters, and all implementation details fixed across methods under comparison, except the timestep schedule.

First of all, for ART-RL trained on CIFAR--10 with $K=18$, Figure~\ref{fig:cifar10_theta_CI99} shows that the empirical 99 percent confidence band remains narrow around a smooth positive mean curve. Similar concentration is observed for other time step counts. 
This supports distilling the CIFAR--10 trained policies into deterministic time-only grids as in the one-dimensional example. In the experiments below, ART-RL is trained and distilled separately for each $K$. 

\begin{figure}[htbp]
\centering
\includegraphics[width=0.62\linewidth]{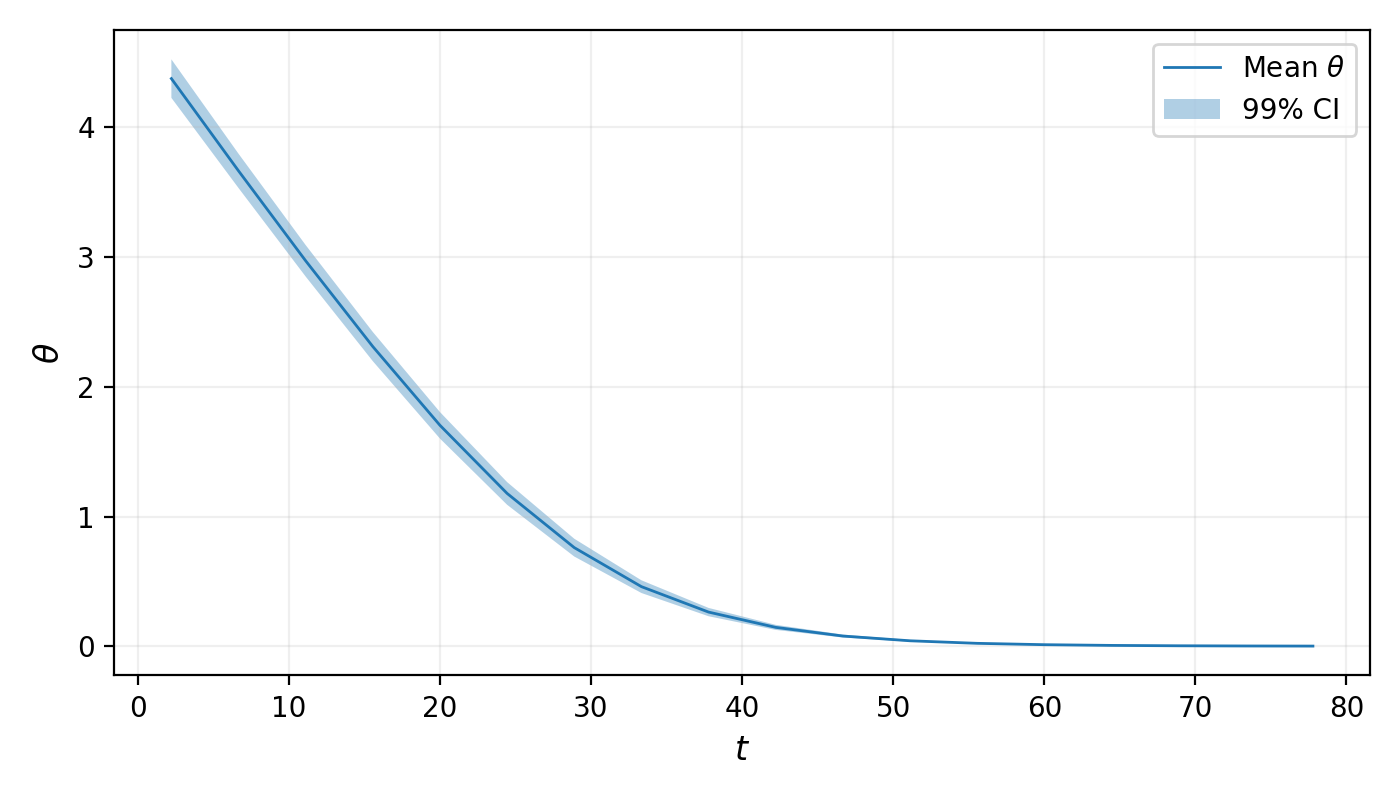}
\caption{Empirical mean of the executed control \(\theta\) and its 99 percent confidence interval for ART-RL trained on CIFAR--10 with $K=18$.} 
\label{fig:cifar10_theta_CI99}
\end{figure}

\subsubsection{Heun sampling}

We first evaluate the distilled ART-RL schedules under the sampling configuration that is most relevant in modern image-generation pipelines.
In particular, the default EDM pipeline uses Heun, instead of Euler, and higher-order solvers of this type are widely adopted in practice because they provide improved accuracy per function evaluation.
Accordingly, we now evaluate Uniform, DPM, EDM, and ART-RL schedules on CIFAR--10 under the EDM pipeline with the Heun-based sampler, keeping all other components fixed so that differences in FID reflect only the effect of the timestep schedule.

We consider step counts $K\in\{2,3,5,7,10,18\}$ and follow the EDM implementation choice of using an Euler step for the final update.
With Heun updates, each intermediate step requires two score evaluations; so the overall cost is $\mathrm{NFE}=2K-1$.
In particular, $K=18$ corresponds to the strongest configuration reported by EDM for CIFAR--10 and is included as a budget-matched comparison.

Figure~\ref{fig:quant-overview}(c) and Table~\ref{tab:cifar10} show that ART-RL consistently achieves the best FID across all tested budgets. The improvement is especially pronounced at small to moderate NFEs, where ART-RL outperforms all hand-designed baselines by clear margins. Among the latter, DPM is slightly better than EDM at NFE \(=5\), while EDM becomes better from NFE \(=9\) onward. Both, however, remain consistently worse than ART-RL throughout. The outperformance of ART-RL  persists even at the largest budget, which is also the strongest configuration reported by \citet{karras2022elucidating}: at NFE \(=35\), ART-RL achieves 1.82 versus 1.85 for EDM. The robustness of the result at NFE \(=35\) is further supported by three additional matched runs (with 50,000 samples each), which give FIDs \(1.82, 1.79, 1.82\) for ART-RL versus \(1.85, 1.83, 1.85\) for EDM respectively.

\begin{table}[htbp]
\centering
\caption{FID versus number of function evaluations (NFE) on CIFAR--10 under Heun updates in EDM pipeline.}
\label{tab:cifar10}
\begin{tabular}{lrrrrrr}
\toprule
NFE &     3  &     5  &     9  &     13 &     19 &     35 \\
\midrule
Uniform & 280.29 & 254.47 & 213.13 & 191.69 & 168.87 & 118.02 \\
DPM     & 465.83 & 244.50 &  52.29 &   8.67 &   2.76 &   1.89 \\
EDM     & 465.83 & 305.15 &  35.54 &   6.79 &   2.54 &   1.85 \\
ART-RL  & \textbf{152.86} & \textbf{130.48} & \textbf{32.13} &   \textbf{5.44} &   \textbf{2.45} &   \textbf{1.82} \\
\bottomrule
\end{tabular}
\end{table}

Visual samples provided in Figure~\ref{fig:cifar10-three-in-a-row} show consistent results.
Uniform schedules produce visibly blurrier images even at larger budgets, while EDM and ART-RL generate sharp samples once sufficient numbers of evaluations are available.
At the smallest budgets (NFE$=3,5$), ART-RL already produces recognizable images, whereas EDM outputs remain closer to noise.

\begin{figure}[H]
  \centering
  \begin{subfigure}[t]{0.24\textwidth}
    \centering
    \includegraphics[width=\linewidth]{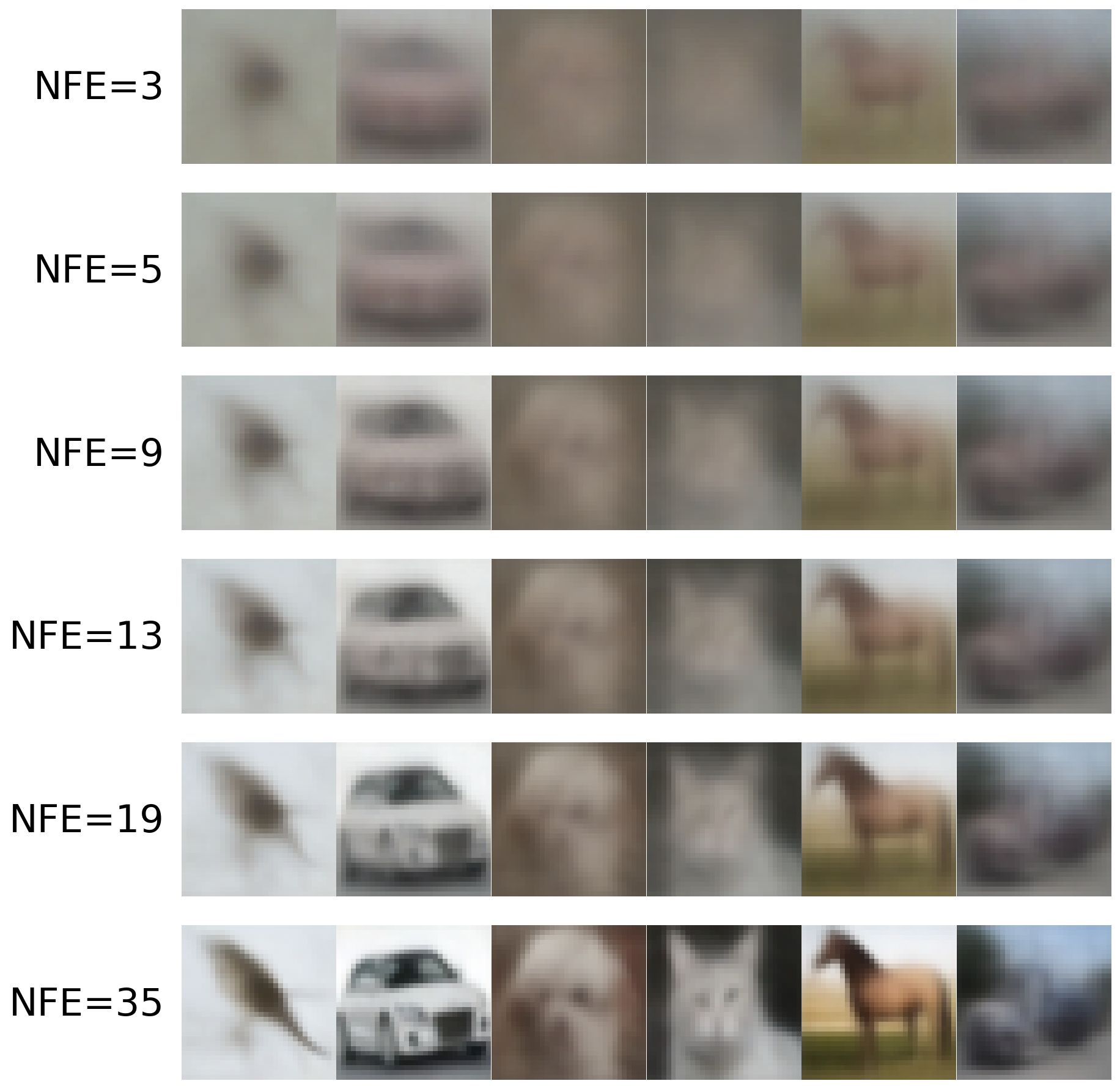}
    \caption{Uniform}
  \end{subfigure}\hfill
  \begin{subfigure}[t]{0.24\textwidth}
    \centering
    \includegraphics[width=\linewidth]{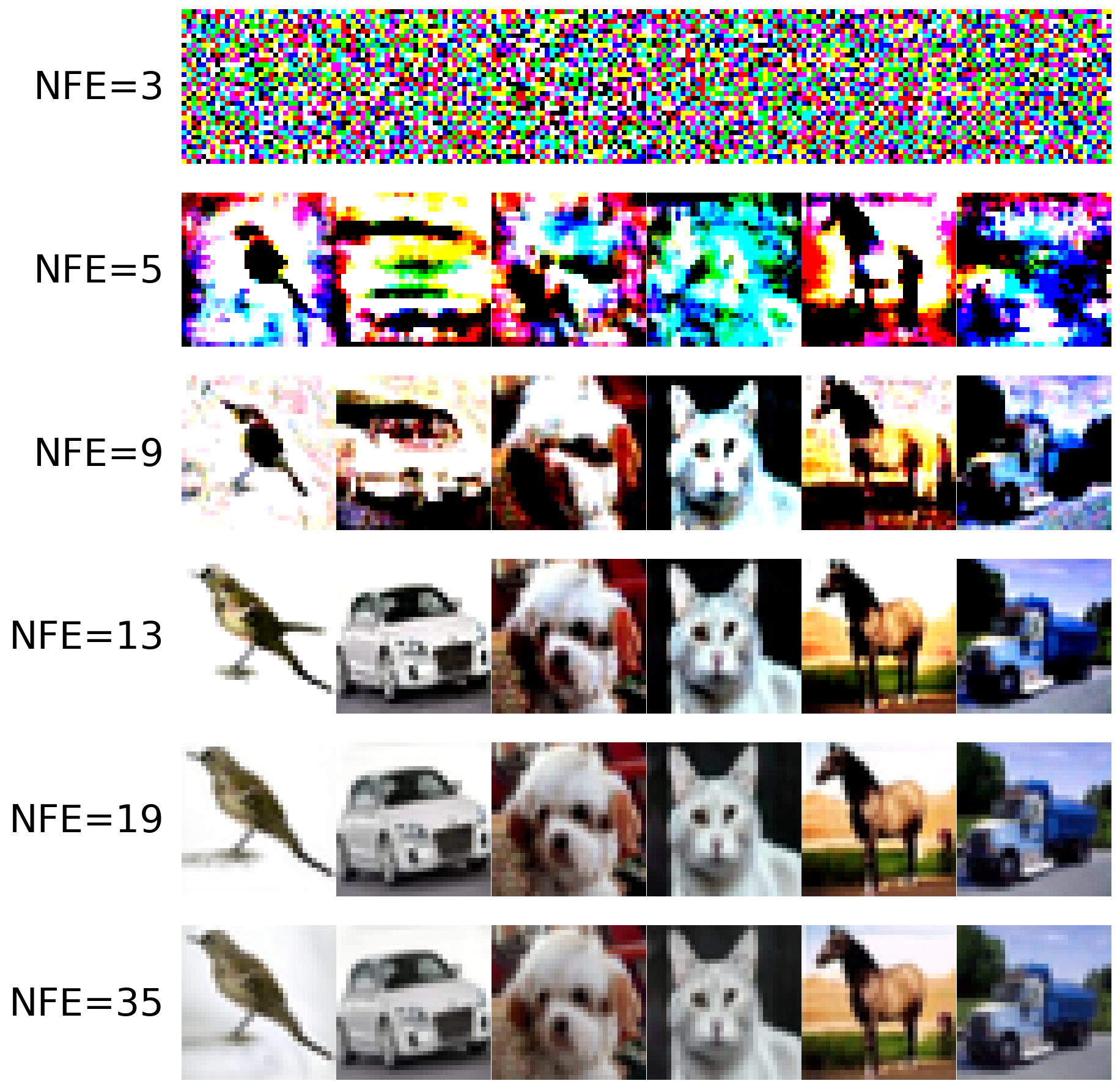}
    \caption{DPM}
  \end{subfigure}\hfill
  \begin{subfigure}[t]{0.24\textwidth}
    \centering
    \includegraphics[width=\linewidth]{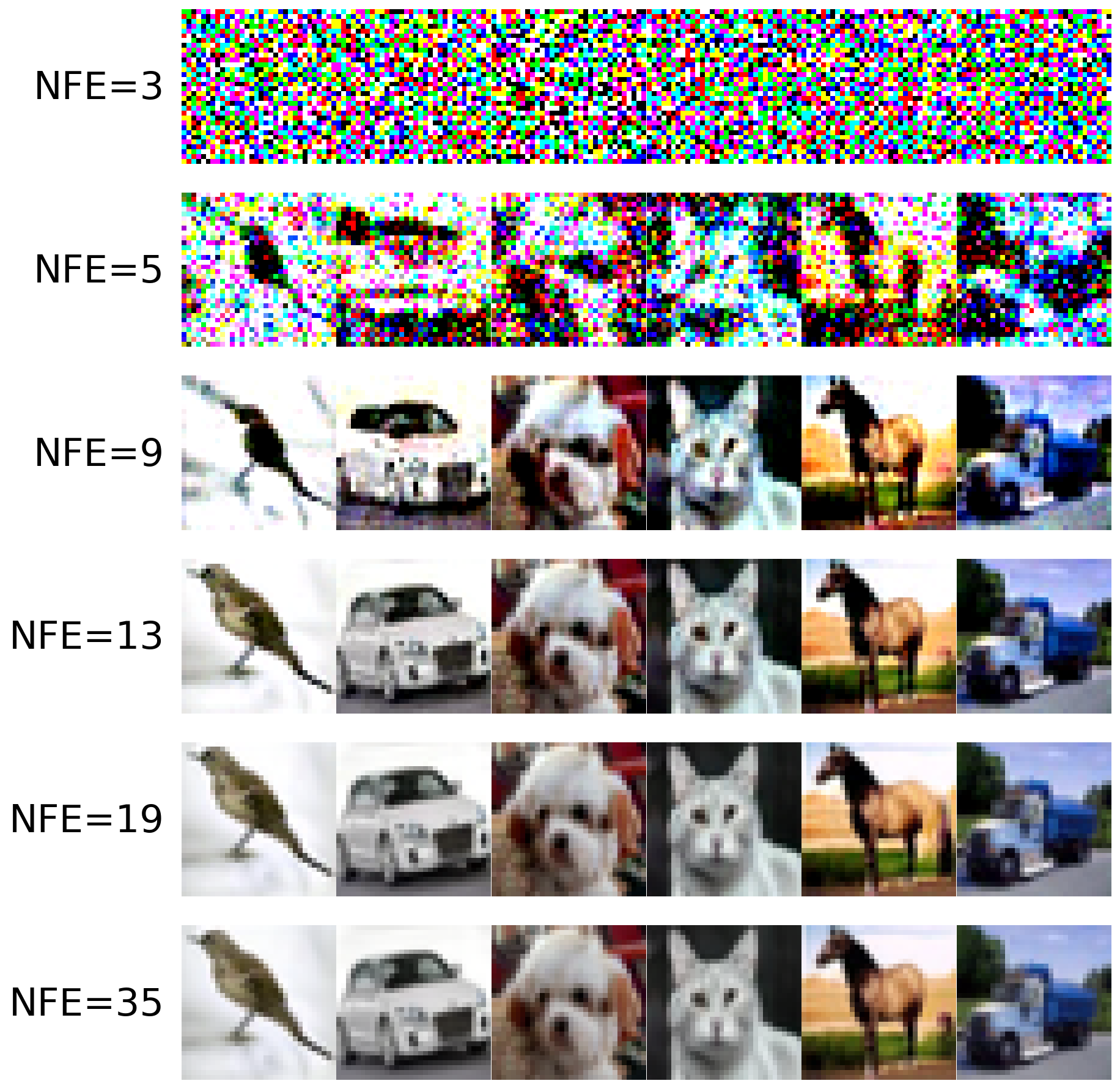}
    \caption{EDM}
  \end{subfigure}\hfill
  \begin{subfigure}[t]{0.24\textwidth}
    \centering
    \includegraphics[width=\linewidth]{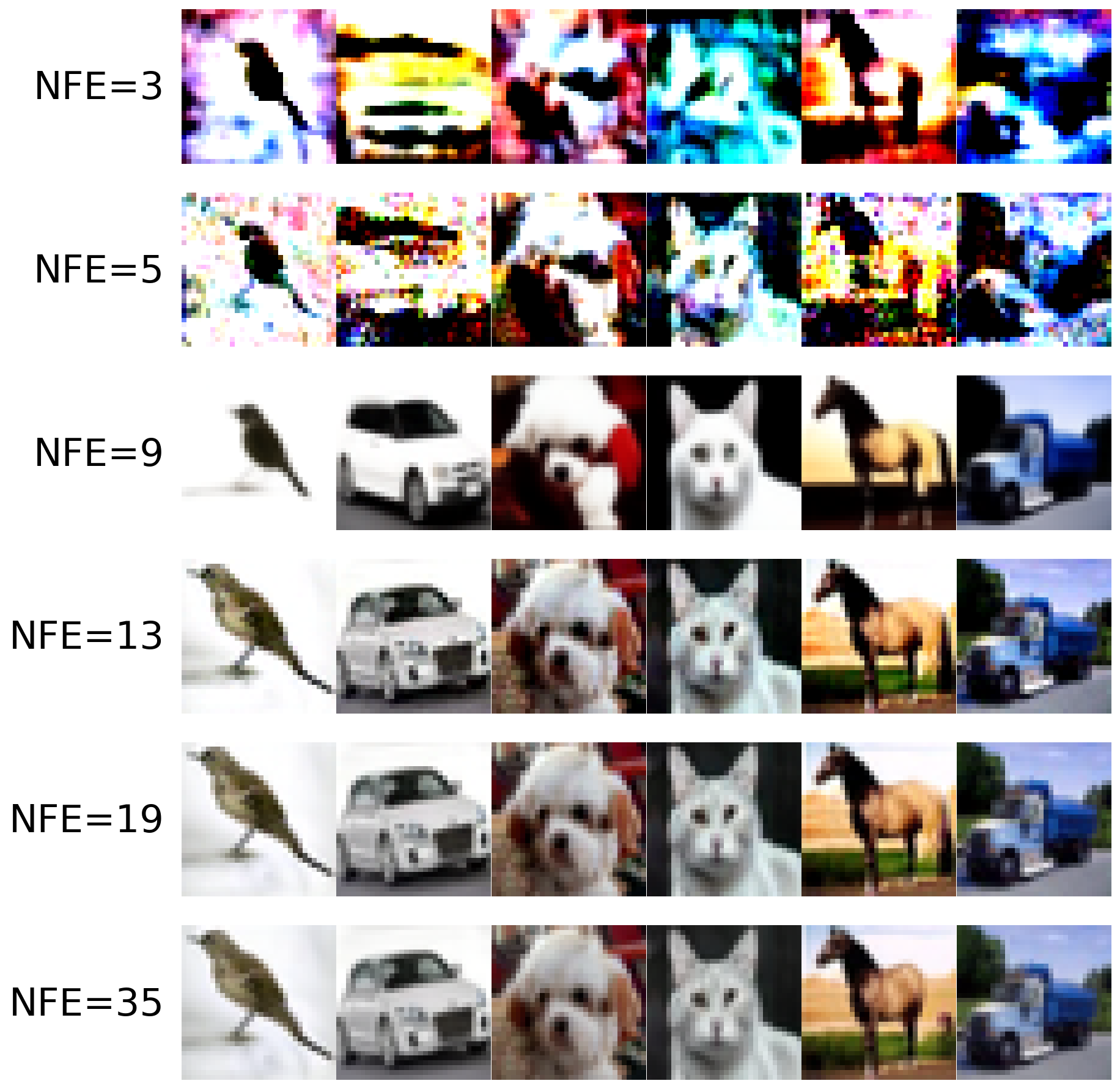}
    \caption{ART-RL}
  \end{subfigure}
  \caption{CIFAR--10 samples across timesteps for the four schedules (Uniform, DPM, EDM, ART-RL). Each panel shows a \(6\times 6\) grid where rows correspond to increasing NFE.}
  \label{fig:cifar10-three-in-a-row}
\end{figure}

Under Heun sampling, these results show that ART-RL can be deployed as a drop-in replacement for the EDM time grid within a competitive image-sampling pipeline.
It improves the sampler substantially in low- and mid-computation regimes while preserving strong performance at larger budgets.

\subsubsection{Euler sampling}\label{subsec:cifar_euler}

To complement the Heun results, we also evaluate the distilled ART-RL schedules under Euler updates within the same EDM pipeline, thereby matching the Euler discretization used during ART-RL training.
As in the one-dimensional experiment, sampling proceeds with $K$ discrete Euler updates. In this case each update requires one score evaluation; so $\mathrm{NFE}=K$.
We evaluate representative step budgets $K\in\{2,3,5,7,12,30,50,80\}$.

Figure~\ref{fig:quant-overview}(b) and Table~\ref{tab:cifar10_euler_ablation} show that ART-RL consistently achieves the best FID across all tested budgets, with clear margins over all the baselines.
Under Euler updates, EDM is slightly better than DPM at smaller budgets, while DPM becomes slightly better at larger budgets. However, both are consistently worse than ART-RL.
This shows that the benefit of learning the time grid persists in a high-dimensional image model where the score is learned from data, even when ART-RL is used only by replacing the grid at which the same EDM reverse dynamics are evaluated.
Additional \(6\times 6\) visual results for the Euler ablation are provided in Appendix~\ref{app_subsec:cifar10-euler}, Figure~\ref{fig:cifar10-euler-three-in-a-row}.

\begin{table}[htbp]
\centering
\caption{FID versus number of function evaluations (NFE) on CIFAR--10 under Euler updates in EDM pipeline.}
\label{tab:cifar10_euler_ablation}
\begin{tabular}{lrrrrrrrr}
\toprule
NFE & 2 & 3 & 5 & 7 & 12 & 30 & 50 & 80 \\
\midrule
Uniform & 280.50 & 255.02 & 214.60 & 194.40 & 162.14 & 85.83 & 53.40 & 34.99 \\
DPM     & 295.65 & 125.67 & 51.73  & 27.07 & 11.35 & 3.95 & 2.86 & 2.41 \\
EDM     & 295.65 & 122.56 & 49.10 & 27.73 & 11.91 & 4.21 & 3.01 & 2.50 \\
ART-RL  & \textbf{109.11} & \textbf{86.84} & \textbf{28.16} & \textbf{23.88} & \textbf{7.84} & \textbf{3.46} & \textbf{2.63} & \textbf{2.28} \\
\bottomrule
\end{tabular}
\end{table}

\subsection{MNIST with small score model}\label{subsec:mnist}

We next consider MNIST -- a deliberately simple setting in which the score network is lightweight (approximately $4.5$ MB), trained from scratch, and much less optimized than the pretrained image-generation models used in the EDM experiments.
This experiment is not part of the EDM pipeline and is not meant as a cross-dataset transfer test.
Instead, it asks whether schedule learning remains useful when the score model itself is small and less accurate instead of large and optimally pretrained.
Such compact score models are common in latency- or memory-limited deployments; so it is important to understand whether timestep adaptation continues to improve sampling quality in this regime.

Again, to isolate the effect of the time grid, all the methods use exactly the same small score network and the same numerical integrator. Moreover, in this experiment 
we use the RK4 setting (see, e.g., \citealt{WCW24}) to test whether the learned schedule remains effective under an ODE solver that has an order higher than Heun.
Following the EDM/Heun sampling convention, we use RK4 for the non-terminal updates and retain an Euler step for the final update, which gives \(\mathrm{NFE}=4K-3\).
We then compare Uniform, DPM, EDM, and ART-RL schedules under identical training and sampling configurations.
In short, the timestep schedule is the only component that differs across the methods under comparison.

Figure~\ref{fig:quant-overview}(d) and Table~\ref{tab:mnist} show that ART-RL achieves the lowest LeNet-FID at every reported evaluation budget.
The advantage is consistent across the full NFE range, indicating that the learned timestep allocation remains effective when the score model is compact.
Figure~\ref{fig:mnist-three-in-a-row} in Appendix~\ref{app_subsec:mnist-qualitative}  further illustrates that ART-RL produces coherent digit samples earlier than the hand-crafted schedules.

\begin{table}[htbp]
\centering
\caption{LeNet-FID versus number of function evaluations (NFE) on MNIST.}
\label{tab:mnist}
\begin{tabular}{lrrrrrr}
\toprule
NFE & 5 & 9 & 17 & 25 & 37 & 69 \\
\midrule
Uniform & 981.13 & 953.74 & 876.58 & 783.65 & 632.26 & 290.46 \\
DPM & 523.60 & 334.44 & 2.01 & 1.97 & 1.20 & 1.12 \\
EDM & 523.60 & 59.36 & 2.66 & 1.23 & 1.14 & 1.12 \\
ART-RL & \textbf{102.13} & \textbf{3.62} & \textbf{1.25} & \textbf{1.09} & \textbf{1.03} & \textbf{0.98} \\
\bottomrule
\end{tabular}
\end{table}

The MNIST experiment indicates that the advantage of ART-RL extends beyond modern large-model pipelines.
Indeed,  here the gain over EDM and DPM is even larger, suggesting that learned timestep allocation remains effective even with simple, less optimized score-models.

\subsection{Transfer and amortization of ART-RL }\label{sec:generalization}

We next experiment on the transferability of the ART-RL time schedule beyond the exact configuration in which it is learned.
This question is central to the {\it practical} value of ART-RL: if the learned schedule had to be retrained for every step count, dataset, or sampling pipeline, the offline training cost would be harder to justify.
So we study whether  the same CIFAR--10 schedule can be amortized across a broad family of settings, turning the learned policy into a fixed plug-in timestep grid.
In each transfer experiment, ART-RL leaves the pretrained model, backbone, solver, and implementation pipeline unchanged and replaces only the hand-designed timestep schedule.
Thus, this section tests the strongest practical form of the drop-in claim: a schedule learned once in the CIFAR--10 EDM setting with a given time step budget is inserted directly into other budgets, datasets and even the EDM2 pipeline, and still outperforms. 

The study has three parts.
The first part focuses on intra-dataset flexibility: starting from the CIFAR--10 schedule learned at \(K=18\) in Section~\ref{subsec:cifar}, we construct schedules for other step counts via interpolation and extrapolation.
The second part tests cross-dataset transfer within the pixel-space EDM pipeline: we reuse the same CIFAR--10 schedule on AFHQv2, FFHQ, and ImageNet--64 without any additional training.
The third part further tests transfer to the EDM2 \cite{karras2024analyzing} pipeline on ImageNet--512.
This setting changes three aspects at once: it uses a more modern EDM2 backbone and sampling pipeline, operates in latent space rather than pixel space, and evaluates high-resolution image generation.
Together, these experiments assess whether the learned time parametrization captures structure that persists not only across time budgets and datasets, but also across pipelines, resolutions, and representation spaces.

In this subsection, we restrict attention to DPM, EDM, and ART-RL.
The Uniform grid is substantially worse in the corresponding image settings and is therefore omitted.

\subsubsection{Transfer across timestep counts on CIFAR--10}\label{sec:interp}
We first examine whether the ART-RL schedule learned at $K=18$ can be reused at other step counts.
All the experiments follow the same CIFAR--10 EDM-pipeline configuration as in Section~\ref{subsec:cifar}.
For ART-RL, we take the learned \(K=18\) sampling grid and generate new grids for \(K'\in\{4,6,9,12,15,20\}\) by log-linear resampling of the remaining-time values \(T-\psi\). Specifically, let \(0=\psi_0^{(K)}<\cdots<\psi_K^{(K)}=T\) denote the learned \(K\)-step grid, where the superscript indicates the step count. For \(j=0,\ldots,K'-1\), set \(r_j=j(K-1)/K'\), \(i_j=\lfloor r_j\rfloor\), and \(\alpha_j=r_j-i_j\), and define \(\psi_j^{(K')}=T-\exp\{(1-\alpha_j)\log(T-\psi_{i_j}^{(K)})+\alpha_j\log(T-\psi_{i_j+1}^{(K)})\}\), with \(\psi_{K'}^{(K')}=T\). This is log-linear in \(T-\psi\), and the same rule is used for interpolation to smaller step counts \(K'<K\) and extrapolation to larger step counts \(K'>K\).
For EDM, the timestep sequence at each $K$ is computed directly from its analytic rule.

Figure~\ref{fig:quant-overview}(e) and Table~\ref{tab:CIFAR-10-OtherN} show that the \(K=18\) ART-RL schedule transfers smoothly across different timestep counts. ART-RL still  achieves the best FID for all the reported NFEs, outperforming both EDM and DPM after interpolation and extrapolation of the learned schedule. This suggests that the learned time parametrization captures a stable allocation pattern that remains effective under changes in grid resolution.

\begin{table}[htbp]
\centering
\caption{FID versus number of function evaluations (NFE) on CIFAR--10 for interpolated and extrapolated timestep counts. }
\label{tab:CIFAR-10-OtherN}
\begin{tabular}{lrrrrrr}
\toprule
NFE & 7 & 11 & 17 & 23 & 29 & 39 \\
\midrule
DPM    & 185.63 & 10.31 & 3.52 & 2.19 & 1.94 & 1.88 \\
EDM    & 85.80 & 14.42 & 3.11 & 2.06 & 1.88 & 1.85 \\
ART-RL & \textbf{33.73} & \textbf{6.59} & \textbf{2.57} & \textbf{2.00} & \textbf{1.84} & \textbf{1.82} \\
\bottomrule
\end{tabular}
\end{table}

Additional $6\times 6$ image grids for these interpolated and extrapolated schedules are provided in Appendix \ref{app_subsec:cifar10-interp}, Figure~\ref{fig:cifar10-interp-three-in-a-row}.

\subsubsection{Cross-dataset transfer to AFHQv2, FFHQ, and ImageNet--64}\label{sec:transfer}

We next test cross-dataset transfer without retraining.
For each target dataset and time step budget, we keep the corresponding EDM pipeline unchanged, including the score network, solver configuration, and all hyperparameters, and replace only the timestep grid by the ART-RL schedule learned on CIFAR--10 in Section~\ref{subsec:cifar}.
The hand-designed baselines use their corresponding schedules at the same step counts, and the NFE accounting follows the same convention as in the CIFAR--10 experiments.

Figure~\ref{fig:quant-overview}(f)--(h), together with Table~\ref{tab:cross_dataset_transfer}, show that the learned CIFAR--10 schedule transfers successfully  to all the three datasets.
ART-RL achieves the lowest FID in every reported setting, suggesting that the learned time parametrization 
remains effective as a drop-in grid replacement across different image distributions under the same EDM pipeline.

\begin{table}[htbp]
\centering
\caption{FID versus number of function evaluations (NFE) for cross-dataset transfer. The ART-RL schedule is learned on CIFAR--10 and reused without retraining. }
\label{tab:cross_dataset_transfer}
\begin{tabular}{llrrrrrr}
\toprule
Dataset & Method & 3 & 5 & 9 & 13 & 19 & 35 \\
\midrule
\multirow{3}{*}{AFHQv2}
& DPM    & 375.76 & 321.59 & 67.64 & 9.77 & 3.44 & 2.15 \\
& EDM    & 375.76 & 266.02 & 27.88 & 7.56 & 2.99 & 2.11 \\
& ART-RL & \textbf{243.48} & \textbf{194.79} & \textbf{20.48} & \textbf{6.12} & \textbf{2.85} & \textbf{2.07} \\
\midrule
\multirow{3}{*}{FFHQ}
& DPM    & 466.76 & 340.51 & 113.87 & 15.94 & 5.25 & 2.66 \\
& EDM    & 466.76 & 344.76 & 57.13 & 15.87 & 5.26 & 2.73 \\
& ART-RL & \textbf{305.97} & \textbf{240.38} & \textbf{35.73} & \textbf{11.08} & \textbf{4.31} & \textbf{2.57} \\
\midrule
\multirow{3}{*}{ImageNet--64}
& DPM    & 437.42 & 233.35 & 60.48 & 12.31 & 4.46 & 2.66 \\
& EDM    & 437.42 & 248.32 & 35.32 & 8.18 & 3.68 & 2.57 \\
& ART-RL & \textbf{147.21} & \textbf{108.47} & \textbf{29.49} & \textbf{7.01} & \textbf{3.62} & \textbf{2.56} \\
\bottomrule
\end{tabular}
\end{table}

\subsubsection{Transfer to EDM2 on ImageNet--512}\label{sec:edm2_transfer}

We further evaluate whether the learned ART-RL schedule transfers beyond the pixel-space EDM pipeline.
Specifically, we test ImageNet--512 under the EDM2 pipeline, which uses a latent-space diffusion model rather than directly operating in pixel space.
We use the extra-small (XS) EDM2 ImageNet--512 model.
As before, we keep the score model, solver configuration, and all hyperparameters fixed, and compare only the timestep schedules.

Figure~\ref{fig:quant-overview}(i) and Table~\ref{tab:imagenet512_edm2} show that ART-RL continues to outperform both EDM and DPM under the EDM2 pipeline.
The same conclusion holds also under Inception Score: ART-RL attains higher scores than both EDM and DPM at every reported budget; see Appendix~\ref{app_subsec:imagenet512-is}, Table~\ref{tab:imagenet512_edm2_is}.
This provides the strongest transfer test in our study: the schedule learned from CIFAR--10 under the EDM pipeline remains effective when moved to a different backbone, a different sampling pipeline, a latent representation, and a substantially higher image resolution.
Importantly, this EDM2 experiment still changes only the timestep grid; the pretrained EDM2 model and the rest of the sampling pipeline are left intact.
Thus, this ImageNet--512 experiment answers affirmatively whether ART-RL remains useful in a modern high-resolution diffusion setting.

\begin{table}[htbp]
\centering
\caption{FID versus number of function evaluations (NFE) on ImageNet--512 under the EDM2 pipeline using the XS model. }
\label{tab:imagenet512_edm2}
\begin{tabular}{lrrrrrr}
\toprule
NFE & 3 & 5 & 9 & 13 & 19 & 35 \\
\midrule
DPM    & 392.19 & 297.26 & 99.38 & 17.86 & 5.92 & 3.82 \\
EDM    & 392.19 & 213.45 & 47.33 & 12.91 & 5.19 & 3.74 \\
ART-RL & \textbf{256.13} & \textbf{176.50} & \textbf{26.78} & \textbf{9.73} & \textbf{4.94} & \textbf{3.73} \\
\bottomrule
\end{tabular}
\end{table}

\begin{figure}[H]
  \centering
  \begin{subfigure}[t]{0.32\textwidth}
    \centering
    \includegraphics[width=\linewidth]{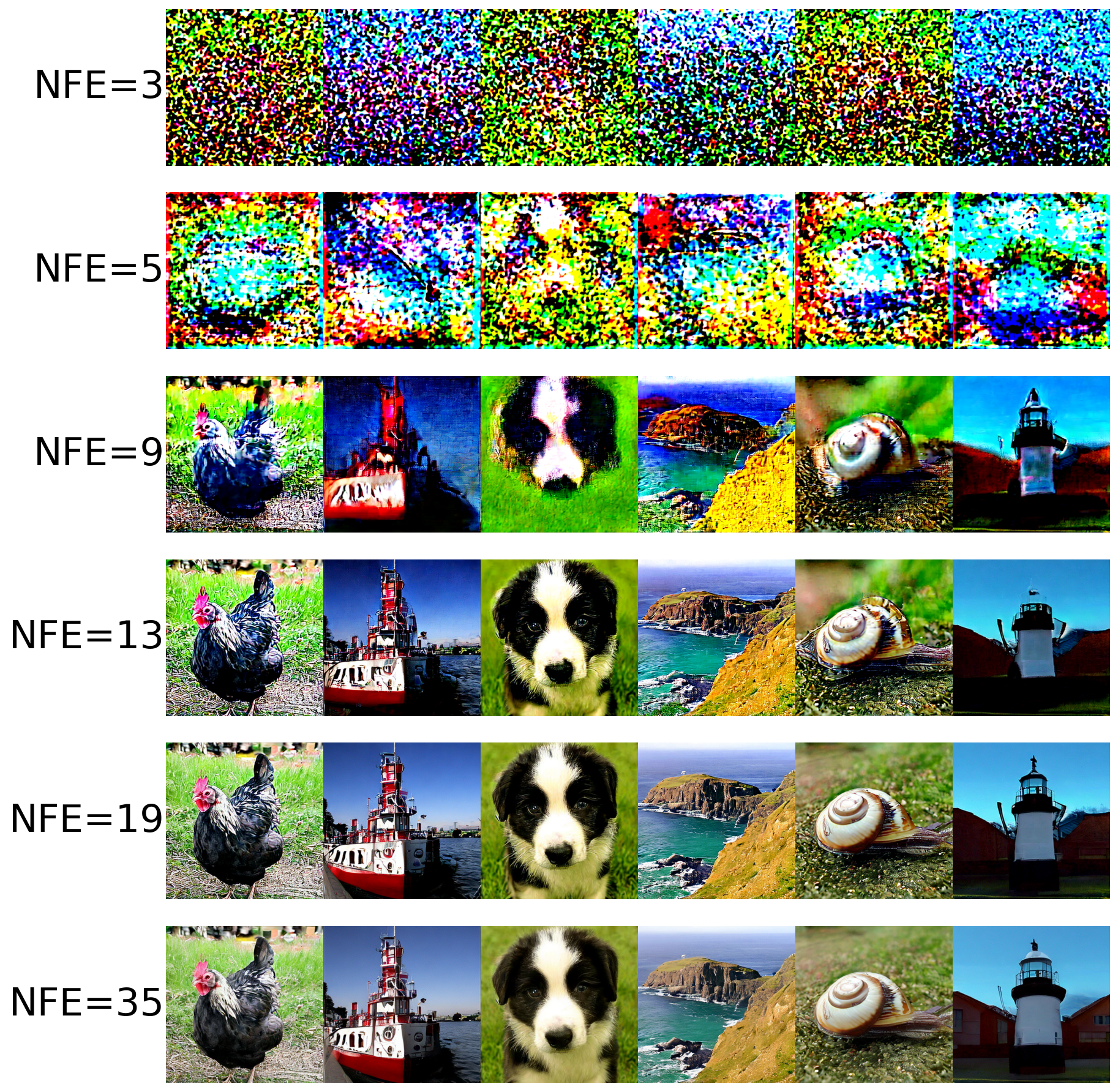}
    \caption{DPM}
  \end{subfigure}\hfill
  \begin{subfigure}[t]{0.32\textwidth}
    \centering
    \includegraphics[width=\linewidth]{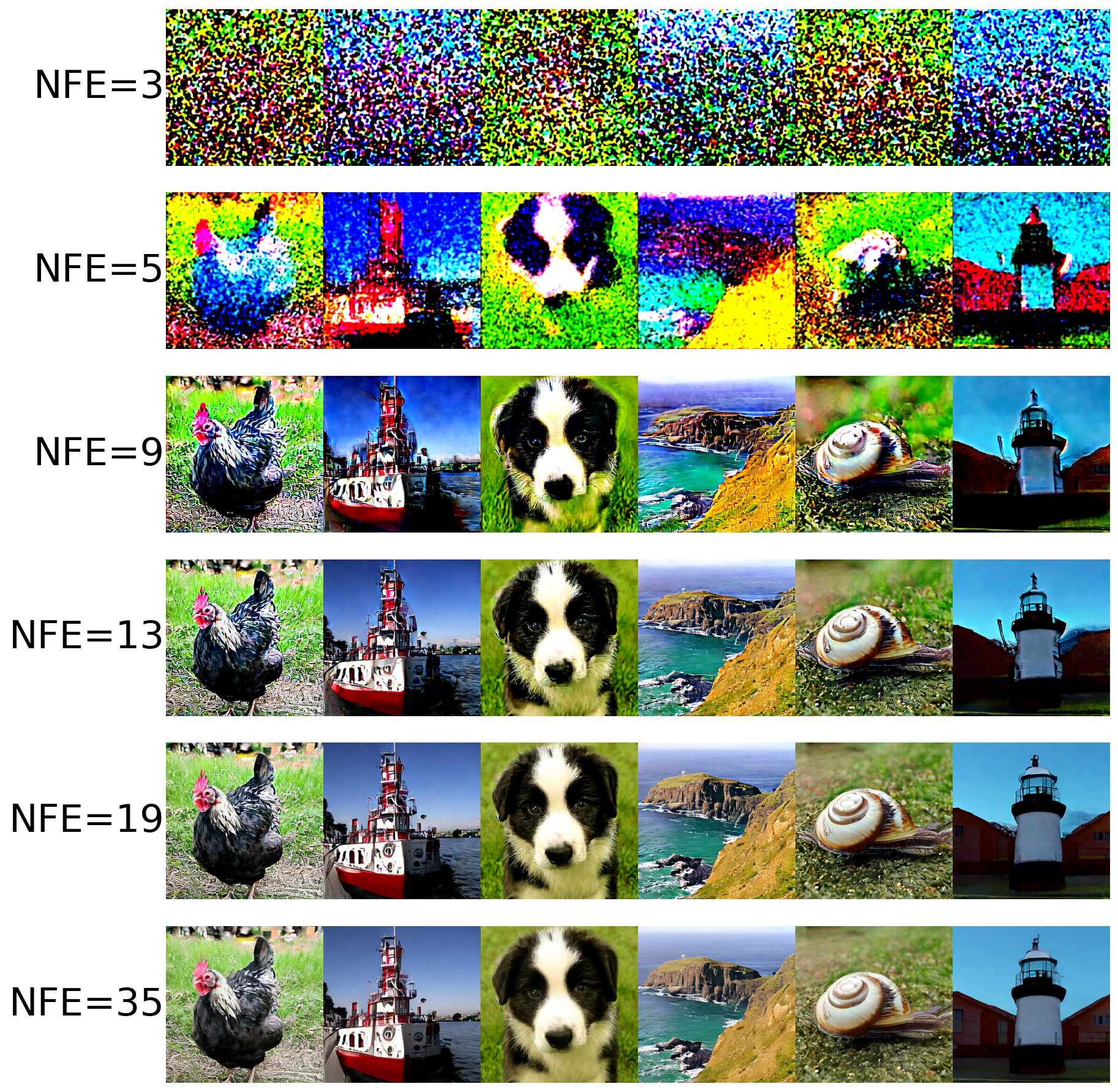}
    \caption{EDM}
  \end{subfigure}\hfill
  \begin{subfigure}[t]{0.32\textwidth}
    \centering
    \includegraphics[width=\linewidth]{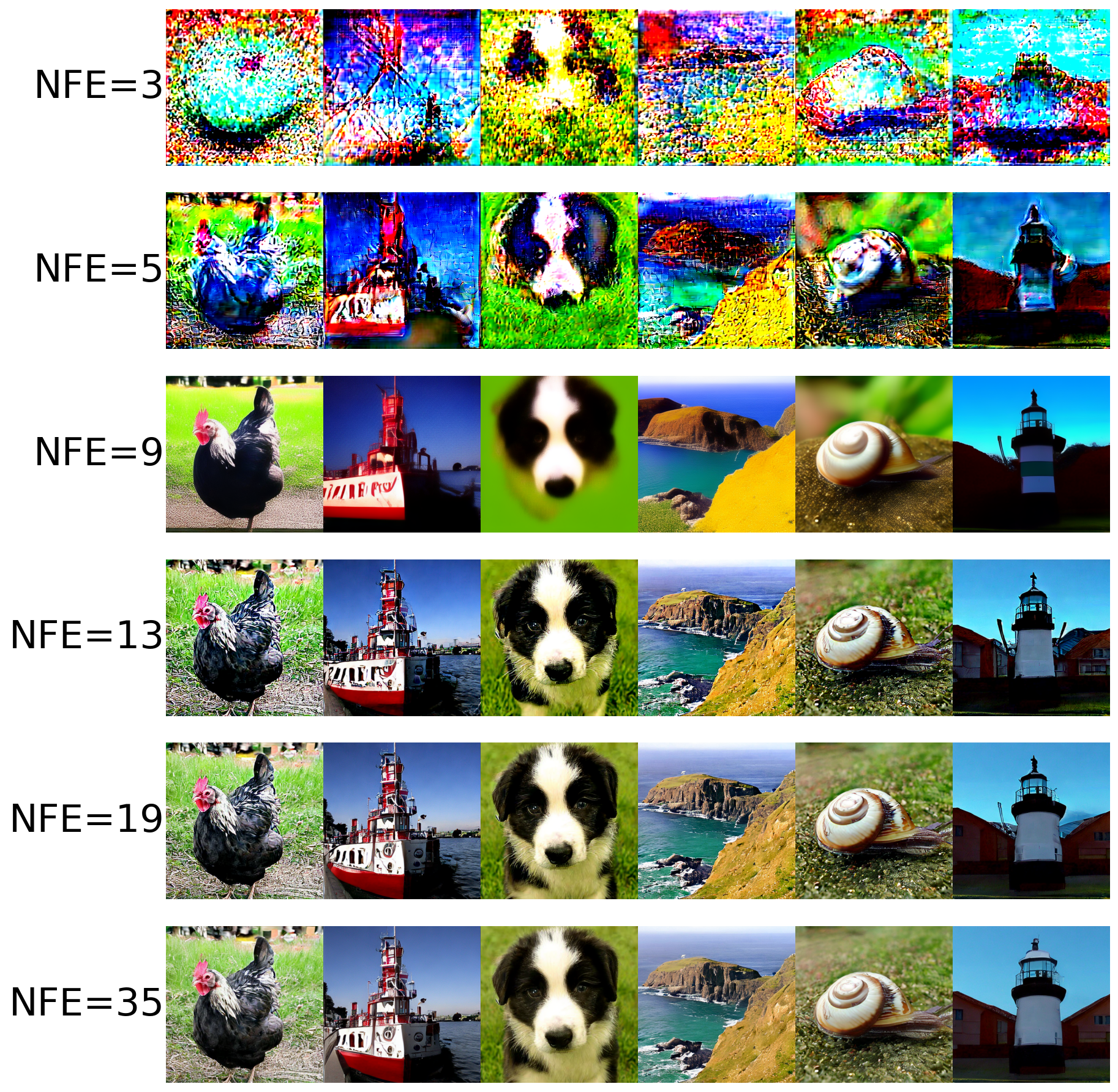}
    \caption{ART-RL}
  \end{subfigure}
  \caption{ImageNet--512 samples under the EDM2 pipeline for the three schedules (DPM, EDM, ART-RL). Each panel shows samples at increasing NFEs.}
  \label{fig:imagenet512-edm2-three-in-a-row}
\end{figure}


\section{Conclusion}
\label{sc6}

This paper introduces and develops ART, a control-theoretic framework,  for timestep allocation in score-based diffusion sampling. While ART features a deterministic optimal control problem, solving it via the conventional HJB equations is insurmountable due to the typical (ultra) high dimensions of the state space with most generative AI tasks. We remedy the problem by introducing ART-RL, which is an auxiliary problem with a particular class of Gaussian policies, and establishing a precise relationship between the two problems. The ART-RL problem can be solved algorithmically {\it \`a la} the recently developed continuous-time reinforcement learning theory including policy evaluation and policy improvement. It is noteworthy that 
here ART-RL serves as a technical device to solve ART, rather than a prescription for exploration due to model uncertainty.

Existing time allocation schedules are mostly hand-crafted and {\it ad hoc} to the underlying tasks. As a result, they do not necessarily work across different types of tasks.  For instance, it is shown in this paper that EDM, the state-of-the-art pipeline for image generation, works poorly for a very simple one-dimensional example with a known score. 
By contrast,  ART is a principled approach premised upon a rigorous  theory encompassing general  diffusion generative jobs. This is validated by our empirical study: 
the learned time schedules improve sample quality at matched evaluation budgets across different tasks and numerical solvers, including Euler, Heun, and RK4.
They also generalize across timestep counts, datasets, sampling pipelines, and representation spaces. 
In particular, the experimentally demonstrated transferability of ART has  important practical implications.  Once a schedule is learned and distilled in image generation, ART  has the same inference-time form as EDM or DPM; hence its offline training cost is amortized across many downstream sampling settings.

Remarks on possible future directions are in order. 
Our analysis is restricted to probability flow ODE sampling, and extending the formulation to SDE samplers may lead to different allocation behaviors and new theoretical questions.
The current objective is motivated by an Euler local error surrogate; it would be interesting to investigate alternative criteria, including surrogates aligned with higher-order integrators, to better connect the control principle to practical solvers.
Finally, while distillation to time-only schedules is effective in our experiments, it removes state dependence as a consequence, and it is not yet clear if/when richer state-conditioned schedules may provide additional benefits.
Overall, ART and ART-RL offer a first step toward a systematic, theory-grounded design of timestep schedules for diffusion-based generative modeling.

\section*{Acknowledgments}
Yilie Huang acknowledges financial support from the Start-up Fund of The Hong Kong Polytechnic University (Project ID: P0063874).
Wenpin Tang is supported by NSF CAREER Award DMS-2538791 and the Tang
Family Assistant Professorship.
Xun Yu Zhou is supported by the Nie Center for Intelligent Asset Management at Columbia University. 
Wenpin Tang and Xun Yu Zhou are also part of a Columbia-CityU/HK collaborative project that is supported by the InnoHK Initiative, The Government of the HKSAR, and the AIFT Lab.

\clearpage
\newpage
\bibliographystyle{plainnat}
\bibliography{ref}

\newpage
\begin{appendix}
\section{Additional Numerical Results}

\subsection{Results for One–Dimensional Study}\label{app_subsec:oned}
Figure~\ref{fig:theta_mean_CI99} shows the empirical mean of the executed control \(\theta\) together with the 99 percent confidence band computed from the last \(10{,}000\) trajectories in the one–dimensional experiment reported in Subsection \ref{subsec:oned}.
As in the main text, each trajectory is normalized so that the induced terminal time satisfies \(\psi(T)=T\).
The confidence band is extremely narrow and visually indistinguishable from the mean curve, confirming that in this setting the learned control exhibits negligible variability across trajectories and can be treated as an effectively deterministic function of time.

\begin{figure}[htbp]
  \centering
  \includegraphics[width=0.7\linewidth]{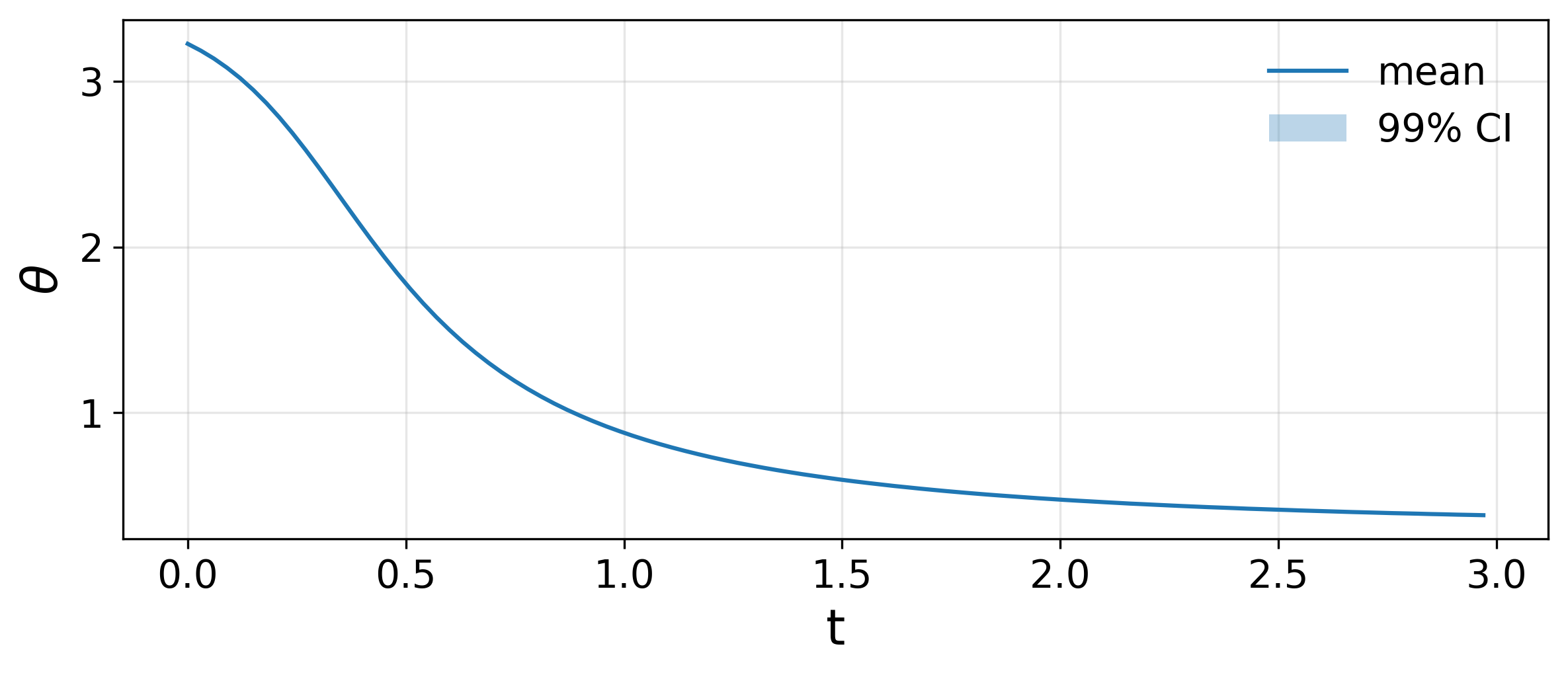}
  \caption{Empirical mean of the executed control \(\theta\) and its 99 percent confidence interval, based on the last \(10{,}000\) trajectories in the one–dimensional experiment.}
  \label{fig:theta_mean_CI99}
\end{figure}

\subsection{Additional ImageNet--512 Inception Score Results}\label{app_subsec:imagenet512-is}

Table~\ref{tab:imagenet512_edm2_is} reports the Inception Score for the ImageNet--512 EDM2 experiment.

\begin{table}[H]
\centering
\caption{Inception Score versus NFE on ImageNet--512 under the EDM2 pipeline using the XS model. }
\label{tab:imagenet512_edm2_is}
\begin{tabular}{lrrrrrr}
\toprule
NFE & 3 & 5 & 9 & 13 & 19 & 35 \\
\midrule
DPM    & 1.58 & 1.82 & 20.17 & 106.78 & 175.94 & 205.48 \\
EDM    & 1.58 & 4.19 & 53.10 & 129.75 & 186.40 & 206.49 \\
ART-RL & \textbf{4.15} & \textbf{7.50} & \textbf{79.68} & \textbf{147.40} & \textbf{188.28} & \textbf{207.37} \\
\bottomrule
\end{tabular}
\end{table}

\subsection{Additional qualitative results for CIFAR--10}\label{app_subsec:cifar10}

This appendix collects qualitative grids for CIFAR--10 that complement the quantitative results in Section~\ref{subsec:cifar_euler} (Euler ablation) and Section~\ref{sec:interp} (interpolation and extrapolation across timestep counts).

\subsubsection{Euler ablation grids}\label{app_subsec:cifar10-euler}
Figure~\ref{fig:cifar10-euler-three-in-a-row} shows qualitative results for the Euler ablation in Section~\ref{subsec:cifar_euler}, comparing Uniform, EDM, and ART-RL under the same Euler-based sampler.
The ART-RL schedule tends to produce recognizable structure earlier at small budgets, consistent with the FID values reported in the main text.

\begin{figure}[H]
  \centering
  \begin{subfigure}[t]{0.24\textwidth}
    \centering
    \includegraphics[width=\linewidth]{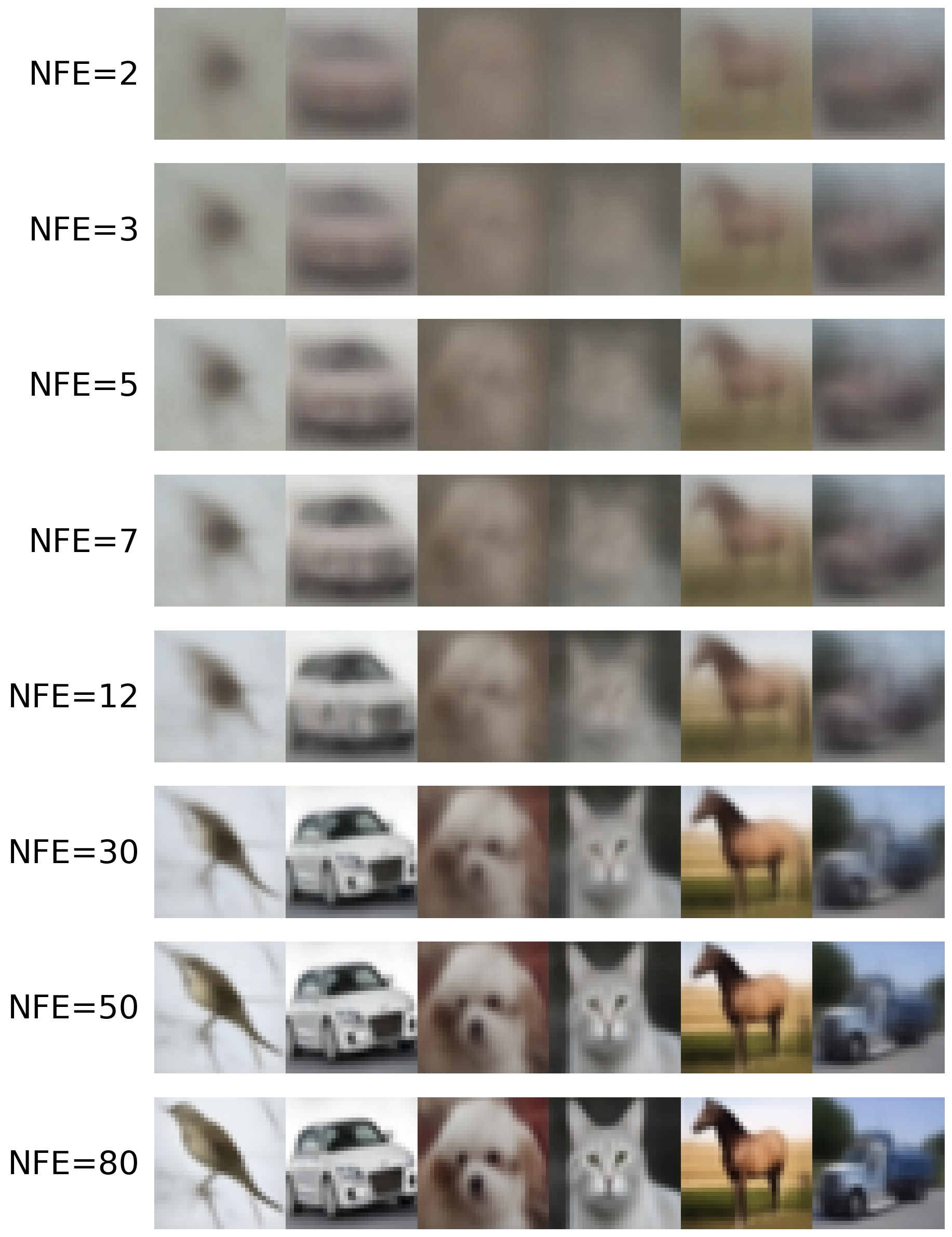}
    \caption{Uniform}
  \end{subfigure}\hfill
  \begin{subfigure}[t]{0.24\textwidth}
    \centering
    \includegraphics[width=\linewidth]{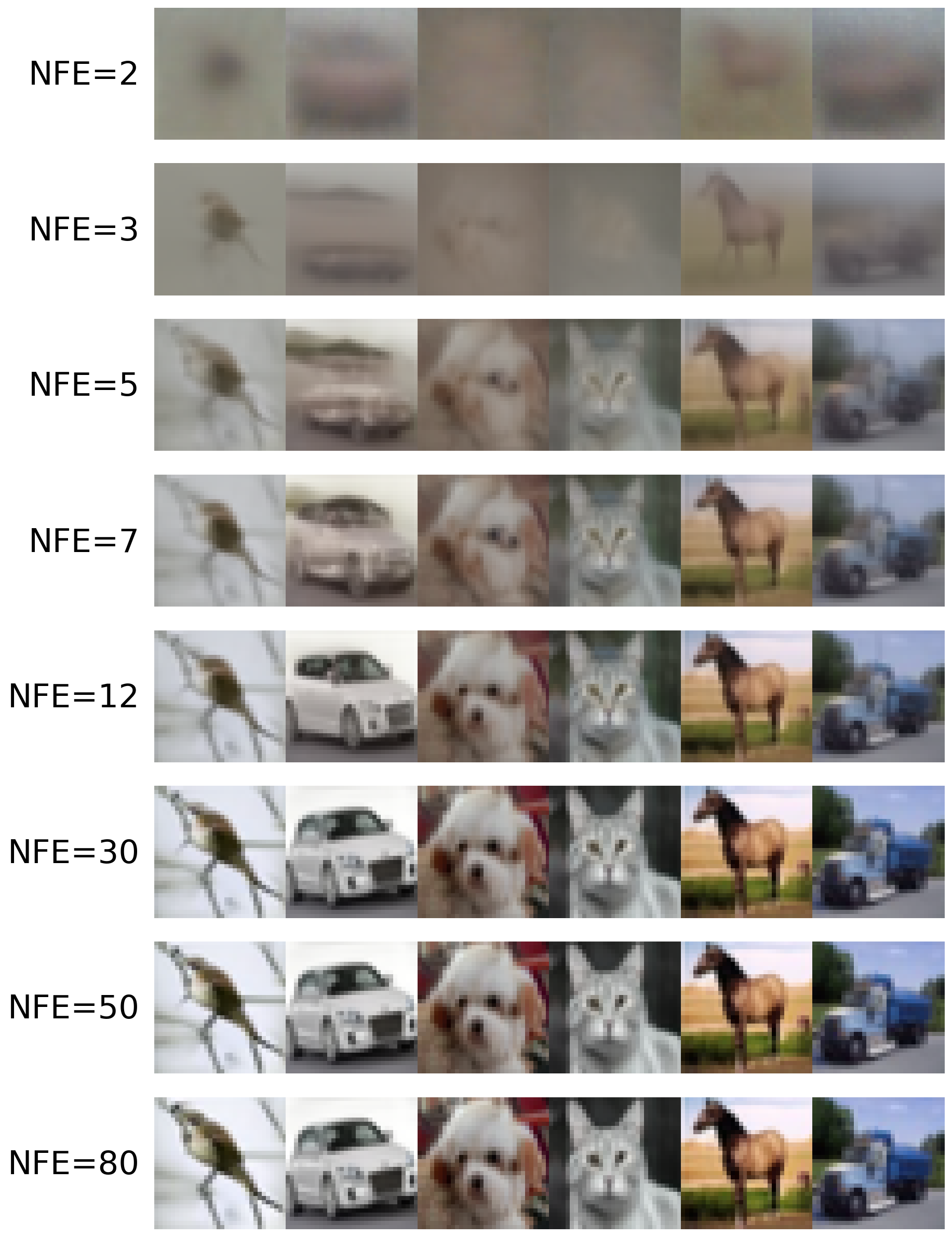}
    \caption{DPM}
  \end{subfigure}\hfill
  \begin{subfigure}[t]{0.24\textwidth}
    \centering
    \includegraphics[width=\linewidth]{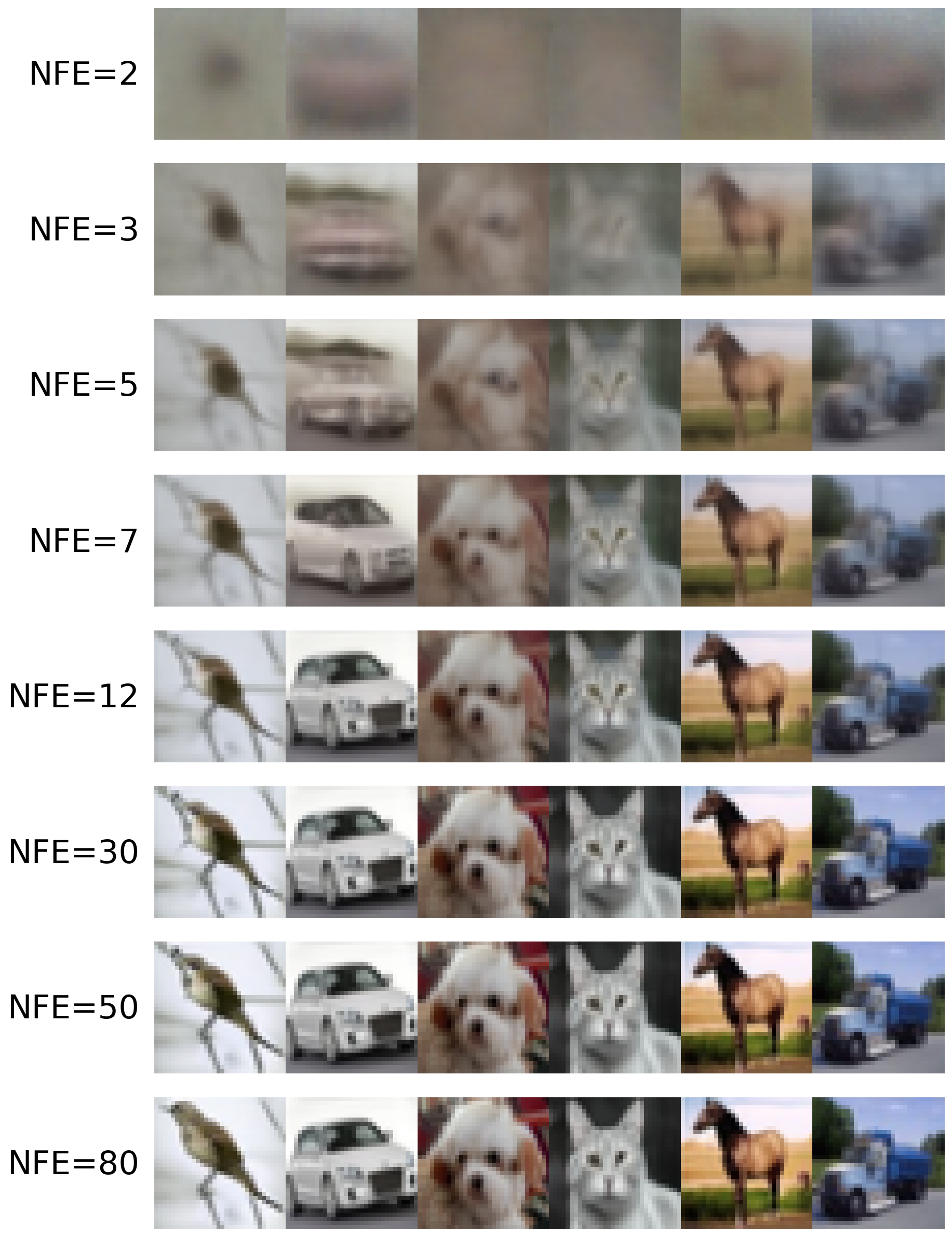}
    \caption{EDM}
  \end{subfigure}\hfill
  \begin{subfigure}[t]{0.24\textwidth}
    \centering
    \includegraphics[width=\linewidth]{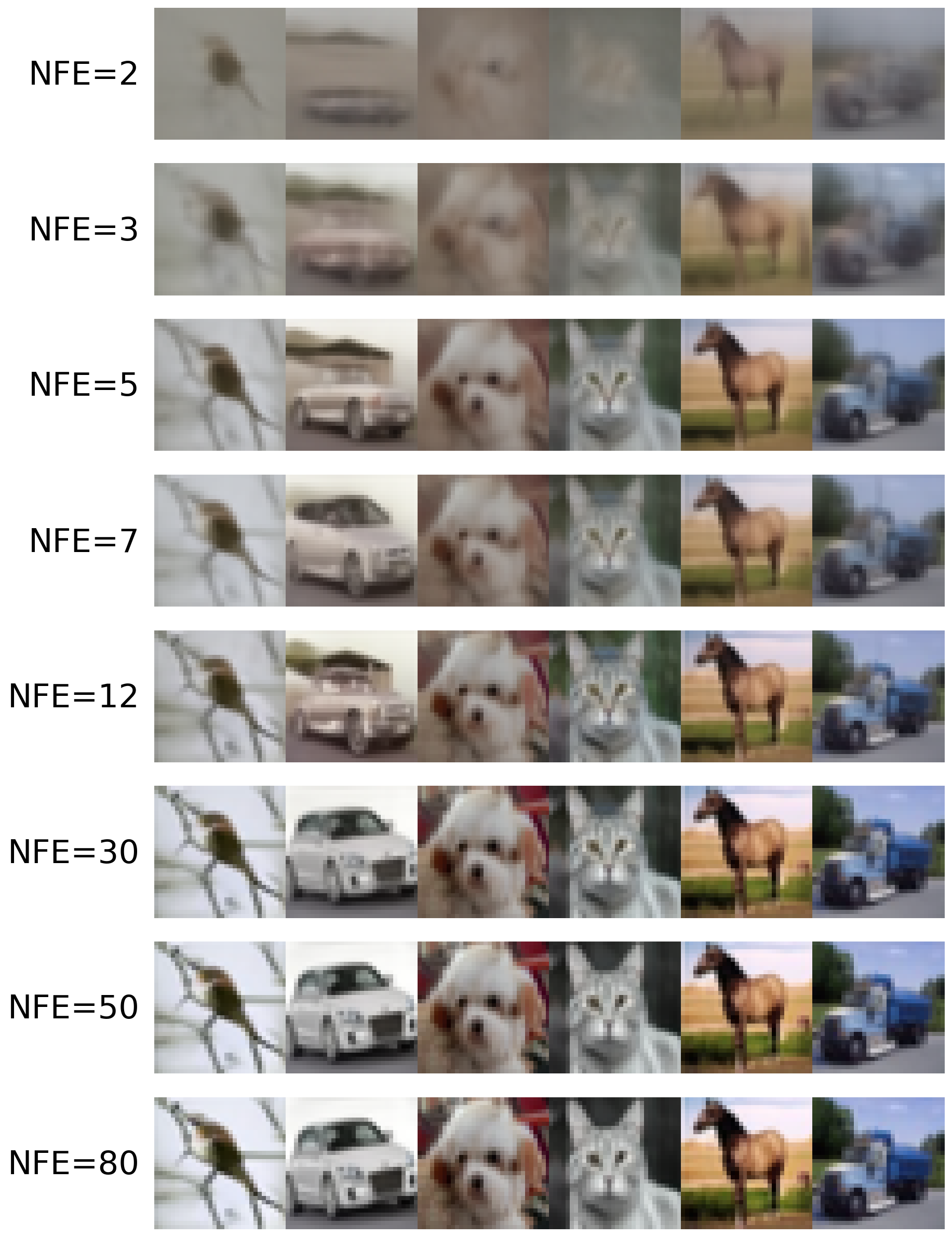}
    \caption{ART-RL}
  \end{subfigure}
  \caption{CIFAR--10 samples across evaluation budgets under Euler updates. Each panel shows a $8\times 6$ grid where rows correspond to increasing NFE.}
  \label{fig:cifar10-euler-three-in-a-row}
\end{figure}

\subsubsection{Interpolation and extrapolation grids}\label{app_subsec:cifar10-interp}
Figure~\ref{fig:cifar10-interp-three-in-a-row} reports qualitative results for the interpolation and extrapolation study in Section~\ref{sec:interp}.
We reuse the schedule learned at $K=18$ and construct grids for other timestep counts via log-linear interpolation and extrapolation, while DPM and EDM are computed from their analytic rules at each $K$.

\begin{figure}[H]
  \centering
  \begin{subfigure}[t]{0.32\textwidth}
    \centering
    \includegraphics[width=\linewidth]{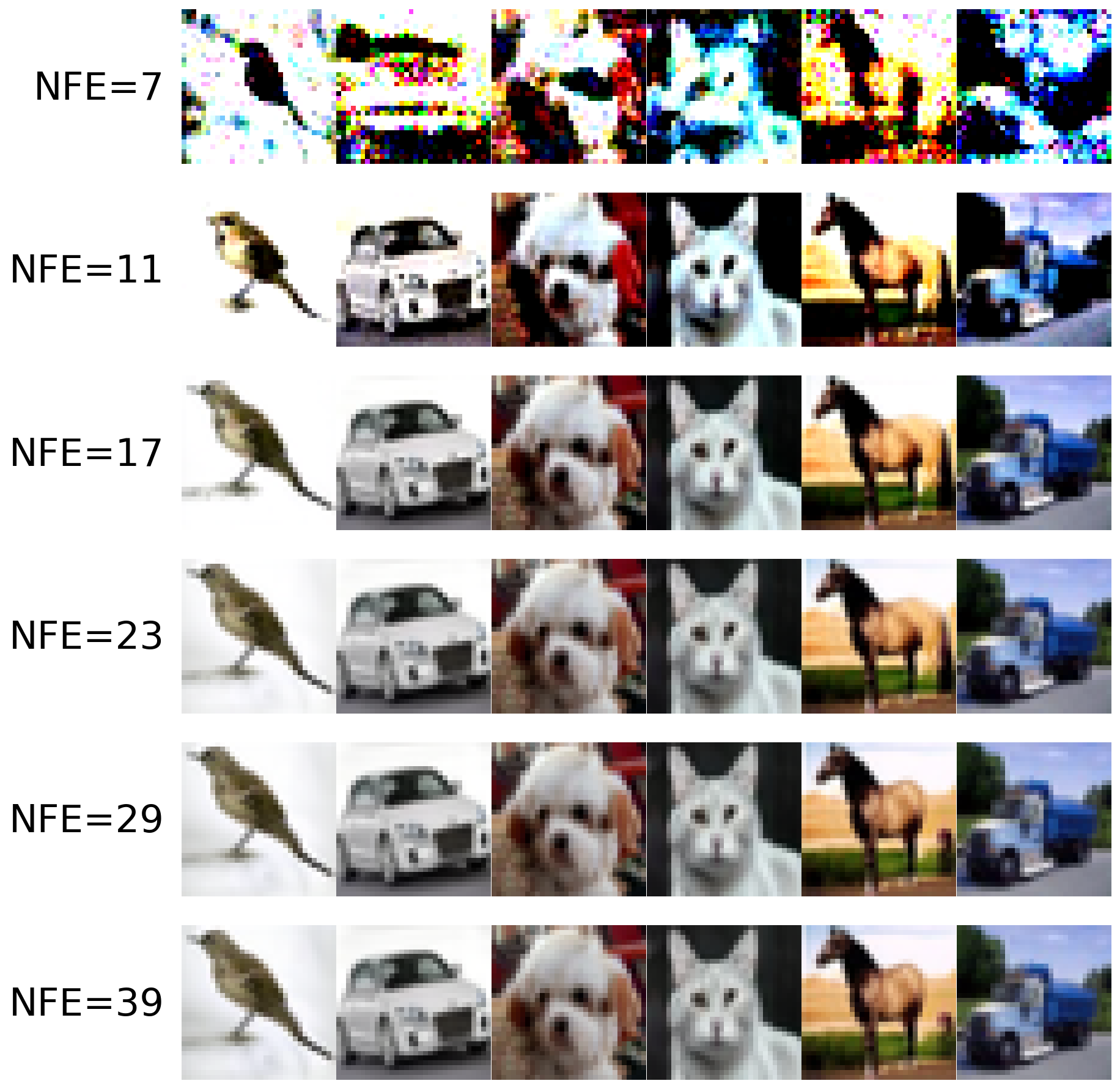}
    \caption{DPM}
  \end{subfigure}\hfill
  \begin{subfigure}[t]{0.32\textwidth}
    \centering
    \includegraphics[width=\linewidth]{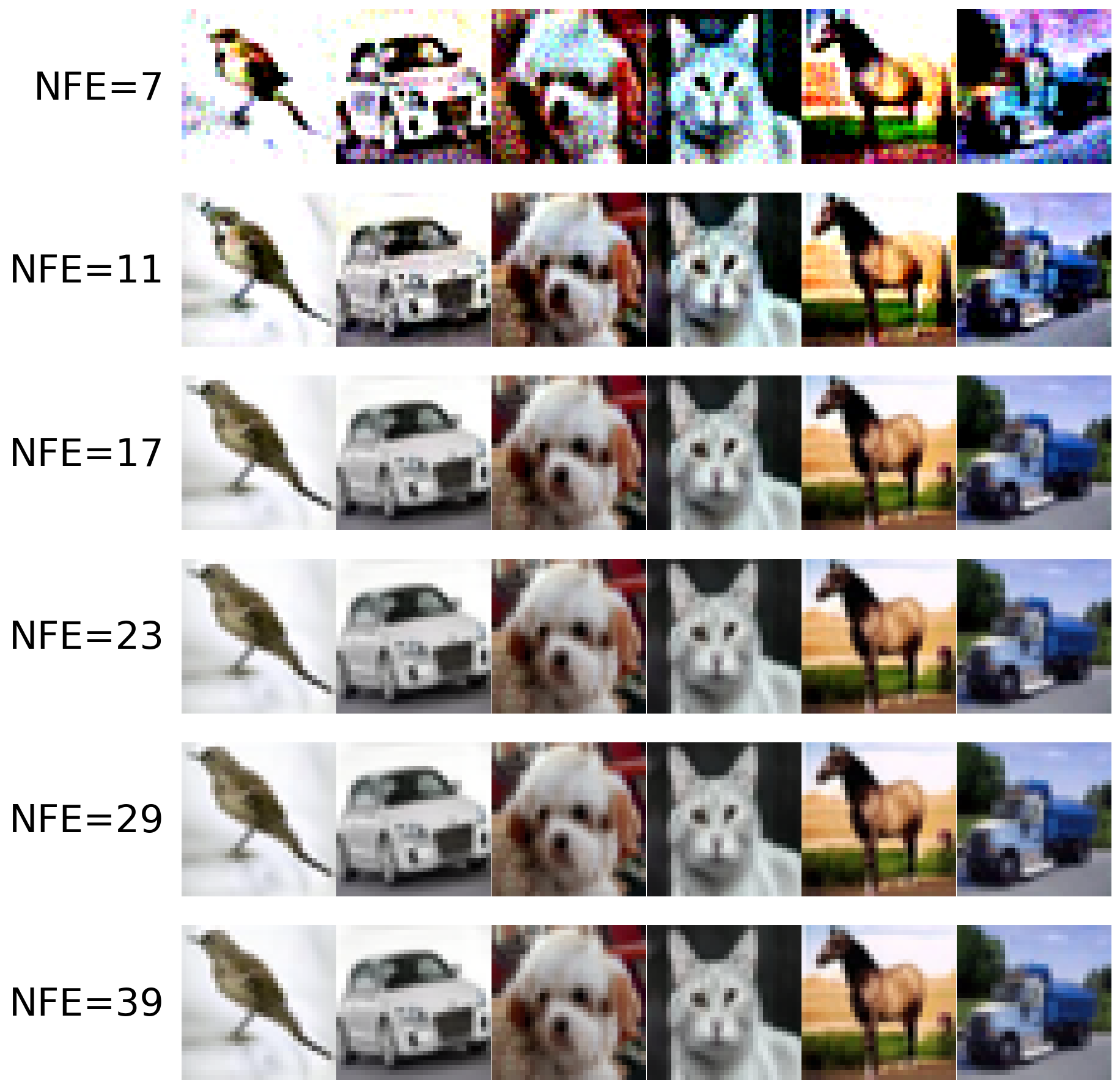}
    \caption{EDM}
  \end{subfigure}\hfill
  \begin{subfigure}[t]{0.32\textwidth}
    \centering
    \includegraphics[width=\linewidth]{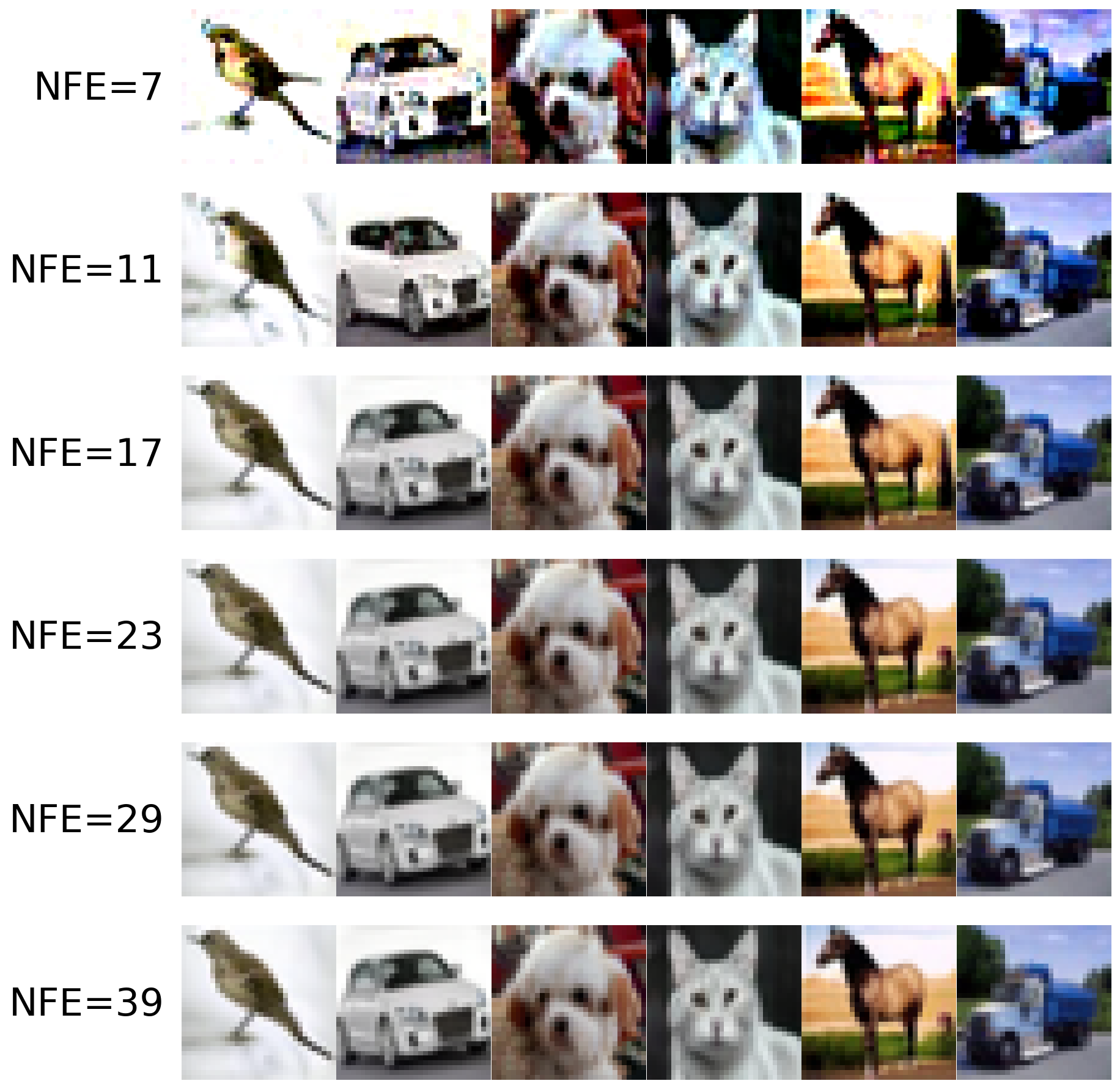}
    \caption{ART-RL}
  \end{subfigure}
  \caption{CIFAR--10 samples across evaluation budgets for interpolated and extrapolated timestep counts. Each panel shows a $6\times 6$ grid where rows correspond to increasing NFE.}
  \label{fig:cifar10-interp-three-in-a-row}
\end{figure}

\subsection{Additional qualitative results for MNIST}\label{app_subsec:mnist-qualitative}

Figure~\ref{fig:mnist-three-in-a-row} provides qualitative grids for the MNIST small-model experiment in Section~\ref{subsec:mnist}.
The DPM grid is included together with Uniform, EDM, and ART-RL so that the visual comparison matches the quantitative table in the main text.

\begin{figure}[H]
  \centering
  \begin{subfigure}[t]{0.24\textwidth}
    \centering
    \includegraphics[width=\linewidth]{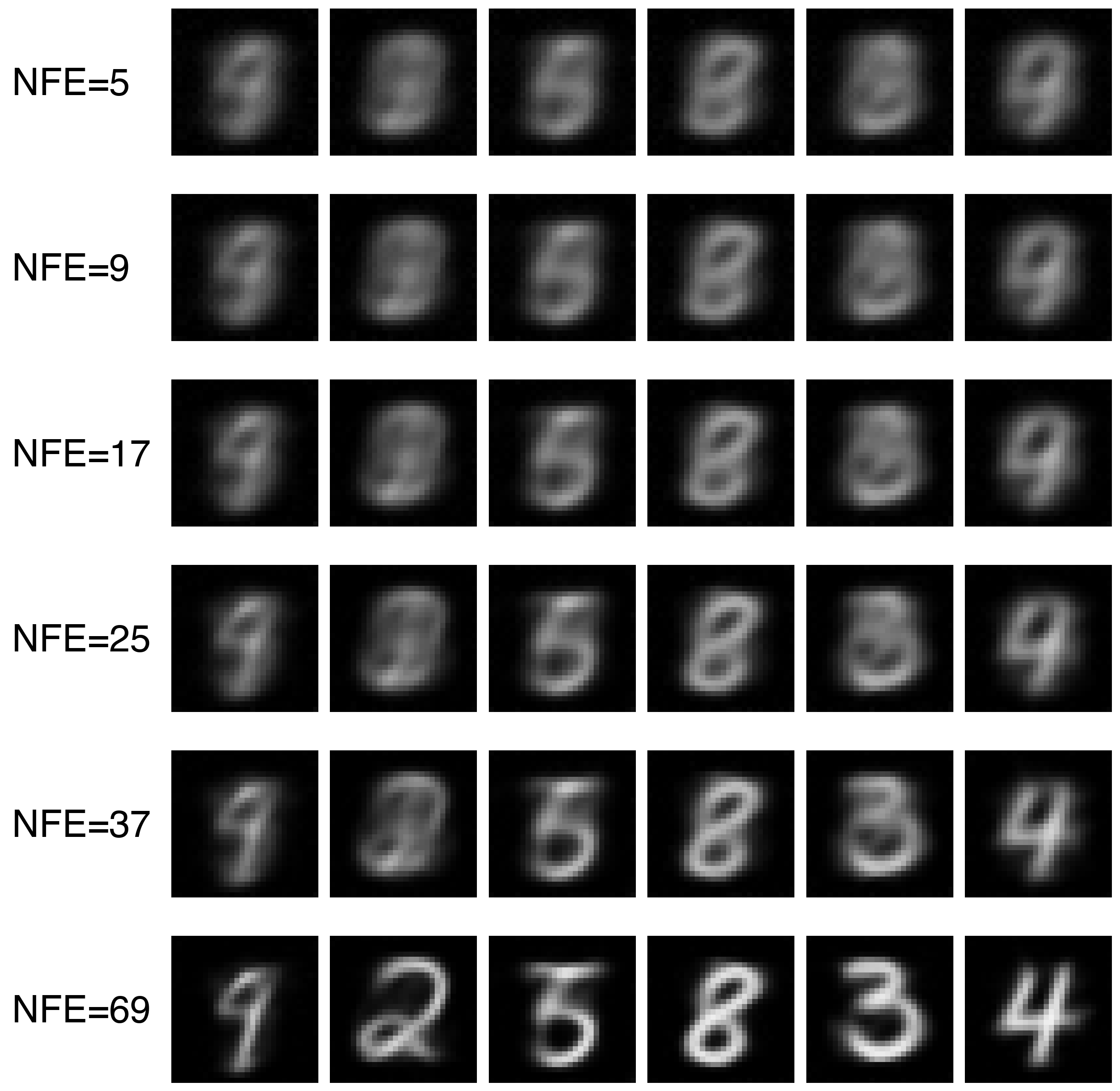}
    \caption{Uniform}
  \end{subfigure}\hfill
  \begin{subfigure}[t]{0.24\textwidth}
    \centering
    \includegraphics[width=\linewidth]{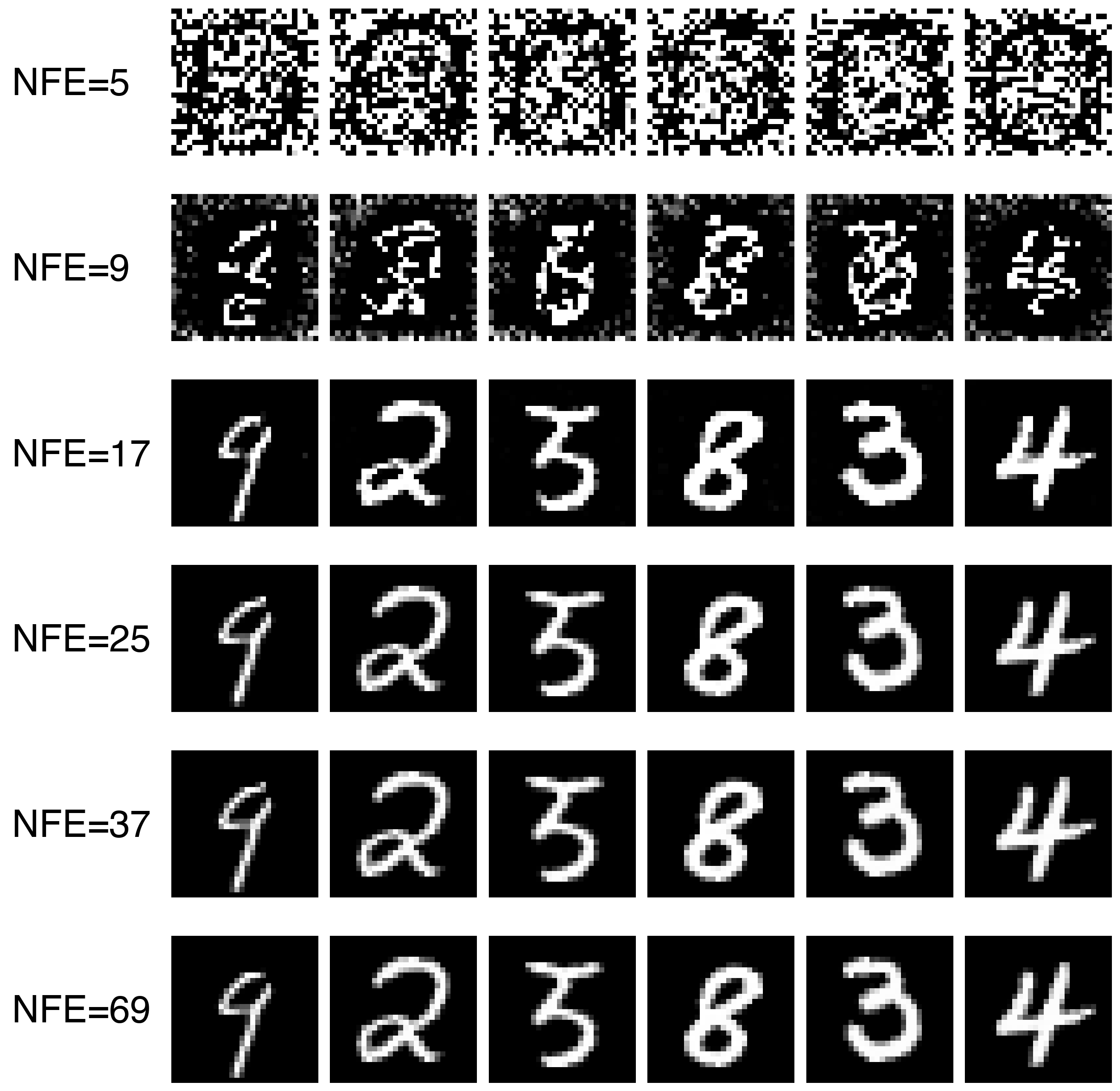}
    \caption{DPM}
  \end{subfigure}\hfill
  \begin{subfigure}[t]{0.24\textwidth}
    \centering
    \includegraphics[width=\linewidth]{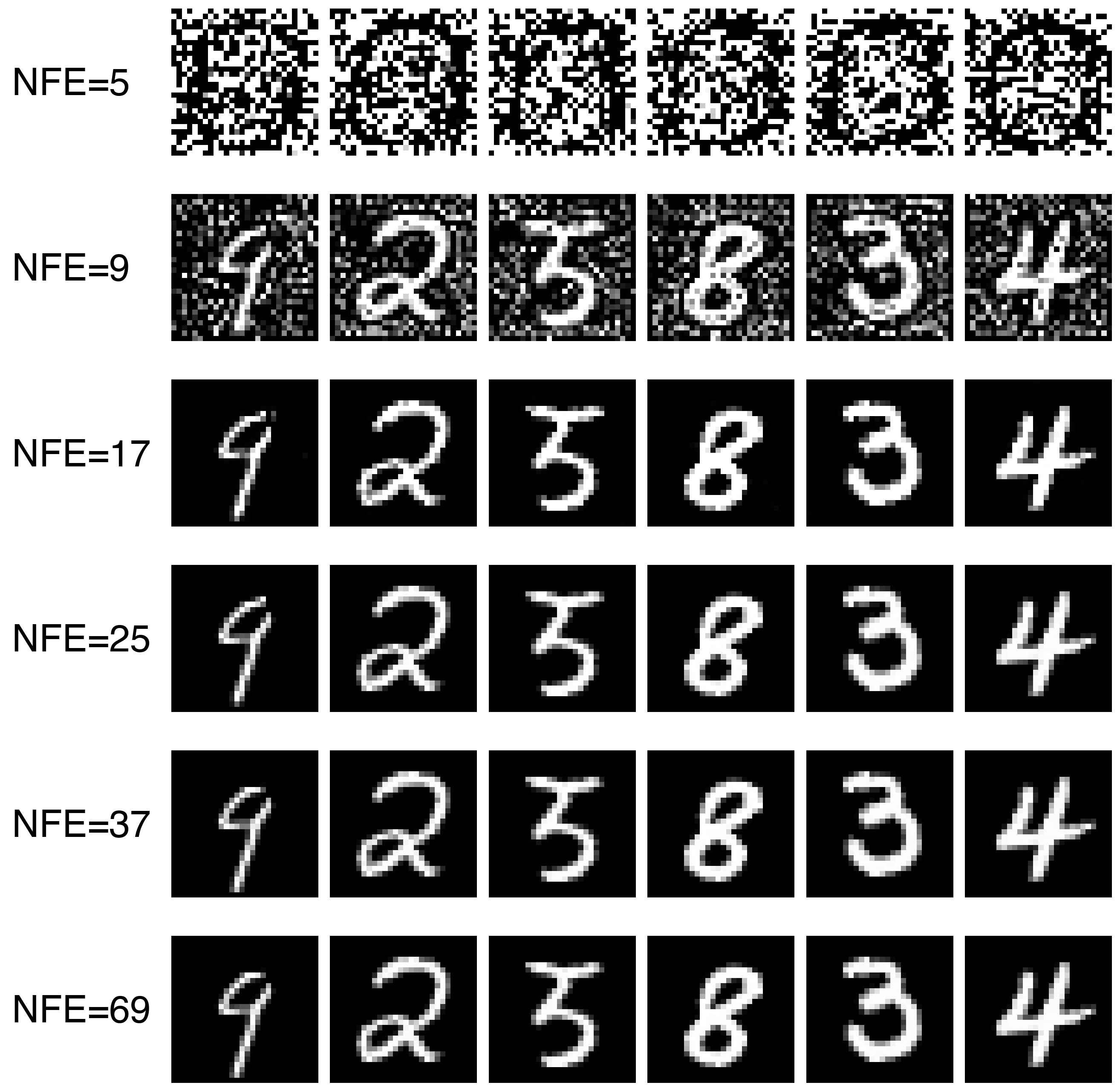}
    \caption{EDM}
  \end{subfigure}\hfill
  \begin{subfigure}[t]{0.24\textwidth}
    \centering
    \includegraphics[width=\linewidth]{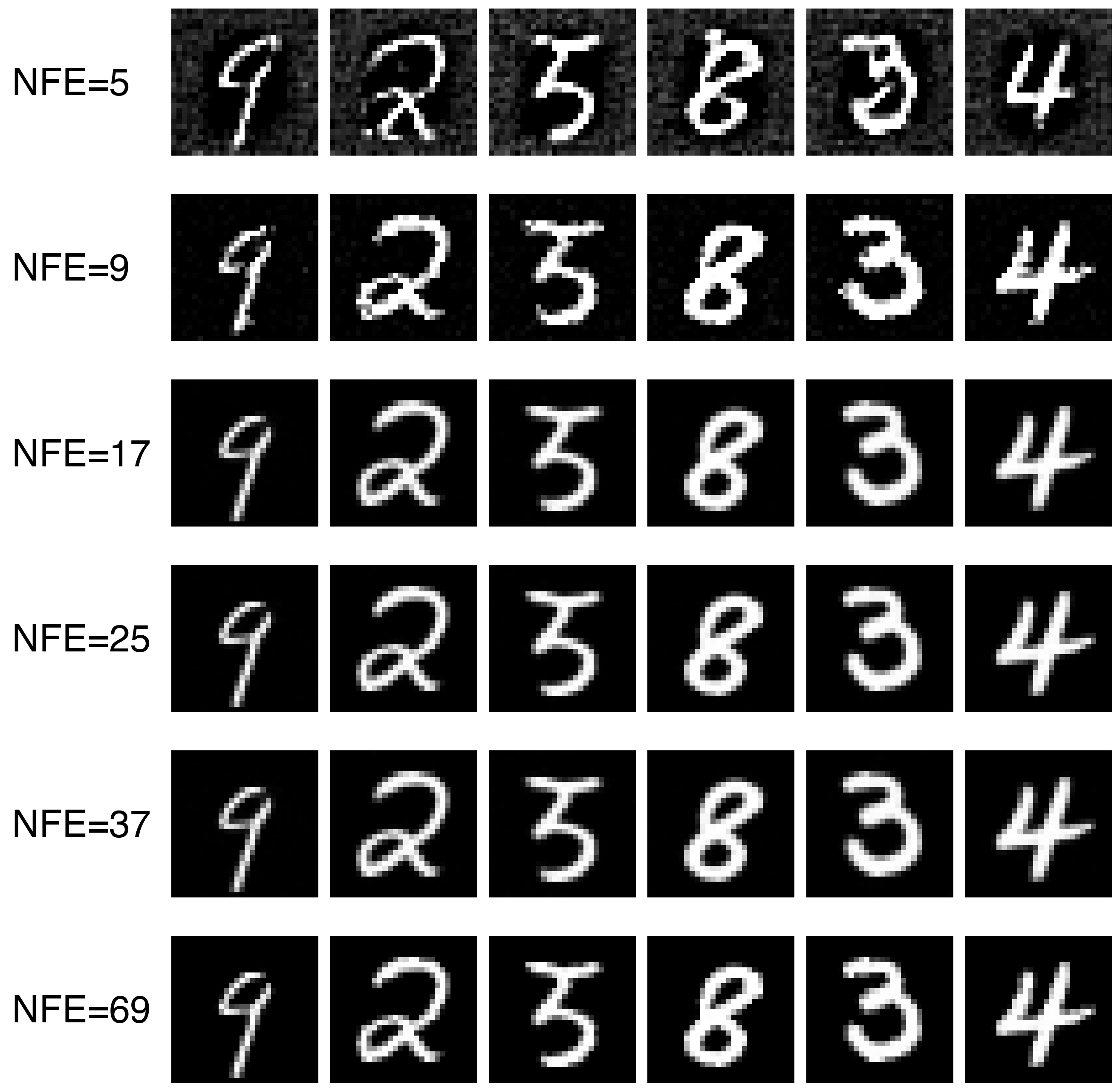}
    \caption{ART-RL}
  \end{subfigure}
  \caption{MNIST samples across timesteps for the four schedules (Uniform, DPM, EDM, ART-RL). Each panel shows a \(6\times 6\) grid where rows correspond to increasing NFE.}
  \label{fig:mnist-three-in-a-row}
\end{figure}

\subsection{Additional qualitative transfer results}\label{app_subsec:transfer-grids}

Figures~\ref{fig:afhqv2-three-in-a-row}--\ref{fig:imagenet-three-in-a-row} provide qualitative grids for the cross-dataset transfer experiments in Section~\ref{sec:transfer}.

\begin{figure}[H]
  \centering
  \begin{subfigure}[t]{0.32\textwidth}
    \centering
    \includegraphics[width=\linewidth]{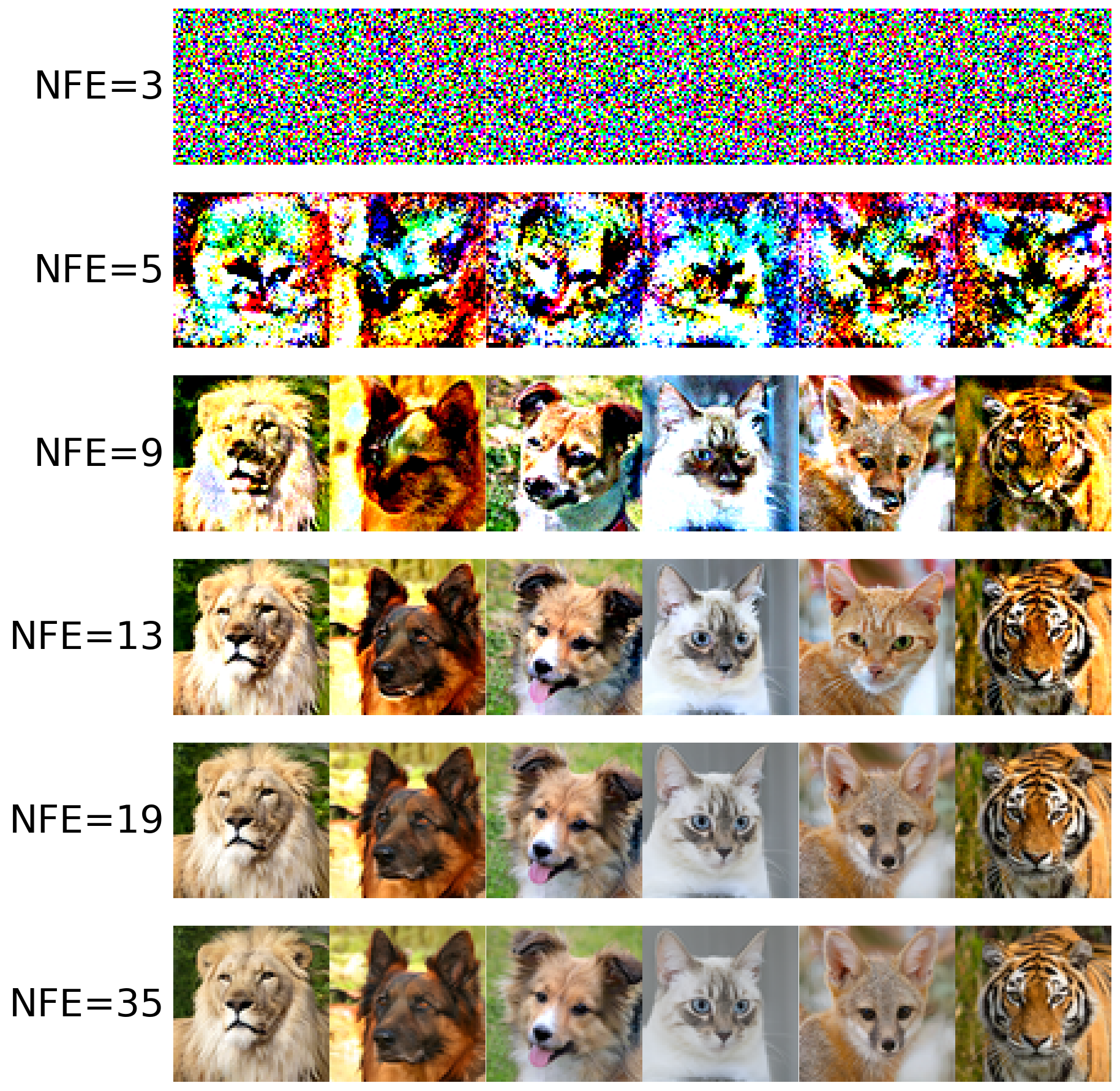}
    \caption{DPM}
  \end{subfigure}\hfill
  \begin{subfigure}[t]{0.32\textwidth}
    \centering
    \includegraphics[width=\linewidth]{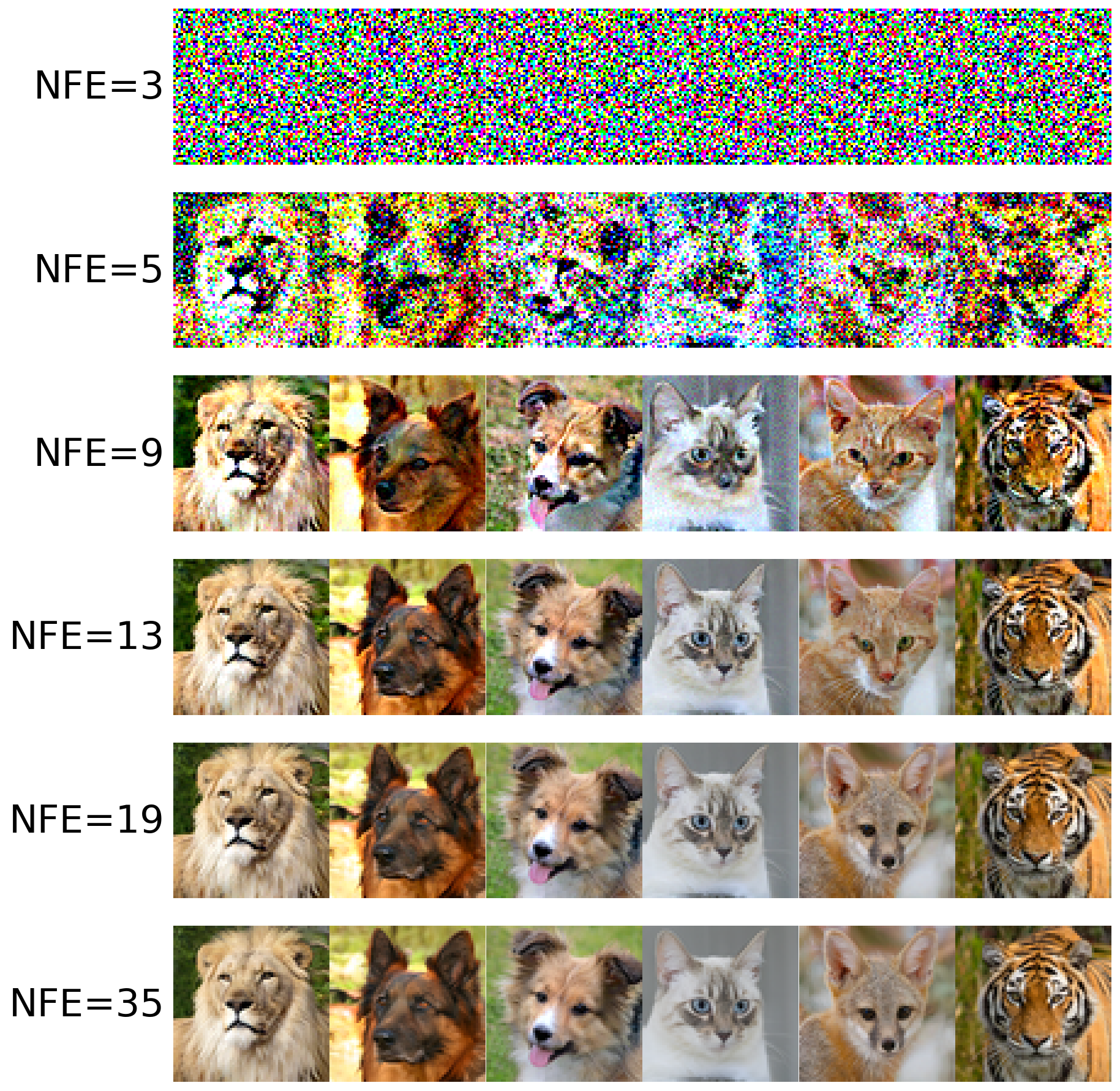}
    \caption{EDM}
  \end{subfigure}\hfill
  \begin{subfigure}[t]{0.32\textwidth}
    \centering
    \includegraphics[width=\linewidth]{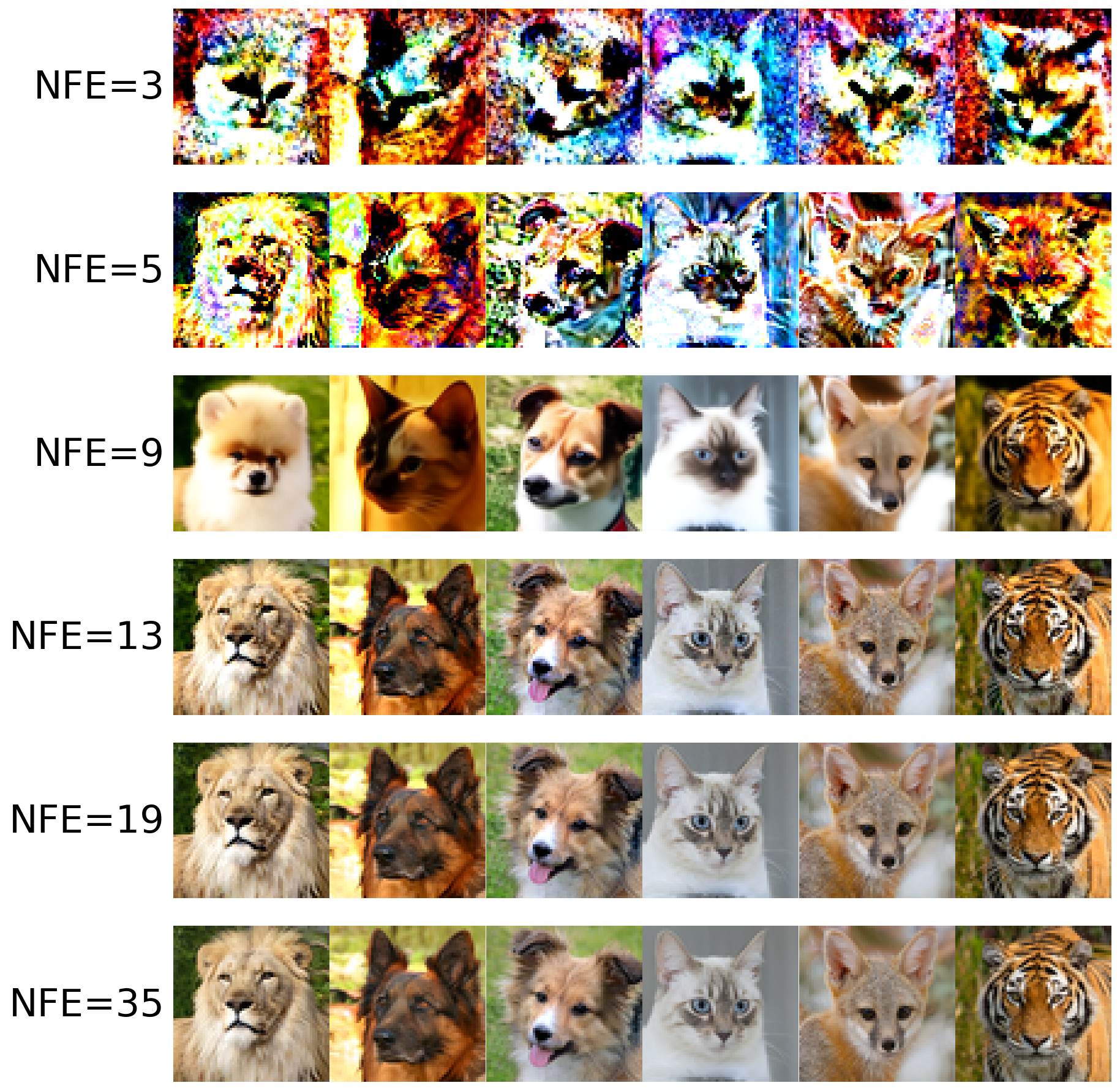}
    \caption{ART-RL}
  \end{subfigure}
  \caption{AFHQv2 samples across timesteps for the three schedules (DPM, EDM, ART-RL). Each panel shows a \(6\times 6\) grid where rows correspond to increasing NFE.}
  \label{fig:afhqv2-three-in-a-row}
\end{figure}

\begin{figure}[H]
  \centering
  \begin{subfigure}[t]{0.32\textwidth}
    \centering
    \includegraphics[width=\linewidth]{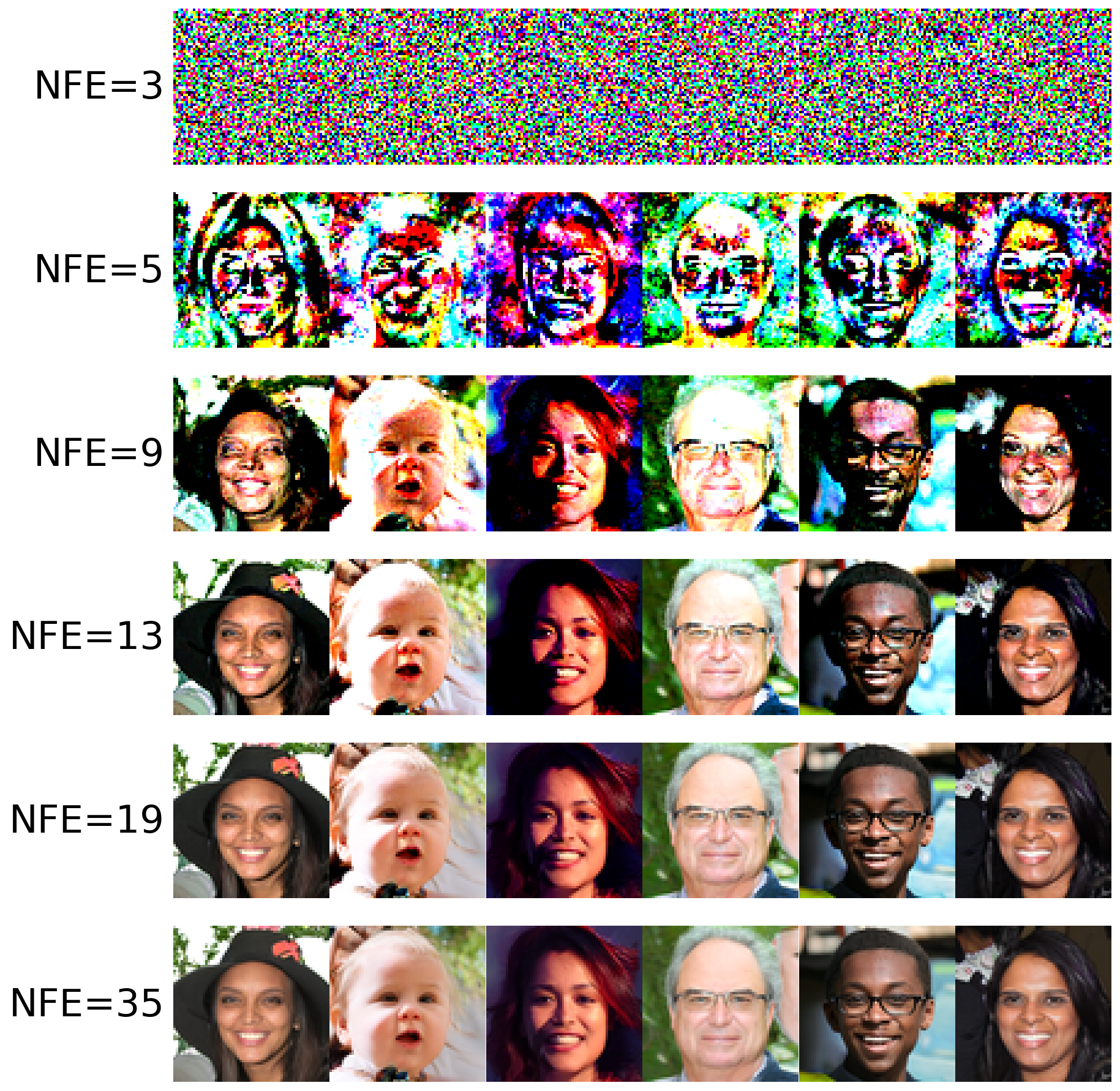}
    \caption{DPM}
  \end{subfigure}\hfill
  \begin{subfigure}[t]{0.32\textwidth}
    \centering
    \includegraphics[width=\linewidth]{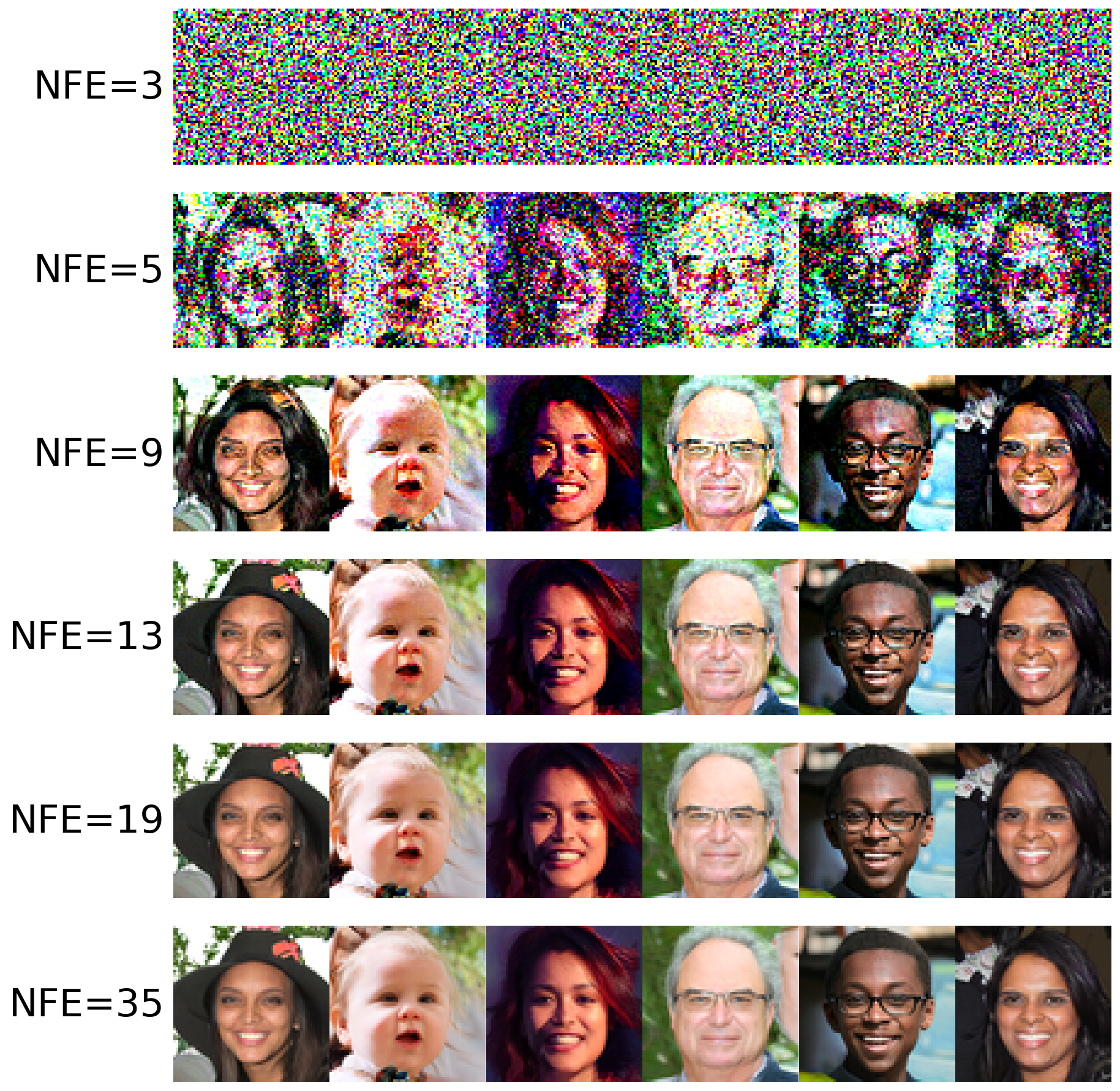}
    \caption{EDM}
  \end{subfigure}\hfill
  \begin{subfigure}[t]{0.32\textwidth}
    \centering
    \includegraphics[width=\linewidth]{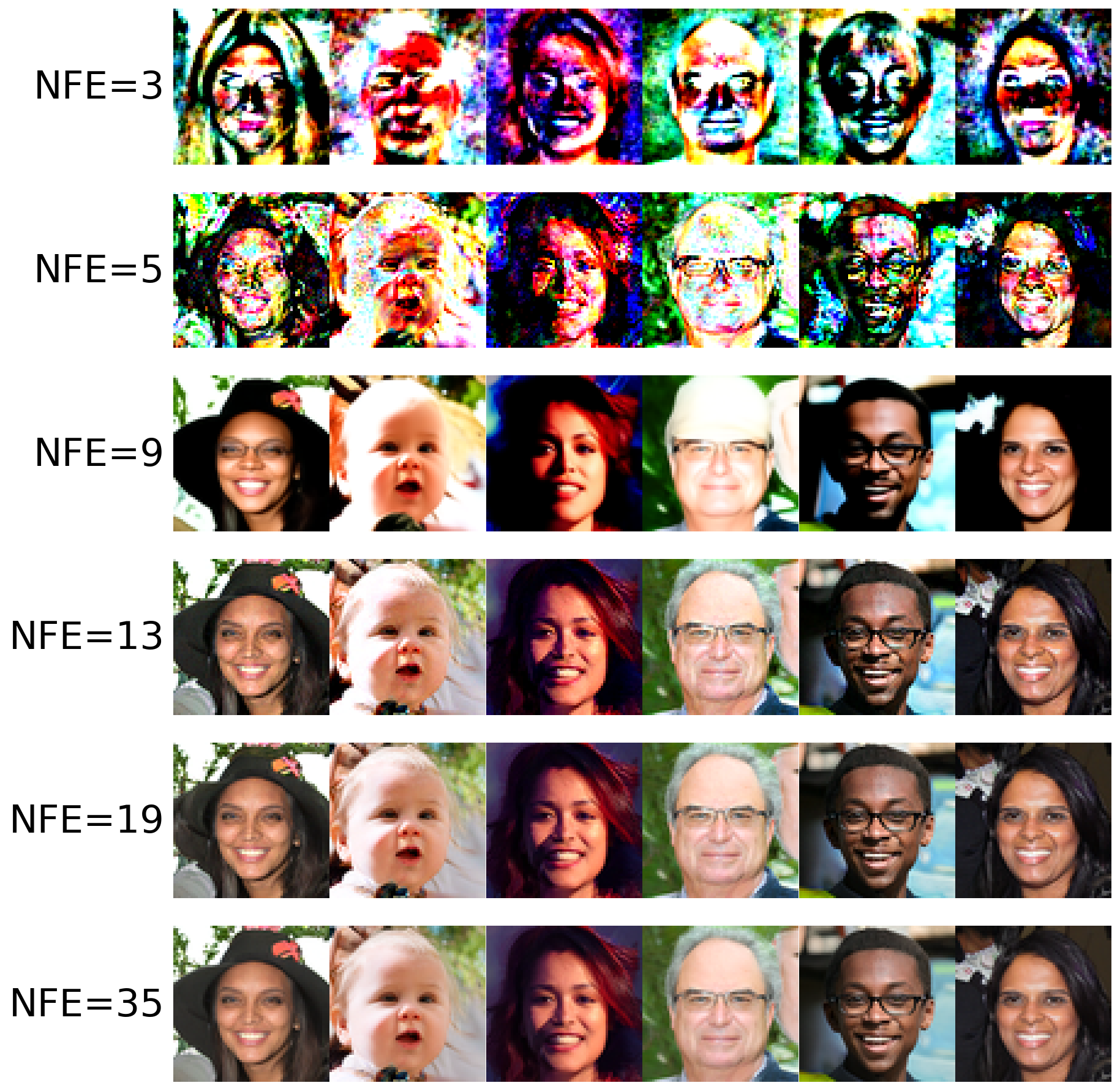}
    \caption{ART-RL}
  \end{subfigure}
  \caption{FFHQ samples across timesteps for the three schedules (DPM, EDM, ART-RL). Each panel shows a \(6\times 6\) grid where rows correspond to increasing NFE.}
  \label{fig:ffhq-three-in-a-row}
\end{figure}

\begin{figure}[H]
  \centering
  \begin{subfigure}[t]{0.32\textwidth}
    \centering
    \includegraphics[width=\linewidth]{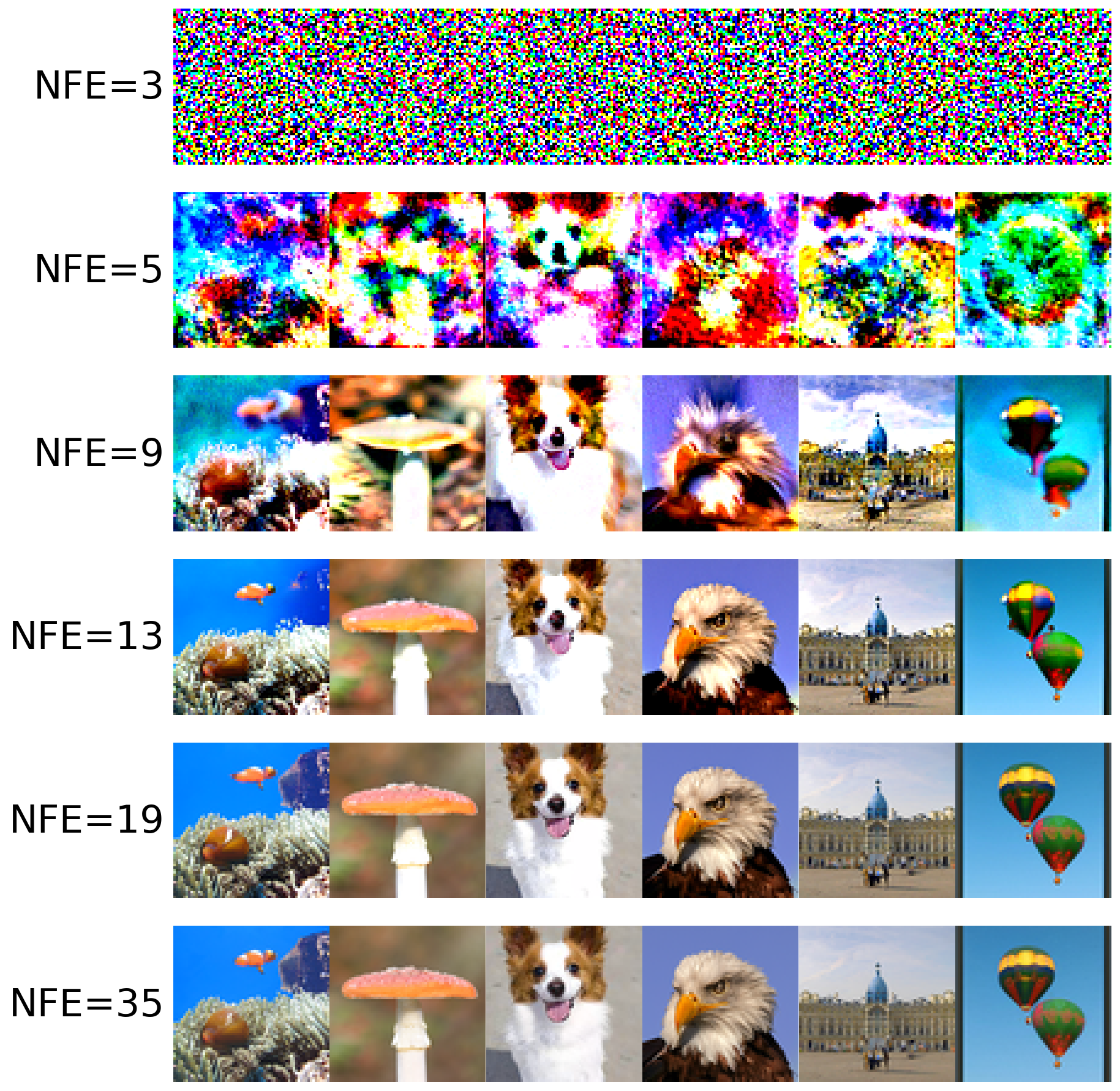}
    \caption{DPM}
  \end{subfigure}\hfill
  \begin{subfigure}[t]{0.32\textwidth}
    \centering
    \includegraphics[width=\linewidth]{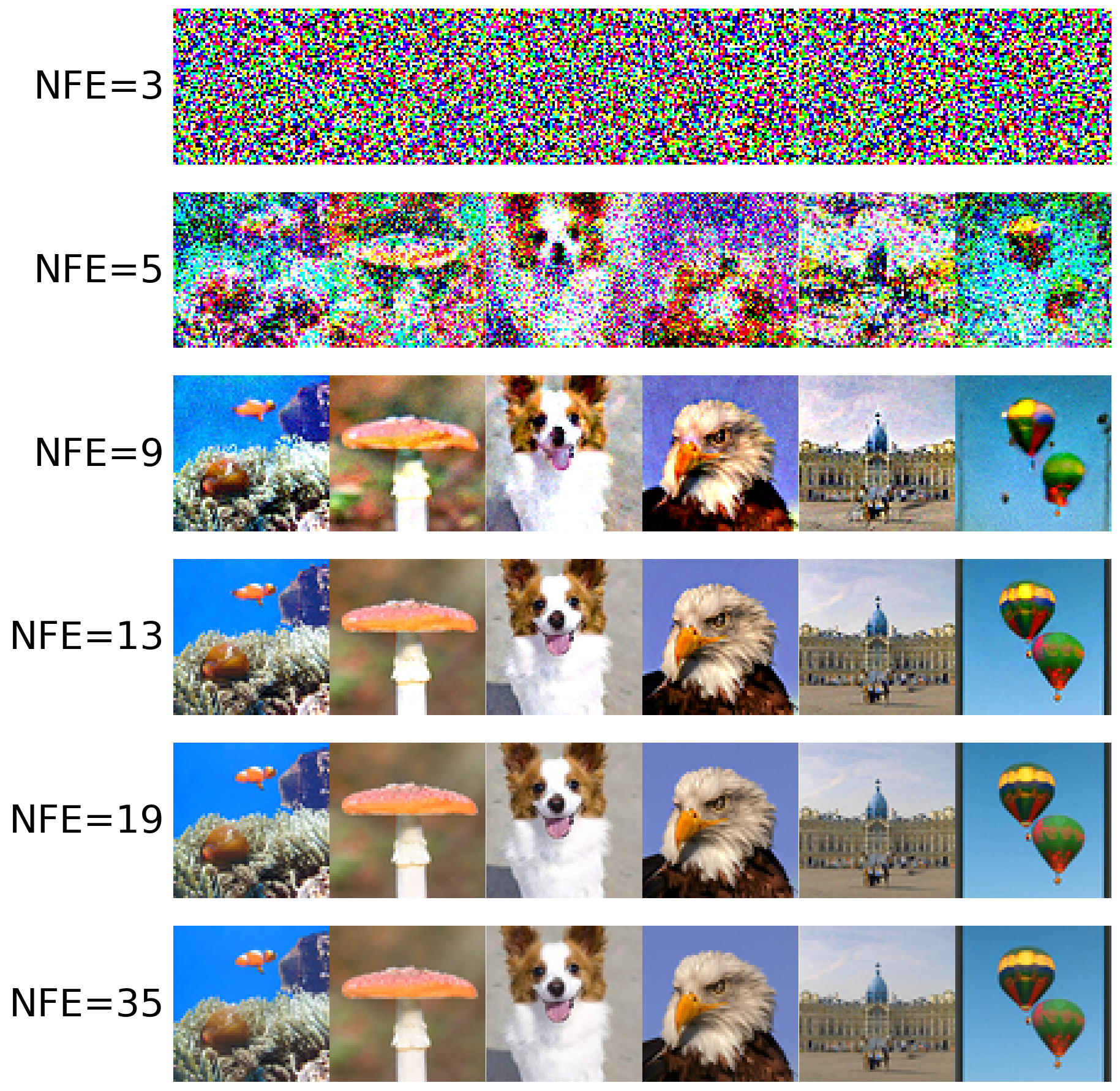}
    \caption{EDM}
  \end{subfigure}\hfill
  \begin{subfigure}[t]{0.32\textwidth}
    \centering
    \includegraphics[width=\linewidth]{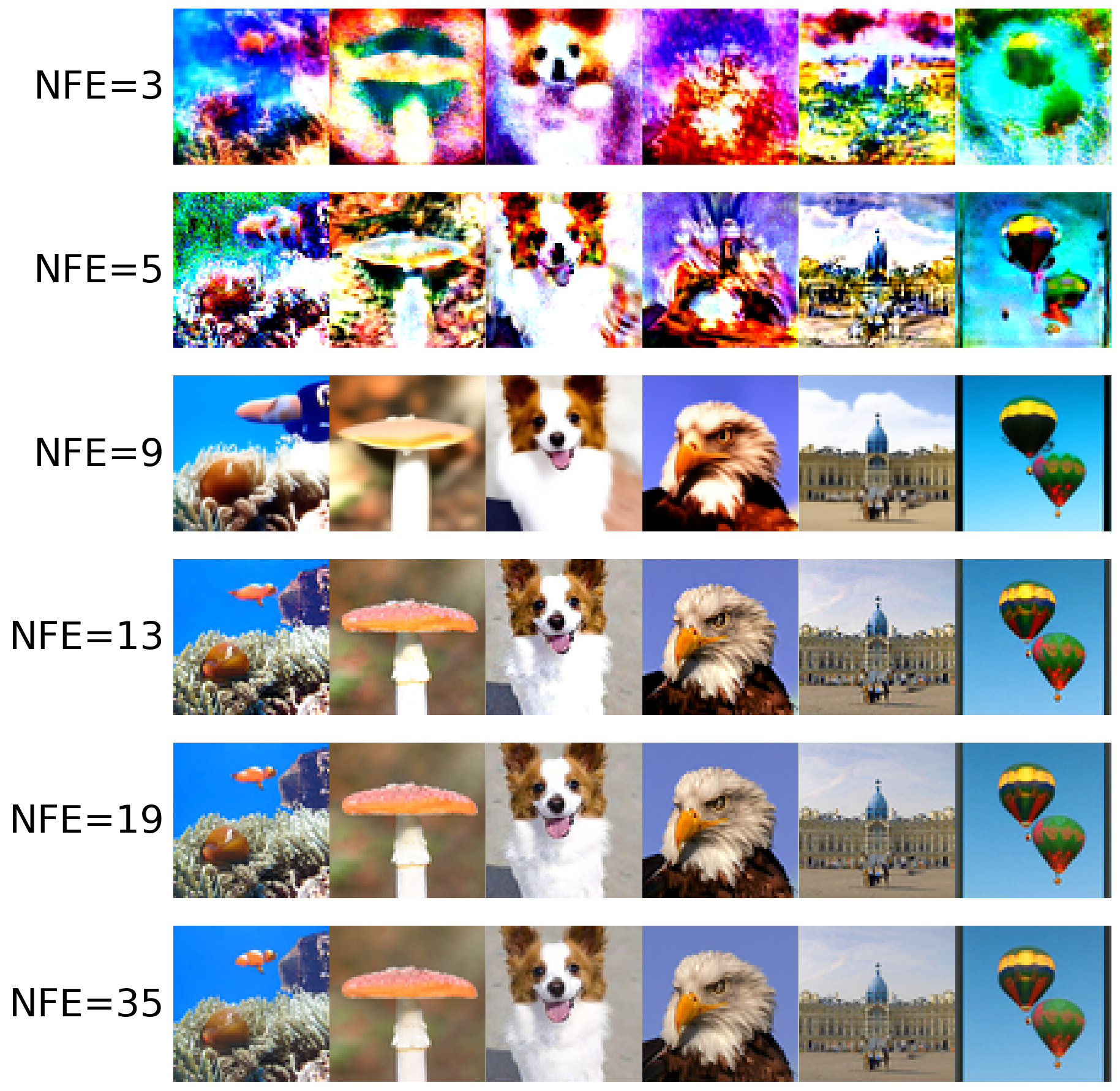}
    \caption{ART-RL}
  \end{subfigure}
  \caption{ImageNet--64 samples across timesteps for the three schedules (DPM, EDM, ART-RL). Each panel shows a \(6\times 6\) grid where rows correspond to increasing NFE.}
  \label{fig:imagenet-three-in-a-row}
\end{figure}

\end{appendix}

\end{document}